%% file: main.tex
\documentclass[11pt]{article}
\usepackage[letterpaper,margin=1in]{geometry}
\usepackage{amsmath}
\usepackage{amsthm}
\usepackage{ifpdf}
\usepackage[hidelinks]{hyperref}
\usepackage{algorithm}
\usepackage[capitalize,nameinlink]{cleveref}

\numberwithin{equation}{section}
\numberwithin{algorithm}{section}

\theoremstyle{plain}
\newtheorem{theorem}{Theorem}[section]
\newtheorem{lemma}[theorem]{Lemma}
\newtheorem{corollary}[theorem]{Corollary}
\newtheorem{proposition}[theorem]{Proposition}
\newtheorem{definition}[theorem]{Definition}
\newcommand{\newsiamthm}[2]{\theoremstyle{plain}\newtheorem{#1}[theorem]{#2}}
\newcommand{\newsiamremark}[2]{\theoremstyle{remark}\newtheorem{#1}[theorem]{#2}}

\DeclareUrlCommand\emailaddress{\urlstyle{rm}}
\newcommand{\email}[1]{\protect\href{mailto:#1}{\protect\emailaddress{#1}}}
\newcommand{\headers}[2]{}
\newenvironment{keywords}{\par\small\noindent\textbf{Key words.}\ }{\par}
\newenvironment{MSCcodes}{\par\small\noindent\textbf{MSC codes.}\ }{\par}
\date{}

\usepackage{amsfonts}
\usepackage{amssymb}
\usepackage{amsbsy}
\usepackage{graphicx}
\usepackage{epstopdf}
\usepackage{booktabs}
\usepackage{algorithmic}

\ifpdf
  \DeclareGraphicsExtensions{.eps,.pdf,.png,.jpg}
\else
  \DeclareGraphicsExtensions{.eps}
\fi

\DeclareMathOperator{\tr}{tr}
\DeclareMathOperator{\diag}{diag}
\DeclareMathOperator{\blkdiag}{blkdiag}
\DeclareMathOperator{\Cov}{Cov}
\DeclareMathOperator{\Var}{Var}
\newcommand{\E}{\mathbb{E}}

\newcommand{\muMF}{\hat{\mu}_{\mathrm{MF}}}
\newcommand{\VM}{V_M}
\newcommand{\gamS}[1]{\gamma^{\mathcal{S}}_{1,#1}}
\newcommand{\gfull}[1]{\gamma^{\mathrm{full}}_{1,#1}}
\newcommand{\gblock}[1]{\gamma^{\mathrm{block}}_{1,#1}}
\newcommand{\gdiag}[1]{\gamma^{\mathrm{diag}}_{1,#1}}
\newcommand{\gscalar}[1]{\gamma^{\mathrm{scalar}}_{1,#1}}

\newsiamremark{remark}{Remark}
\newsiamthm{assumption}{Assumption}
\crefname{assumption}{Assumption}{Assumptions}

\headers{MFMC for High Dimensional QoI}{Z. Dai, H. A. Tchelepi, and D. M. Tartakovsky}

\title{Multifidelity Monte Carlo Estimator for High-Dimensional Quantities of Interest}

\author{Zhibo Dai\thanks{Department of Energy Science and Engineering, Stanford
  University, Stanford, CA 94305 USA (\email{zhibodai@stanford.edu},
  \email{tchelepi@stanford.edu}),
  \email{tartakovsky@stanford.edu}.}
    \and Hamdi A. Tchelepi\footnotemark[1]
  \and Daniel M. Tartakovsky\footnotemark[1]
}

\ifpdf
\hypersetup{
  pdftitle={Multifidelity Monte Carlo Estimator for High-Dimensional Quantities of Interest},
  pdfauthor={Z. Dai, H. A. Tchelepi, and D. M. Tartakovsky}
}
\fi

\begin{document}

\maketitle

\begin{abstract}
Multifidelity Monte Carlo (MFMC) reduces the cost of estimating the statistics of quantities of interest (QoIs) computed via expensive high-fidelity model by using cheaper low-fidelity models as control variates. We extend MFMC to vector-valued QoIs by replacing the scalar control-variate coefficients with matrices and minimizing a weighted mean-squared error. The optimal full-matrix coefficients are available in closed form, and their variance reduction is summarized by an effective correlation that takes the place of the squared Pearson correlation in the closed-form MFMC design. Restricting the coefficients to block-diagonal, diagonal, and scalar classes gives a nested hierarchy of estimators with increasing variance reduction and increasing pilot-data demand, with closed-form coefficients for compatible weightings. For a scalar QoI obtained by linearly projecting the state, correcting the full state before projecting is never worse at the population level. Experiments on a Gaussian problem, a reduced-order model of an advection--diffusion problem, and a coarse-grid two-phase flow problem show that the best class depends on the pilot size relative to the number of covariance quantities it must estimate, and that reusing the pilot samples lowers the mean-squared error.
\end{abstract}

\begin{keywords}
multifidelity Monte Carlo, control variates, high dimensional quantities of interest, uncertainty quantification, sample allocation
\end{keywords}

\begin{MSCcodes}
65C05, 65C20, 35R60
\end{MSCcodes}

\section{Introduction}
\label{sec:introduction}

Monte Carlo estimation remains a central tool for forward uncertainty
quantification because it is non-intrusive and its convergence rate is
insensitive to the dimension of the random input. Its usefulness is limited when each sample requires an expensive high-fidelity
simulation. Such difficulty arises in computational fluid
dynamics, structural mechanics, subsurface flow, plasma physics, and
cardiovascular modeling, where a single run usually requires solving a PDE and returns a solution field or a collection of coupled quantities of interest
\cite{konrad2022data,fleeter2020multilevel,gruber2026advancements}. In such
problems the quantity of interest is naturally vector-valued. Let
$f^{(1)}(Z)\in\mathbb{R}^{d}$ denote the high-fidelity model output for a
random input $Z$; the goal of this work is to estimate
\begin{equation*}
  \mu \;=\; \mathbb{E}\!\left[f^{(1)}(Z)\right]
\end{equation*}
under a prescribed computational budget.

Multilevel and multifidelity methods reduce the cost of this estimation
problem by replacing many high-fidelity evaluations with evaluations of
cheaper, correlated models. Multilevel Monte Carlo exploits a hierarchy of
discretization levels through a telescoping decomposition
\cite{giles2008multilevel,giles2015multilevel}. Multifidelity methods are
broader: the auxiliary models may be reduced-order models,
simplified-physics models, data-fit surrogates, or machine-learning models
\cite{peherstorfer2018survey,jabarullah2019machine,zanoni2024improved}.
Within this family, multifidelity Monte Carlo (MFMC) uses low-fidelity
outputs as control variates while retaining unbiasedness with respect to
the high-fidelity mean. The two-model construction 
\cite{ng2014multifidelity} and the model-management framework \cite{peherstorfer2016optimal} show
that, for scalar quantities of interest, one obtains closed-form
control-variate weights and closed-form sample-allocation rules
determined entirely by model costs and correlation coefficients, together
with an explicit model-selection criterion. Extensions include convergence
analysis \cite{peherstorfer2018convergence}, adaptive low-fidelity model
construction \cite{peherstorfer2019multifidelity,farcas2023context},
variance and sensitivity-index estimation \cite{qian2018multifidelity}, and
combined multilevel--multifidelity estimators
\cite{geraci2015multifidelity,geraci2017multifidelity}. This analytic
structure, with optimal weights, allocation, and model selection available in
closed form, has made scalar MFMC widely used in large-scale
applications.

The scalar formulation, however, does not directly resolve the multi-output
case, a limitation identified explicitly in \cite{peherstorfer2016optimal}.
Running scalar MFMC independently for each output component may select
different model orderings and different sample allocations for different
components, even though a single vector-valued model evaluation produces
all components simultaneously. Moreover, correlations among output
components carry variance-reduction information that componentwise
treatments cannot exploit: a low-fidelity component that is only weakly
correlated with its own high-fidelity counterpart may still be strongly
predictive of other high-fidelity components.

Existing approaches to vector-valued multifidelity estimation can be
organized by two questions: whether the estimator exploits correlations
\emph{across} output components, and whether its design (control-variate
coefficients, sample allocation, and model selection) is available in
closed form. A first group of methods retains a tractable design but treats
the output components separately. It includes an 
extension of the nested MFMC estimator to
spatially dependent quantities of interest by attaching an independent
scalar coefficient to each output component \cite{quaglino2019high}. Aggregating the component-wise
errors into a spatially integrated mean-squared error yields a common sample
allocation in closed form, driven by a variance-weighted average of the
component-wise squared correlations. In the terminology of \cite{quaglino2019high}, the
estimator is the \emph{diagonal} control variate under a diagonal weighting
matrix: cross-component covariance never enters the design. Multilevel best
linear unbiased estimators (MLBLUE) \cite{schaden2020multilevel} treat
multifidelity estimation as generalized least-squares estimation over groups
of jointly sampled models. Multi-output MLBLUE \cite{croci2023multi}, i.e., MLBLUE  for multiple quantities of interest, construct one estimator per output and couple the estimators only through a shared sample allocation, which is computed by
semidefinite programming. The resulting
estimators are best linear unbiased individually but not jointly, because
correlations between different outputs are not exploited in order to limit
the computational cost.

A second group exploits cross-output correlations through control-variate
matrix coefficients, in the spirit of classical multivariate control variates
\cite{rubinstein1985efficiency}, but gives up the closed-form design. The
approximate control variate (ACV) framework \cite{gorodetsky2020generalized}
contains MFMC as a special case and relaxes its nested sampling structure,
which otherwise limits the achievable variance reduction to that of a single
low-fidelity model with known mean. Extension of ACV to multi-output, termed multi-statistic estimation or MOACV in \cite{dixon2024covariance}, result in
control-variate coefficients that are full matrices. These are given by the multivariate
control-variate formula, which minimizes both the determinant and the trace of
the estimator covariance; restricted to the MFMC sampling scheme at a fixed allocation,
MOACV therefore reduces to the full-matrix estimator studied here. The
sample allocation, however, is obtained by numerically minimizing the
determinant of the estimator covariance, and allocation under a trace
criterion is left open.  Multivariate MLBLUEs with a
hierarchy of control-variate coefficients of increasing complexity (scalar,
element-wise, spectral, and general matrix) under a trace criterion \cite{destouches2023multivariate} do not resolve the allocation problem for
general matrix control-variate coefficients, and these coefficients are intractable in large
dimension. Within nested MFMC itself, the cross-covariance vector of a
multifidelity linear regression can be estimated with control-variate matrix coefficients
$A_k=\Gamma_{1k}\Gamma_{kk}^{-1}$, which minimize the estimator covariance
in the Loewner order, and with a trace-optimal scalar control-variate
coefficient as a cheaper alternative \cite{qian2024multifidelity}. Their sample allocation, however, is taken from scalar
MFMC and computed from the correlations of the scalar model outputs rather
than of the vector being estimated.

Thus, the closed-form design of scalar MFMC has so far been extended to
vector-valued outputs only for componentwise coefficients
\cite{quaglino2019high}, while the methods that exploit
cross-component correlations rely on numerical or heuristic allocation. To
our knowledge, no existing method combines cross-component control variates
with the closed-form design of scalar MFMC. The ACV and MLBLUE frameworks contain the nested
estimator in their search space and are therefore never worse when the
exact model covariances are known, but their additional flexibility must be paid
for with covariance information that is itself estimated. 
For scalar quantities of interest, these more general estimators can have much larger variance than
MFMC \cite{stanley2026sample}, which the authors attribute to allocations that overfit spurious pilot
cross-correlations, whereas the sample-allocation constraints of MFMC act as a regularization. A closed-form
nested design is therefore well suited to the regime where pilot data
are scarce.

The same consideration governs how much cross-component information should
be used. A full matrix control variate reduces the population error at least
as much as any scalar or componentwise correction, but its covariance
matrices must be estimated from pilot data, and classical theory shows that
estimating the control matrix degrades the theoretical
gain when the number of parameters is large relative to the sample size
\cite{lavenberg1982statistical,rubinstein1985efficiency,venkatraman1986efficiency}.
The multifidelity literature reports the same phenomenon: the optimal matrix
coefficient of \cite{qian2024multifidelity} becomes unstable once its
covariances are estimated, already for $d=21$ outputs and $100$ pilot
samples; multi-output ACV estimators can perform worse than plain Monte
Carlo when pilot samples are few \cite{dixon2024covariance}; and estimated
ACV weights require a minimum sample size to guarantee variance reduction
\cite{pham2022ensemble}. These studies involve at most a few tens of
outputs. For field-valued outputs with really large $d$ and
$n_{\mathrm{pilot}}\ll d$, the estimated $d\times d$ covariance is
rank-deficient and the full correction and allocation can be
unstable, whereas scalar and componentwise estimators remain stable but
discard cross-component information. A practical vector-valued estimator
must therefore balance \emph{population optimality} against
\emph{pilot-estimation stability}, and should allow intermediate choices
between the two extremes.

In this paper we develop a vector-valued MFMC method that retains the
nested sampling structure and the closed-form design of scalar MFMC, while
making this tradeoff explicit and controllable through the structure of the
control-variate coefficients. We replace the scalar coefficients by
matrices $A_i\in\mathbb{R}^{d\times d}$ and minimize the weighted
mean-squared error $\mathbb{E}\|\hat{\mu}-\mu\|_M^2$, where $M\succeq 0$
encodes the relative importance of the output components. The variance
reduction achievable with model $i$ is summarized by an \emph{effective
correlation}, which for the full matrix class is given by
\begin{equation*}
  \gamma_{1,i} \;=\;
  \frac{\operatorname{tr}\!\bigl(M\,\Sigma_{1,i}\Sigma_i^{-1}\Sigma_{i,1}\bigr)}
       {\operatorname{tr}\!\bigl(M\,\Sigma_1\bigr)}\;\in[0,1],
\end{equation*}
where $\Sigma_i$ is the covariance of the $i$th model output and
$\Sigma_{1,i}$ its cross-covariance with the high-fidelity output. The
effective correlation replaces the squared Pearson correlation
$\rho_{1,i}^2$ in the closed-form allocation rule, the cost--correlation
ordering condition, and the model-selection criterion of
\cite{peherstorfer2016optimal}. Restricting the coefficients to scalar,
diagonal, or block-diagonal matrices yields a hierarchy of estimators that
trades cross-component information for pilot-data demand, with closed-form
coefficients and effective correlations whenever $M$ is compatible with the
imposed structure. The
diagonal member recovers the estimator of \cite{quaglino2019high},
and the scalar theory is recovered for $d=1$.

The contributions of this paper are as follows.
\begin{enumerate}
  \item A nested vector-valued MFMC estimator with matrix control-variate
        coefficients is formulated under a weighted mean-squared-error
        criterion. The optimal matrix control variate coefficients are obtained in
        closed form, and the resulting effective correlation takes the place
        of the squared Pearson correlation, so that the closed-form sample
        allocation, the cost--correlation ordering condition, and the model
        selection of \cite{peherstorfer2016optimal} carry over unchanged.
  \item Restricting the coefficients to block-diagonal, diagonal, and scalar
        matrices gives a nested hierarchy of estimators that trades variance
        reduction for a smaller pilot-data demand. For a fixed sample
        allocation, the MSE increase of each reduced class over the full class
        is given exactly in terms of the gaps between their effective
        correlations.
  \item For a scalar quantity of interest obtained by linearly projecting
        the full high-dimensional state, correcting the full state and then
        projecting is shown to be, with population-optimal coefficients, at
        least as accurate as scalar MFMC on the projected outputs.
  \item A pilot-based algorithm is developed, with optional reuse of the
        pilot samples, and with an exact reduction of the full class to a
        regression on the reduced coordinates when the low-fidelity output
        lies in a known low-dimensional subspace. Experiments on a Gaussian
        problem, a POD--Galerkin ROM of an advection--diffusion problem,
        and a coarse-grid model of two-phase gas injection show that the
        coefficient class should be matched to the pilot size relative to
        the number of covariance quantities it must estimate.
\end{enumerate}

The remainder of the paper is organized as follows.
Section~\ref{sec:scalar-mfmc} reviews the scalar MFMC estimator and its
optimal allocation rule. Section~\ref{sec:vector-mfmc} develops the
vector-valued estimator, derives the optimal matrix coefficients and
effective correlations, analyzes the structured coefficient classes and
their hierarchy, treats linearly mapped quantities of interest, and describes
a pilot-based implementation. Section~\ref{sec:numerics} presents the
numerical experiments, and Section~\ref{sec:conclusion} is the conclusion.

\section{Scalar-valued MFMC}
\label{sec:scalar-mfmc}
Before developing the vector-valued generalization, we review the scalar-valued
MFMC method 
\cite{peherstorfer2016optimal}. The purpose of this section is to fix
the sampling structure and notation that the vector-valued estimator inherits,
and to identify the two quantities of the scalar theory, the optimal
control-variate coefficient and the squared correlation coefficient, for which
the vector-valued theory must develop counterparts.

\subsection{Scalar-valued MFMC estimator}
We consider a collection of $k$ scalar-valued models
\[
f^{(i)} : \mathcal{D} \to \mathbb{R}, \quad i = 1, \dots, k,
\]
where $f^{(1)}$ is a high-fidelity (HF) model and $f^{(2)}, \dots, f^{(k)}$ are low-fidelity (LF) surrogate models with reduced computational cost. Each evaluation of model $f^{(i)}$ incurs a cost $w_i > 0$, and the models are ordered so that
\[
w_1 \gg w_2 \geq \cdots \geq w_k.
\]

Given a random input $Z \in \mathcal{D}$, the goal is to estimate the expectation of the HF model output
\[
s = \E[f^{(1)}(Z)]
\]
under a fixed computational budget $p$. We assume that each LF model is correlated with the HF model,
\[
\rho_{1,i}^2 > 0, \quad i = 2, \dots, k,
\]
where $\rho_{1,i}$ denotes the Pearson correlation coefficient between $f^{(1)}(Z)$ and $f^{(i)}(Z)$, so that the LF outputs carry information about the HF output and can serve as control variates.

The MFMC estimator uses a \emph{nested} set of samples: model $f^{(i)}$ is evaluated at the first $m_i$ of a common pool of i.i.d.\ samples $z_1, \dots, z_{m_k}$, where
\[
0 < m_1 \leq m_2 \leq \cdots \leq m_k,
\]
so that the cheap models are evaluated more often than the expensive ones. The corresponding Monte Carlo sample means are
\[
\bar y^{(i)}_{m_i} = \frac{1}{m_i} \sum_{j=1}^{m_i} f^{(i)}(z_j),
\qquad i = 1,\dots,k.
\]

The MFMC estimator is
\begin{equation}
\label{eq:scalar-mfmc}
\hat{s} = \bar y^{(1)}_{m_1} + \sum_{i=2}^k \alpha_i \left( \bar y^{(i)}_{m_i} - \bar y^{(i)}_{m_{i-1}} \right),
\end{equation}
where $\alpha_i \in \mathbb{R}$ are control-variate coefficients. The leading term is the plain Monte Carlo estimator built from the $m_1$ HF samples; each correction term uses the difference between two sample means of the \emph{same} LF model, so it has mean zero but is correlated with the leading term and can cancel part of its statistical error. The design questions are how to choose the coefficients $\alpha_i$ and the sample sizes $m_i$. We summarize the answers from \cite{peherstorfer2016optimal}; proofs can be found there.

\begin{lemma}[Unbiasedness]
For any coefficients $\alpha_i$ and any nested sample sizes, the MFMC estimator is unbiased:
\[
\E[\hat{s}] = \E[f^{(1)}(Z)] = s.
\]
\end{lemma}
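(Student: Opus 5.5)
The plan is to use linearity of expectation together with the fact that every sample mean in \eqref{eq:scalar-mfmc} is an average of identically distributed evaluations. I will assume that $f^{(i)}(Z)$ has finite mean for each $i$, which is implicit in the setting, since otherwise $s$ would not be defined. The coefficients $\alpha_i$ and the sample sizes $m_i$ are deterministic, i.e., fixed before sampling, so they can be taken outside the expectation.

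\textbf{Step 1: each sample mean is unbiased.} Fix a model $i$ and any $m \in \{m_{i-1}, m_i\}$. Each $z_j$ is distributed as $Z$, so $\E[f^{(i)}(z_j)] = \E[f^{(i)}(Z)]$. Hence
\[
\E\bigl[\bar y^{(i)}_{m}\bigr] = \frac{1}{m}\sum_{j=1}^{m}\E\bigl[f^{(i)}(z_j)\bigr] = \E\bigl[f^{(i)}(Z)\bigr].
\]
This uses only that the samples are identically distributed. Independence, and the fact that the nested samples are shared across models, matter for the variance but play no role here.

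\textbf{Step 2: each correction has zero mean.} For $i = 2,\dots,k$, Step~1 applied with $m = m_i$ and with $m = m_{i-1}$ gives two sample means with the same expectation. By linearity, $\E\bigl[\bar y^{(i)}_{m_i} - \bar y^{(i)}_{m_{i-1}}\bigr] = 0$. Because $\alpha_i$ is a constant, $\E\bigl[\alpha_i(\bar y^{(i)}_{m_i} - \bar y^{(i)}_{m_{i-1}})\bigr] = 0$ as well.

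\textbf{Step 3: combine.} Taking the expectation of \eqref{eq:scalar-mfmc} and applying linearity gives $\E[\hat s] = \E[\bar y^{(1)}_{m_1}] + 0 = s$.

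There is no real obstacle in this argument. The only point that needs care is to state explicitly that $\alpha_i$ and $m_i$ are nonrandom, or at least independent of the samples used in the estimator. If they were computed from those same samples, for example by pilot reuse, then $\E[\alpha_i \cdot (\cdot)]$ would not factor and unbiasedness could fail. The lemma as stated assumes fixed coefficients, so this case does not arise here.
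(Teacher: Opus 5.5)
Your proof is correct and matches the paper's argument: by linearity of expectation each correction term $\alpha_i(\bar y^{(i)}_{m_i}-\bar y^{(i)}_{m_{i-1}})$ has mean zero, which leaves $\E[\bar y^{(1)}_{m_1}]=s$. Your caveat that $\alpha_i$ and $m_i$ must be fixed independently of the samples agrees with the paper's later observation that reusing the pilot samples introduces bias.
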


This follows from linearity of expectation: the mean of each correction term is $\alpha_i(\E[f^{(i)}]-\E[f^{(i)}])=0$. Consequently, the mean-squared error of $\hat{s}$ equals its variance.

\begin{lemma}[Variance expression]
The variance of the MFMC estimator is
\[
\Var[\hat{s}] = \frac{\sigma_1^2}{m_1}
+ \sum_{i=2}^k \left( \frac{1}{m_{i-1}} - \frac{1}{m_i} \right)
\left( \alpha_i^2 \sigma_i^2 - 2 \alpha_i \rho_{1,i} \sigma_1 \sigma_i \right),
\]
where $\sigma_i^2 = \Var[f^{(i)}(Z)]$.
\end{lemma}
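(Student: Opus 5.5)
Since the correction terms have mean zero, we may work with centered variables and simply expand the variance of the sum. Write $\Delta_i = \bar y^{(i)}_{m_i} - \bar y^{(i)}_{m_{i-1}}$. Then
\[
\Var[\hat s] = \Var[\bar y^{(1)}_{m_1}] + \sum_{i=2}^k \alpha_i^2 \Var[\Delta_i] + 2\sum_{i=2}^k \alpha_i \Cov[\bar y^{(1)}_{m_1},\Delta_i] + 2\sum_{2\le i<j\le k}\alpha_i\alpha_j \Cov[\Delta_i,\Delta_j].
\]
The plan is to compute every covariance between sample means over nested index sets from one elementary identity. For i.i.d.\ samples and $m\le n$,
\[
\Cov\bigl[\bar y^{(i)}_{m},\bar y^{(j)}_{n}\bigr]=\frac{1}{mn}\sum_{a\le m}\sum_{b\le n}\Cov\bigl[f^{(i)}(z_a),f^{(j)}(z_b)\bigr]=\frac{m}{mn}\Cov\bigl[f^{(i)},f^{(j)}\bigr]=\frac{1}{n}\rho_{i,j}\sigma_i\sigma_j .
\]
This holds because only the $m$ diagonal pairs $a=b$ contribute. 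That is, the covariance equals the model covariance divided by the \emph{larger} of the two sample sizes. The nestedness $m_1\le\cdots\le m_k$ is used exactly here.

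With this identity the individual terms follow directly.
\begin{itemize}
\item $\Var[\bar y^{(1)}_{m_1}]=\sigma_1^2/m_1$.
\item Since $m_{i-1}\le m_i$, we get $\Var[\Delta_i]=\sigma_i^2\bigl(\tfrac1{m_{i-1}}+\tfrac1{m_i}-\tfrac2{m_i}\bigr)=\bigl(\tfrac1{m_{i-1}}-\tfrac1{m_i}\bigr)\sigma_i^2$.
\item Since $m_1\le m_{i-1}\le m_i$, we get $\Cov[\bar y^{(1)}_{m_1},\Delta_i]=\rho_{1,i}\sigma_1\sigma_i\bigl(\tfrac1{m_i}-\tfrac1{m_{i-1}}\bigr)$.
\end{itemize}
The third item supplies the term $-2\alpha_i\rho_{1,i}\sigma_1\sigma_i\bigl(\tfrac1{m_{i-1}}-\tfrac1{m_i}\bigr)$.

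The step needing care is showing the cross terms between different corrections vanish. Take $i<j$, so $m_{i-1}\le m_i\le m_{j-1}\le m_j$. Every sample mean appearing in $\Delta_i$ uses at most $m_i$ points, so each is nested inside both index sets of $\Delta_j$. By the identity, covariance with $\bar y^{(j)}_{m_j}$ is $c/m_j$ and covariance with $\bar y^{(j)}_{m_{j-1}}$ is $c/m_{j-1}$, where $c=\rho_{i,j}\sigma_i\sigma_j$. These values do not depend on which mean of model $i$ is taken, so
\[
\Cov\bigl[\bar y^{(i)}_{m},\Delta_j\bigr]=c\Bigl(\frac1{m_j}-\frac1{m_{j-1}}\Bigr)
\]
for both $m=m_i$ and $m=m_{i-1}$. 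Taking the difference over the two means of model $i$ gives $\Cov[\Delta_i,\Delta_j]=0$.

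Collecting the surviving terms yields the stated formula. Edge cases with equal sample sizes, such as $m_{i-1}=m_i$, are harmless because the identity already covers $m=n$.
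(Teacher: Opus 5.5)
Your proof is correct and is essentially the paper's argument. The paper defers the scalar proof to Peherstorfer et al., but its proof of the vector analogue (Lemma~\ref{lem:cov_vector_mfmc}, which reduces to this statement for $d=1$) uses exactly your ingredients: the ``divide by the larger sample size'' identity of Lemma~\ref{lem:sample-mean-cov}, the formulas for $\Var[\Delta_i]$ and $\Cov[\bar y^{(1)}_{m_1},\Delta_i]$, and the vanishing of $\Cov[\Delta_i,\Delta_j]$ for $i\neq j$ via $m_{i-1}\le m_i\le m_{j-1}\le m_j$ (you group the four cross-covariances by noting independence of $m\in\{m_{i-1},m_i\}$, whereas the paper writes them out explicitly).
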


The first term is the variance of the HF Monte Carlo estimator; each summand quantifies how much variance the $i$th correction term adds or removes, depending on the coefficient $\alpha_i$ and on the sample-size gap between levels $i-1$ and $i$. The optimal estimator minimizes this variance subject to the cost constraint:
\[
\min_{\{m_i\}, \{\alpha_i\}} \ \Var[\hat{s}]
\quad \text{s.t.} \quad \sum_{i=1}^k w_i m_i = p,
\qquad 0 < m_1 \le \cdots \le m_k.
\]

\begin{theorem}[Optimal scalar MFMC estimator \cite{peherstorfer2016optimal}]
\label{thm:scalar-mfmc}
Assume the models are ordered such that
\[
1 = \rho_{1,1}^2 > \rho_{1,2}^2 > \cdots > \rho_{1,k}^2 > \rho^2_{1,k+1} := 0,
\]
and that the costs satisfy the cost--correlation condition
\[
\frac{w_{i-1}}{w_i} >
\frac{\rho_{1,i-1}^2 - \rho_{1,i}^2}{\rho_{1,i}^2 - \rho_{1,i+1}^2},
\quad i = 2, \dots, k.
\]
Then the optimal control-variate coefficients are
\[
\alpha_i^\star = \frac{\rho_{1,i} \sigma_1}{\sigma_i}, \quad i = 2, \dots, k,
\]
and the optimal sample sizes are
\[
m_i^\star = m_1^\star \, r_i, \quad i = 1, \dots, k,
\qquad
r_i = \sqrt{
\frac{w_1 (\rho_{1,i}^2 - \rho_{1,i+1}^2)}
{w_i (1 - \rho_{1,2}^2)}
},
\]
where the number of HF samples is determined by the budget constraint:
\[
m_1^\star = \frac{p}{\sum_{i=1}^k w_i r_i}.
\]
\end{theorem}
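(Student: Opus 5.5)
The plan is to separate the two design variables: first minimize over the coefficients $\alpha_i$ at a fixed allocation, then minimize the resulting function over the sample sizes $m_i$. This follows the structure of the variance expression in the preceding lemma.

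\textbf{Step 1: optimal coefficients.} For fixed nested sizes with $m_{i-1}<m_i$, the variance separates in the $\alpha_i$. Each summand is the positive weight $(1/m_{i-1}-1/m_i)$ times the convex quadratic $\alpha_i^2\sigma_i^2-2\alpha_i\rho_{1,i}\sigma_1\sigma_i$. Its minimizer is $\alpha_i^\star=\rho_{1,i}\sigma_1/\sigma_i$, which does not depend on the $m_i$. If $m_{i-1}=m_i$, the term vanishes and any $\alpha_i$ is optimal, so we may still take $\alpha_i^\star$. Substituting $\alpha_i^\star$ and rearranging the telescoping weights gives
\[
\Var[\hat s]=\sigma_1^2\sum_{i=1}^k\frac{\rho_{1,i}^2-\rho_{1,i+1}^2}{m_i},
\]
where $\rho_{1,1}=1$ and $\rho_{1,k+1}=0$.

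\textbf{Step 2: allocation.} We then minimize $J(m)=\sum_i c_i/m_i$ with $c_i=\rho_{1,i}^2-\rho_{1,i+1}^2>0$ (positive by the strict ordering), subject to $\sum_i w_im_i=p$ and $m_1\le\cdots\le m_k$. I would first drop the monotonicity constraints and solve the relaxed problem. It is convex on $m_i>0$, with an objective that blows up at the boundary, so its minimum is given by the Lagrange conditions $c_i/m_i^2=\lambda w_i$. These yield $m_i\propto\sqrt{c_i/w_i}$, that is, $m_i=m_1r_i$ with $r_i=\sqrt{w_1c_i/(w_ic_1)}$. Here $c_1=1-\rho_{1,2}^2$, which matches the stated $r_i$, and the budget constraint then fixes $m_1^\star=p/\sum_i w_ir_i$.

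\textbf{Step 3: feasibility of the relaxed optimum (main obstacle).} We must check that the unconstrained minimizer satisfies $m_{i-1}<m_i$. Only then is it also the minimizer of the constrained problem, since the feasible set is a subset of the relaxed one. The requirement $r_{i-1}<r_i$ is equivalent to $c_{i-1}/w_{i-1}<c_i/w_i$, i.e.
\[
\frac{w_{i-1}}{w_i}>\frac{\rho_{1,i-1}^2-\rho_{1,i}^2}{\rho_{1,i}^2-\rho_{1,i+1}^2},
\]
which is exactly the cost--correlation condition. Two points need care here. Strict convexity of $J$ gives uniqueness of the relaxed minimizer. We should also note that the $\alpha$-minimization in Step 1 was valid for every feasible $m$, including allocations with ties, so the joint minimum is attained at $(\alpha^\star,m^\star)$. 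I would also remark that treating $m_i$ as real numbers (ignoring integrality) is implicit in the statement.
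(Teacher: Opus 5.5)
Your proposal is correct and takes the same route as the paper. The paper itself defers the scalar proof to \cite{peherstorfer2016optimal}, but it proves the vector-valued generalization, which reduces to this statement for $d=1$, in Lemma~\ref{thm:unified_sample_allocation} and Theorem~\ref{thm:main-vvmfmc} by the same steps you use: minimize the coefficients termwise at a fixed allocation, telescope to $\sum_i \xi_i/m_i$, solve the relaxed problem without the ordering constraints by Lagrange multipliers, and use the cost--correlation condition to show the relaxed minimizer is nested.
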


The optimal coefficient $\alpha_i^\star$ is available in closed form, and the
optimal allocation depends on the models only through the costs $w_i$ and the
squared correlations $\rho_{1,i}^2\in[0,1]$. For vector-valued outputs,
$\alpha_i$ becomes a matrix, and the counterpart of $\rho_{1,i}^2$ is not
obvious. Section~\ref{sec:vector-mfmc} shows that the optimal matrix is a
multivariate regression coefficient and that the role of $\rho_{1,i}^2$ is
played by an effective correlation $\gamma_{1,i}\in[0,1]$.

\section{Vector-valued MFMC}
\label{sec:vector-mfmc}

\subsection{Problem setup and the estimator}
\label{sec:vv-setup}

We now consider a collection of $k$ models with vector-valued output,
\[
f^{(i)} : \mathcal{D} \to \mathbb{R}^d, \quad i = 1, \dots, k, \quad d>1,
\]
where, as before, $f^{(1)}$ is the high-fidelity (HF) model, $f^{(2)}, \dots, f^{(k)}$ are lower-fidelity (LF) surrogate models, and the evaluation costs $w_i > 0$ satisfy
\[
w_1 \gg w_2 \geq \cdots \geq w_k.
\]
Given a random input $Z \in \mathcal{D}$, the goal is to estimate the mean of the HF output,
\[
\mu = \E[f^{(1)}(Z)] \in \mathbb{R}^d,
\]
under a fixed computational budget $p$.

The sampling structure is identical to the scalar case: model $f^{(i)}$ is evaluated at the first $m_i$ of a common pool of i.i.d.\ samples $z_1, \dots, z_{m_k}$, where
\[
0 < m_1 \leq m_2 \leq \cdots \leq m_k,
\]
and the sample means are
\[
\bar y^{(i)}_{m_i} = \frac{1}{m_i} \sum_{j=1}^{m_i} f^{(i)}(z_j) \in \mathbb{R}^d.
\]
The only structural change relative to the scalar estimator \eqref{eq:scalar-mfmc} is that each scalar coefficient $\alpha_i$ is replaced by a matrix: the \emph{vector-valued MFMC estimator} is
\begin{equation}
\label{eq:vector-mfmc}
\muMF =\bar y^{(1)}_{m_1} + \sum_{i=2}^k A_i \left(\bar y^{(i)}_{m_i} -\bar y^{(i)}_{m_{i-1}} \right),
\end{equation}
where $A_i \in \mathbb{R}^{d \times d}$ are control-variate matrices. A matrix coefficient allows each component of the HF estimate to be corrected by \emph{any} linear combination of the LF components, so that correlations \emph{across} output components, and not only within each component, can be exploited for variance reduction.

\paragraph{Covariance notation}
For each model output, define
\[
\mu_i := \E[f^{(i)}(Z)], \qquad i=1,\dots,k,
\]
so that the target QoI is \(\mu=\mu_1\). For any two fidelity levels \(l,t\in\{1,\dots,k\}\), define the cross-covariance matrix
\[
\Sigma_{l,t}
:=
\Cov\!\left(f^{(l)}(Z),f^{(t)}(Z)\right)
=
\E\!\left[
\bigl(f^{(l)}(Z)-\mu_l\bigr)
\bigl(f^{(t)}(Z)-\mu_t\bigr)^\top
\right]
\in \mathbb{R}^{d\times d},
\]
and abbreviate the covariance matrix of the \(i\)th model output by
\[
\Sigma_i := \Sigma_{i,i}
=
\Cov\!\left(f^{(i)}(Z)\right).
\]
With this convention,
\(
\Sigma_{t,l} = \Sigma_{l,t}^{\top}.
\)

The relative importance of the output components is encoded by a user-chosen symmetric weighting matrix $M$, through which all errors will be measured (Definition~\ref{def:weighted-mse} below). Typical choices are $M=I$ (all components equally important), a diagonal $M$ (componentwise weights), or $M = B^\top B$ and $M = cc^\top$ when the actual QoI is a linear function of the model output (Section~\ref{sec:special-cases}). We make the following assumptions:

\begin{assumption}
\label{ass:standing}
Throughout Section~\ref{sec:vector-mfmc} we assume:
(i) all model outputs have finite second moments;
(ii) $\Sigma_i$ is symmetric positive definite ($\Sigma_i \succ 0$) for every $i$ whose inverse $\Sigma_i^{-1}$ appears;
(iii) the weighting matrix satisfies $M \succeq 0$ (symmetric positive semidefinite) and
\[
\VM := \tr(M\Sigma_1) > 0,
\]
i.e.\ the HF output has nonzero weighted variance, so that the estimation problem is nondegenerate.
\end{assumption}

The quantity $\VM$ is the natural vector-valued generalization of the HF variance $\sigma_1^2$: it aggregates the variances of all components of $f^{(1)}(Z)$, weighted by $M$. It will serve as the reference against which all variance reductions are measured.

\subsection{Weighted mean-squared error of the estimator}
\label{sec:vv-mse}

We first show that the matrix control variates do not affect unbiasedness, and then derive the exact covariance of the estimator. The derivation has the same structure as in the scalar case.

\begin{lemma}[Unbiasedness]
For any matrices $A_i$ and any nested sample sizes, the estimator \eqref{eq:vector-mfmc} is unbiased:
\[
\E[\muMF] = \mu.
\]
\end{lemma}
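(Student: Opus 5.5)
The plan is to argue exactly as in the scalar case, by linearity of expectation, with the matrices $A_i$ treated as deterministic. The key observation is that the $A_i$ are fixed, non-random matrices: they are chosen in advance and do not depend on the samples $z_1,\dots,z_{m_k}$. Therefore $\E[A_i X] = A_i\E[X]$ for any integrable random vector $X\in\mathbb{R}^d$.

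First, Assumption~\ref{ass:standing}(i) guarantees finite second moments, hence finite first moments, so each $\mu_i=\E[f^{(i)}(Z)]$ exists. Since the samples $z_j$ are i.i.d.\ copies of $Z$, each sample mean is unbiased for its own model mean:
\[
\E\bigl[\bar y^{(i)}_{m}\bigr]=\frac{1}{m}\sum_{j=1}^{m}\E[f^{(i)}(z_j)]=\mu_i
\]
for every $m\ge1$. In particular this holds for both $m=m_i$ and $m=m_{i-1}$. It does not matter that the two means share the first $m_{i-1}$ samples through the nested structure, because expectation is linear regardless of any dependence between the terms.

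Next, apply expectation to \eqref{eq:vector-mfmc} term by term and pull each $A_i$ outside:
\[
\E[\muMF]=\mu_1+\sum_{i=2}^k A_i\bigl(\mu_i-\mu_i\bigr)=\mu_1=\mu .
\]
The leading term $\bar y^{(1)}_{m_1}$ contributes $\mu_1=\mu$, and every correction term has mean zero.

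There is no real obstacle in this argument. The only point to state explicitly is that the $A_i$ are deterministic. If they were instead estimated from the same samples, as in the later pilot-reuse setting, the step $\E[A_i X]=A_i\E[X]$ would fail and unbiasedness would no longer hold in general. The lemma is therefore understood for fixed $A_i$. As in the scalar case, I would close by noting the consequence that the weighted mean-squared error of $\muMF$ reduces to a weighted trace of its covariance.
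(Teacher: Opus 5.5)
Your proposal is correct and matches the paper's proof. Linearity of expectation gives $\E[\bar y^{(i)}_m]=\mu_i$ for every $m$, so each correction term has mean $A_i(\mu_i-\mu_i)=0$ and $\E[\muMF]=\mu_1=\mu$; your explicit remarks that the $A_i$ must be deterministic and that the means exist are sensible additions that the paper leaves implicit, and they are consistent with its later discussion of bias under pilot reuse.
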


\begin{proof}
By linearity of expectation, $\E[\bar y^{(i)}_{m}] = \mu_i$ for any $m$, so each correction term satisfies $\E\big[A_i (\bar y^{(i)}_{m_i} - \bar y^{(i)}_{m_{i-1}})\big] = A_i(\mu_i - \mu_i) = 0$, and $\E[\muMF]=\mu_1=\mu$.
\end{proof}

For a vector-valued estimator, the mean-squared error depends on the choice of norm; we measure errors in the norm induced by $M$.

\begin{definition}[Weighted mean-squared error]
\label{def:weighted-mse}
Let $M \succeq 0$. The weighted mean-squared error (MSE) of an estimator $\hat{\mu}$ of $\mu$ is
\[
\mathcal{E}_M(\hat{\mu}) :=\E\big[\|\hat{\mu}-\mu\|_M^2\big]
= \E\big[(\hat{\mu}-\mu)^\top M(\hat{\mu}-\mu)\big].
\]
\end{definition}

\begin{lemma}
\label{lem:mse-trace}
If $\hat{\mu}$ is unbiased, then
\[
\mathcal{E}_M(\hat{\mu}) = \tr\!\big(M \Cov(\hat{\mu})\big).
\]
\end{lemma}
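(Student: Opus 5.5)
The plan is to write the weighted squared norm as a trace and then move the expectation inside, using unbiasedness to identify the resulting matrix with the covariance. Set $e := \hat{\mu}-\mu \in \mathbb{R}^d$. Because $\hat\mu$ is unbiased, $\E[e]=0$, and therefore
\[
\Cov(\hat\mu)=\E\bigl[(\hat\mu-\E\hat\mu)(\hat\mu-\E\hat\mu)^\top\bigr]=\E[e e^\top].
\]
The one place where unbiasedness enters is this identification of $\Cov(\hat\mu)$ with the second-moment matrix of the error.

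For a fixed realization, $e^\top M e$ is a scalar, so it equals its own trace. Cyclic invariance of the trace then gives
\[
e^\top M e=\tr(e^\top M e)=\tr(M e e^\top).
\]
Taking expectations and using linearity of both trace and expectation, $\E[\tr(M ee^\top)]=\tr(M\,\E[ee^\top])$. Substituting $\E[ee^\top]=\Cov(\hat\mu)$ gives $\mathcal{E}_M(\hat\mu)=\tr(M\Cov(\hat\mu))$, as claimed.

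The only technical point is integrability, that is, whether $\E[ee^\top]$ is finite so that the trace and expectation can be exchanged. For the estimators under consideration, $\hat\mu$ is a finite linear combination of sample means of model outputs. By Assumption~\ref{ass:standing}(i) these outputs have finite second moments, so every entry of $ee^\top$ is integrable. Interchanging a finite sum, namely the trace, with the expectation is then immediate. In the general statement I would simply take finite second moments of $\hat\mu$ as implicit, since otherwise $\Cov(\hat\mu)$ is undefined. I expect no genuine obstacle here; the proof is routine.
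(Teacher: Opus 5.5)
Your proof is correct and follows essentially the same route as the paper: write $e^\top M e=\tr(Mee^\top)$, exchange trace and expectation, and use unbiasedness to identify $\E[ee^\top]$ with $\Cov(\hat\mu)$. The paper phrases that last step as the decomposition $\E[ee^\top]=\Cov(\hat\mu)+(\E[\hat\mu]-\mu)(\E[\hat\mu]-\mu)^\top$ with the bias term vanishing, which is equivalent to your direct argument.
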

\begin{proof}
Writing the squared $M$-norm as a trace,
\[
\mathcal{E}_M(\hat{\mu})
=
\E\big[\tr\!\big(M(\hat{\mu}-\mu)(\hat{\mu}-\mu)^\top\big)\big]
=
\tr\!\left(
M\,\E\big[(\hat{\mu}-\mu)(\hat{\mu}-\mu)^\top\big]
\right).
\]
Moreover,
\[
\E\big[(\hat{\mu}-\mu)(\hat{\mu}-\mu)^\top\big]
=
\Cov(\hat{\mu})
+
(\E[\hat{\mu}]-\mu)(\E[\hat{\mu}]-\mu)^\top .
\]
If \(\hat{\mu}\) is unbiased, then \(\E[\hat{\mu}]=\mu\), so the bias term vanishes and
\(
\mathcal{E}_M(\hat{\mu})
=
\tr\!\big(M\Cov(\hat{\mu})\big).
\)
\end{proof}
The next lemma is the vector-valued analogue of \cite[Lemma~3.2]{peherstorfer2016optimal}.

\begin{lemma}[Covariance of two sample means on shared samples]
\label{lem:sample-mean-cov}
Let \(\bar y_{m_i}^{(l)}\) and \(\bar y_{m_j}^{(t)}\) be sample means computed from the first $m_i$ and the first $m_j$ elements, respectively, of the common i.i.d.\ samples $z_1,\dots,z_{m_k}$. Then
\[
\Cov\!\left[\bar y_{m_i}^{(l)}, \bar y_{m_j}^{(t)}\right]
=
\frac{1}{\max\{m_i,m_j\}}\, \Sigma_{l,t}.
\]
\end{lemma}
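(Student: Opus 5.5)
The plan is to expand both sample means as sums over the shared samples and use bilinearity of the cross-covariance. Write
\[
\bar y^{(l)}_{m_i}=\frac{1}{m_i}\sum_{a=1}^{m_i} f^{(l)}(z_a),\qquad
\bar y^{(t)}_{m_j}=\frac{1}{m_j}\sum_{b=1}^{m_j} f^{(t)}(z_b).
\]
Since $\Cov(X,Y)=\E[(X-\E X)(Y-\E Y)^\top]$ is bilinear in its two vector arguments, we get
\[
\Cov\!\left[\bar y^{(l)}_{m_i},\bar y^{(t)}_{m_j}\right]
=\frac{1}{m_i m_j}\sum_{a=1}^{m_i}\sum_{b=1}^{m_j}\Cov\!\left[f^{(l)}(z_a),f^{(t)}(z_b)\right].
\]
Finite second moments (Assumption~\ref{ass:standing}(i)) make every term well defined.

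Next we evaluate each summand using the i.i.d.\ structure of the pool $z_1,\dots,z_{m_k}$.

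For $a\neq b$, the inputs $z_a$ and $z_b$ are independent, so $f^{(l)}(z_a)$ and $f^{(t)}(z_b)$ are independent. Their cross-covariance is therefore the zero matrix.

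For $a=b$, both outputs are evaluated at the same input $z_a$, which has the law of $Z$. The cross-covariance is then $\Cov(f^{(l)}(Z),f^{(t)}(Z))=\Sigma_{l,t}$.

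Hence only diagonal pairs $a=b$ survive. These are the indices $a\le\min\{m_i,m_j\}$, because both sums run over initial segments of the common pool. This nestedness of the sample sets is the one point to state carefully: it is exactly what makes the overlap equal to the first $\min\{m_i,m_j\}$ samples rather than an arbitrary intersection.

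Counting the surviving terms gives
\[
\Cov\!\left[\bar y^{(l)}_{m_i},\bar y^{(t)}_{m_j}\right]=\frac{\min\{m_i,m_j\}}{m_i m_j}\,\Sigma_{l,t}.
\]
The identity $\min\{m_i,m_j\}\max\{m_i,m_j\}=m_im_j$ then turns this into $\Sigma_{l,t}/\max\{m_i,m_j\}$, as claimed.

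There is no real obstacle. The only care needed is to keep the order of the arguments, so that the result is $\Sigma_{l,t}$ and not its transpose $\Sigma_{t,l}$. The argument is the scalar proof of \cite[Lemma~3.2]{peherstorfer2016optimal}, carried out with outer products in place of products of scalars.
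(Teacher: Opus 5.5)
Your proof is correct and follows the paper's argument exactly: you expand the cross-covariance by bilinearity, the off-diagonal terms vanish by independence, each diagonal term equals $\Sigma_{l,t}$, and you count the $\min\{m_i,m_j\}$ shared initial samples before using $\min\{m_i,m_j\}\max\{m_i,m_j\}=m_im_j$. Your explicit remark on keeping the argument order, so that the result is $\Sigma_{l,t}$ rather than $\Sigma_{t,l}$, is a detail the paper leaves implicit.
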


\begin{proof}
By bilinearity of the covariance,
\[
\Cov\!\left[\bar y_{m_i}^{(l)}, \bar y_{m_j}^{(t)}\right]
=
\frac{1}{m_i m_j}
\sum_{a=1}^{m_i}\sum_{b=1}^{m_j}
\Cov\!\left[f^{(l)}(z_a), f^{(t)}(z_b)\right].
\]
Since the samples are i.i.d., the terms with \(a\neq b\) vanish, and each term with $a=b$ equals $\Sigma_{l,t}$. The two sample means share the first \(\min\{m_i,m_j\}\) samples, so exactly \(\min\{m_i,m_j\}\) terms survive:
\[
\Cov\!\left[\bar y_{m_i}^{(l)}, \bar y_{m_j}^{(t)}\right]
=
\frac{\min\{m_i,m_j\}}{m_i m_j} \Sigma_{l,t}
=
\frac{1}{\max\{m_i,m_j\}} \Sigma_{l,t}.
\]
\end{proof}

With Lemma~\ref{lem:sample-mean-cov} in hand, the covariance of the MFMC estimator follows by direct expansion. The key structural facts, established in the proof, are that the correction terms $A_i(\bar y_{m_i}^{(i)}-\bar y_{m_{i-1}}^{(i)})$ are mutually uncorrelated across levels, and that the cross-covariance of each with the HF sample mean is $-(1/m_{i-1}-1/m_i)\Sigma_{1,i}A_i^\top$, so that a suitable $A_i$ cancels part of the HF sampling error.

\begin{lemma}[Covariance of the vector-valued MFMC estimator]
\label{lem:cov_vector_mfmc}
The covariance matrix of the estimator $\muMF$ in \eqref{eq:vector-mfmc} is
\[
\Cov\!\left[\muMF\right]
=
\frac{1}{m_1}\Sigma_1
+
\sum_{i=2}^k
\left(\frac{1}{m_{i-1}}-\frac{1}{m_i}\right)
\Bigl(
A_i \Sigma_i A_i^\top
-
A_i \Sigma_{i,1}
-
\Sigma_{1,i} A_i^\top
\Bigr).
\]
\end{lemma}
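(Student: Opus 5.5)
The plan is to write $\muMF = \bar y^{(1)}_{m_1} + \sum_{i=2}^k A_i \Delta_i$ with $\Delta_i := \bar y^{(i)}_{m_i} - \bar y^{(i)}_{m_{i-1}}$. Bilinearity of the covariance then splits $\Cov[\muMF]$ into three kinds of pieces: the HF block $\Cov[\bar y^{(1)}_{m_1}]$, the mixed blocks $\Cov[\bar y^{(1)}_{m_1}, A_i\Delta_i]$ together with their transposes, and the correction blocks $\Cov[A_i\Delta_i, A_j\Delta_j]$. Each piece is evaluated with Lemma~\ref{lem:sample-mean-cov}, using $\Cov[Ax,By]=A\Cov[x,y]B^\top$ to pull out the matrix coefficients. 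By Lemma~\ref{lem:sample-mean-cov} with $l=t=1$, the HF block is $\Sigma_1/m_1$.

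\textbf{Mixed terms.} Nestedness gives $m_1\le m_{i-1}\le m_i$, so Lemma~\ref{lem:sample-mean-cov} yields
\[
\Cov[\bar y^{(1)}_{m_1},\Delta_i]=\frac{1}{m_i}\Sigma_{1,i}-\frac{1}{m_{i-1}}\Sigma_{1,i}=-\Bigl(\frac{1}{m_{i-1}}-\frac{1}{m_i}\Bigr)\Sigma_{1,i}.
\]
Multiplying on the right by $A_i^\top$ gives the term $-(1/m_{i-1}-1/m_i)\Sigma_{1,i}A_i^\top$. Its transpose contributes $-(1/m_{i-1}-1/m_i)A_i\Sigma_{i,1}$, because $\Sigma_{1,i}^\top=\Sigma_{i,1}$.

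\textbf{Correction terms.} For $i=j$, expanding $\Cov[\Delta_i,\Delta_i]$ produces four terms:
\[
\frac{1}{m_i}\Sigma_i-\frac{1}{m_i}\Sigma_i-\frac{1}{m_i}\Sigma_i+\frac{1}{m_{i-1}}\Sigma_i=\Bigl(\frac{1}{m_{i-1}}-\frac{1}{m_i}\Bigr)\Sigma_i.
\]
This gives $(1/m_{i-1}-1/m_i)A_i\Sigma_iA_i^\top$. For $i<j$, nestedness gives $m_{i-1}\le m_i\le m_{j-1}\le m_j$, so every pair has maximum index $m_{j-1}$ or $m_j$, independent of which of $m_{i-1},m_i$ is involved. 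Then
\[
\Cov[\Delta_i,\Delta_j]=\Bigl(\frac{1}{m_j}-\frac{1}{m_{j-1}}\Bigr)\Sigma_{i,j}-\Bigl(\frac{1}{m_j}-\frac{1}{m_{j-1}}\Bigr)\Sigma_{i,j}=0.
\]
The case $i>j$ follows by transposition, so distinct levels are uncorrelated. Summing all contributions gives the stated formula.

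\textbf{Main obstacle.} The only delicate step is this cross-level cancellation, which depends on correctly ordering the four max-indices via nestedness. I would check in particular the boundary case $j=i+1$, where $m_i=m_{j-1}$.
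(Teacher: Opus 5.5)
Your proposal is correct and follows essentially the same route as the paper. Both write $\muMF=\bar y^{(1)}_{m_1}+\sum_{i=2}^k A_i\Delta_i$ and evaluate the HF block, the mixed blocks, and the diagonal correction blocks with Lemma~\ref{lem:sample-mean-cov}, and both use nestedness $m_{i-1}\le m_i\le m_{j-1}\le m_j$ to show $\Cov[\Delta_i,\Delta_j]=0$ for $i\ne j$; your check of the boundary case $j=i+1$ is also sound.
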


\begin{proof}
Let
\[
\Delta_i
:=
\bar y_{m_i}^{(i)}-\bar y_{m_{i-1}}^{(i)},
\qquad i=2,\dots,k,
\]
so that
\[
\muMF
=
\bar y_{m_1}^{(1)}
+
\sum_{i=2}^k A_i\Delta_i .
\]
Expanding the covariance gives
\[
\begin{aligned}
\Cov[\muMF]
&=
\operatorname{Cov}\!\left[\bar y_{m_1}^{(1)}\right]
+
\sum_{i=2}^k
\operatorname{Cov}[A_i\Delta_i]
\\
&\quad
+
\sum_{i=2}^k
\operatorname{Cov}\!\left[\bar y_{m_1}^{(1)},A_i\Delta_i\right]
+
\sum_{i=2}^k
\operatorname{Cov}\!\left[A_i\Delta_i,\bar y_{m_1}^{(1)}\right]
\\
&\quad
+
\sum_{\substack{i,j=2\\ i\ne j}}^k
\operatorname{Cov}[A_i\Delta_i,A_j\Delta_j].
\end{aligned}
\]

We now compute each term. By Lemma~\ref{lem:sample-mean-cov},
\[
\operatorname{Cov}\!\left[\bar y_{m_1}^{(1)}\right]
=
\frac{1}{m_1}\Sigma_1.
\]

Next, for each \(i=2,\dots,k\),
\[
\begin{aligned}
\operatorname{Cov}[\Delta_i]
&=
\operatorname{Cov}\!\left[
\bar y_{m_i}^{(i)}-\bar y_{m_{i-1}}^{(i)}
\right]
\\
&=
\operatorname{Cov}\!\left[\bar y_{m_i}^{(i)}\right]
+
\operatorname{Cov}\!\left[\bar y_{m_{i-1}}^{(i)}\right]
-
\operatorname{Cov}\!\left[\bar y_{m_i}^{(i)},\bar y_{m_{i-1}}^{(i)}\right]
-
\operatorname{Cov}\!\left[\bar y_{m_{i-1}}^{(i)},\bar y_{m_i}^{(i)}\right].
\end{aligned}
\]
Using Lemma~\ref{lem:sample-mean-cov} and \(m_{i-1}\le m_i\), we have
\[
\operatorname{Cov}\!\left[\bar y_{m_i}^{(i)}\right]
=
\frac{1}{m_i}\Sigma_i,
\qquad
\operatorname{Cov}\!\left[\bar y_{m_{i-1}}^{(i)}\right]
=
\frac{1}{m_{i-1}}\Sigma_i,
\]
and
\[
\operatorname{Cov}\!\left[\bar y_{m_i}^{(i)},\bar y_{m_{i-1}}^{(i)}\right]
=
\operatorname{Cov}\!\left[\bar y_{m_{i-1}}^{(i)},\bar y_{m_i}^{(i)}\right]
=
\frac{1}{m_i}\Sigma_i.
\]
Therefore,
\[
\operatorname{Cov}[\Delta_i]
=
\left(
\frac{1}{m_{i-1}}-\frac{1}{m_i}
\right)\Sigma_i.
\]
Hence
\[
\operatorname{Cov}[A_i\Delta_i]
=
A_i\operatorname{Cov}[\Delta_i]A_i^\top
=
\left(
\frac{1}{m_{i-1}}-\frac{1}{m_i}
\right)
A_i\Sigma_iA_i^\top.
\]

We also compute the covariance between the high-fidelity sample mean and the increment \(\Delta_i\). Since \(m_1\le m_{i-1}\le m_i\), Lemma~\ref{lem:sample-mean-cov} gives
\[
\operatorname{Cov}\!\left[
\bar y_{m_1}^{(1)},\bar y_{m_i}^{(i)}
\right]
=
\frac{1}{m_i}\Sigma_{1,i},
\]
and
\[
\operatorname{Cov}\!\left[
\bar y_{m_1}^{(1)},\bar y_{m_{i-1}}^{(i)}
\right]
=
\frac{1}{m_{i-1}}\Sigma_{1,i}.
\]
Thus
\[
\begin{aligned}
\operatorname{Cov}\!\left[\bar y_{m_1}^{(1)},\Delta_i\right]
&=
\operatorname{Cov}\!\left[
\bar y_{m_1}^{(1)},\bar y_{m_i}^{(i)}
\right]
-
\operatorname{Cov}\!\left[
\bar y_{m_1}^{(1)},\bar y_{m_{i-1}}^{(i)}
\right]
\\
&=
-\left(
\frac{1}{m_{i-1}}-\frac{1}{m_i}
\right)\Sigma_{1,i}.
\end{aligned}
\]
Similarly,
\[
\operatorname{Cov}\!\left[\Delta_i,\bar y_{m_1}^{(1)}\right]
=
-\left(
\frac{1}{m_{i-1}}-\frac{1}{m_i}
\right)\Sigma_{i,1}.
\]
Therefore,
\[
\operatorname{Cov}\!\left[\bar y_{m_1}^{(1)},A_i\Delta_i\right]
=
\operatorname{Cov}\!\left[\bar y_{m_1}^{(1)},\Delta_i\right]A_i^\top
=
-\left(
\frac{1}{m_{i-1}}-\frac{1}{m_i}
\right)\Sigma_{1,i}A_i^\top,
\]
and
\[
\operatorname{Cov}\!\left[A_i\Delta_i,\bar y_{m_1}^{(1)}\right]
=
A_i\operatorname{Cov}\!\left[\Delta_i,\bar y_{m_1}^{(1)}\right]
=
-\left(
\frac{1}{m_{i-1}}-\frac{1}{m_i}
\right)A_i\Sigma_{i,1}.
\]

It remains to show that the increments are mutually uncorrelated. Let \(i<j\). Since the sample sizes are nested,
\[
m_{i-1}\le m_i\le m_{j-1}\le m_j.
\]
Using Lemma~\ref{lem:sample-mean-cov}, we obtain
\[
\begin{aligned}
\operatorname{Cov}[\Delta_i,\Delta_j]
&=
\operatorname{Cov}\!\left[
\bar y_{m_i}^{(i)}-\bar y_{m_{i-1}}^{(i)},
\bar y_{m_j}^{(j)}-\bar y_{m_{j-1}}^{(j)}
\right]
\\
&=
\operatorname{Cov}\!\left[\bar y_{m_i}^{(i)},\bar y_{m_j}^{(j)}\right]
-
\operatorname{Cov}\!\left[\bar y_{m_i}^{(i)},\bar y_{m_{j-1}}^{(j)}\right]
\\
&\quad
-
\operatorname{Cov}\!\left[\bar y_{m_{i-1}}^{(i)},\bar y_{m_j}^{(j)}\right]
+
\operatorname{Cov}\!\left[\bar y_{m_{i-1}}^{(i)},\bar y_{m_{j-1}}^{(j)}\right]
\\
&=
\frac{1}{m_j}\Sigma_{i,j}
-
\frac{1}{m_{j-1}}\Sigma_{i,j}
-
\frac{1}{m_j}\Sigma_{i,j}
+
\frac{1}{m_{j-1}}\Sigma_{i,j}
\\
&=0.
\end{aligned}
\]
Therefore,
\[
\operatorname{Cov}[A_i\Delta_i,A_j\Delta_j]=0,
\qquad i\ne j.
\]

Substituting all these identities into the covariance expansion yields
\[
\operatorname{Cov}[\muMF]
=
\frac{1}{m_1}\Sigma_1
+
\sum_{i=2}^k
\left(
\frac{1}{m_{i-1}}-\frac{1}{m_i}
\right)
\left(
A_i\Sigma_iA_i^\top
-
A_i\Sigma_{i,1}
-
\Sigma_{1,i}A_i^\top
\right).
\]
This completes the proof.
\end{proof}

Combining Lemma~\ref{lem:mse-trace} with Lemma~\ref{lem:cov_vector_mfmc} gives the weighted MSE used in the rest of the paper.

\begin{corollary}[Weighted MSE] \label{corollary: MSE matrix cv}
For fixed sample sizes $m_1 \le \cdots \le m_k$, the weighted MSE of the MFMC estimator \eqref{eq:vector-mfmc} is
\begin{equation}
\label{eq:mse-separable}
\mathcal{E}_M(\muMF)
=
\frac{\VM}{m_1}
+
\sum_{i=2}^k
\delta_i \,
\Phi_i(A_i),
\end{equation}
where $\VM = \tr(M \Sigma_1)$ is the weighted HF variance,
\[
\delta_i := \frac{1}{m_{i-1}} - \frac{1}{m_i} \ge 0,
\qquad
\Phi_i(A)
:=
\tr(M A \Sigma_i A^\top)
- 2 \tr(M A \Sigma_{i,1}),
\qquad i = 2, \dots, k.
\]
\end{corollary}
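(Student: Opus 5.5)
The plan is to combine Lemma~\ref{lem:mse-trace} with Lemma~\ref{lem:cov_vector_mfmc}, then use linearity and cyclic invariance of the trace. Since $\muMF$ is unbiased for any $A_i$ and any nested sample sizes, Lemma~\ref{lem:mse-trace} gives $\mathcal{E}_M(\muMF)=\tr(M\Cov[\muMF])$. We substitute the covariance expression from Lemma~\ref{lem:cov_vector_mfmc} and distribute $\tr(M\,\cdot)$ over the sum. The leading term becomes $\tr(M\Sigma_1)/m_1=\VM/m_1$. Each level $i\ge 2$ contributes $\delta_i\bigl[\tr(MA_i\Sigma_iA_i^\top)-\tr(MA_i\Sigma_{i,1})-\tr(M\Sigma_{1,i}A_i^\top)\bigr]$.

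The only step that needs care is merging the two cross terms into $-2\tr(MA_i\Sigma_{i,1})$. A trace is unchanged by transposition, and $\Sigma_{1,i}^\top=\Sigma_{i,1}$, so
\[
\tr(M\Sigma_{1,i}A_i^\top)=\tr\bigl((M\Sigma_{1,i}A_i^\top)^\top\bigr)=\tr(A_i\Sigma_{i,1}M^\top).
\]
Because $M$ is symmetric by Assumption~\ref{ass:standing}, this equals $\tr(A_i\Sigma_{i,1}M)$. Cyclic invariance then turns it into $\tr(MA_i\Sigma_{i,1})$. The two cross terms are therefore equal, and together they give $-2\tr(MA_i\Sigma_{i,1})$. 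This yields $\Phi_i(A_i)$ as defined in the statement.

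Finally, $\delta_i\ge 0$ follows immediately from $m_{i-1}\le m_i$. No step is a genuine obstacle; the argument only needs the symmetry of $M$ and the transpose relation between the cross-covariances.
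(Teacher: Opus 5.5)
Your proposal is correct and follows the same route as the paper, which obtains the corollary simply by combining Lemma~\ref{lem:mse-trace} with Lemma~\ref{lem:cov_vector_mfmc}. Your transpose-and-cycle argument, using $M=M^\top$ and $\Sigma_{1,i}^\top=\Sigma_{i,1}$ to merge the two cross terms into $-2\tr(MA_i\Sigma_{i,1})$, spells out the trace step that the paper leaves implicit.
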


The form of \eqref{eq:mse-separable} determines the design of the estimator. The first term, $\VM/m_1$, is the weighted MSE of the plain HF Monte Carlo estimator. Each level $i$ then contributes the product of two factors: a factor $\delta_i$ that depends \emph{only on the sample sizes}, and a factor $\Phi_i(A_i)$ that depends \emph{only on the control-variate matrix} $A_i$. The matrix $A_i$ helps exactly when $\Phi_i(A_i)<0$. The two design questions therefore decouple:
\begin{enumerate}
\item For each level $i$, choose $A_i$ to make $\Phi_i(A_i)$ as negative as possible (Section~\ref{sec:optimal-cv}).
\item Then choose the sample sizes $m_1\le\cdots\le m_k$ to minimize the resulting expression under the budget constraint (Section~\ref{sec:allocation}).
\end{enumerate}

\subsection{Optimal control variates and the effective correlation}
\label{sec:optimal-cv}

Since the levels decouple in \eqref{eq:mse-separable}, it suffices to solve, for each $i=2,\dots,k$, the matrix optimization problem
\begin{equation}
\label{eq:cv-problem}
\min_{A \in \mathcal{S}} \; \Phi_i(A),
\qquad
\Phi_i(A)
=
\tr(M A \Sigma_i A^\top) - 2 \tr(M A \Sigma_{i,1}),
\end{equation}
where $\mathcal{S} \subseteq \mathbb{R}^{d\times d}$ is a prescribed \emph{structural class} of admissible coefficient matrices. The most general choice $\mathcal{S}=\mathbb{R}^{d\times d}$ extracts the most variance reduction, but (as Section~\ref{sec:numerics} shows) its optimal coefficient involves a $d\times d$ covariance matrix that must be estimated and inverted from a limited number of pilot samples; when $d$ is large this is unstable or infeasible. Restricting $\mathcal{S}$ trades population-level optimality for estimation robustness. We consider the nested hierarchy
\begin{itemize}
\item \textbf{full:} $\mathcal{S}_{\mathrm{full}} = \mathbb{R}^{d\times d}$;
\item \textbf{block-diagonal:} $\mathcal{S}_{\mathrm{block}} = \{A\in\mathbb{R}^{d\times d}:\; A = \blkdiag(A^{[1]},\dots,A^{[B]})\}$ for a prescribed partition of the $d$ components into $B$ blocks;
\item \textbf{diagonal:} $\mathcal{S}_{\mathrm{diag}} = \{A\in\mathbb{R}^{d\times d}:\; A \text{ diagonal}\}$;
\item \textbf{scalar:} $\mathcal{S}_{\mathrm{scalar}} = \{\alpha I:\; \alpha \in \mathbb{R}\}$,
\end{itemize}
which satisfies
$\mathcal{S}_{\mathrm{scalar}} \subseteq \mathcal{S}_{\mathrm{diag}} \subseteq \mathcal{S}_{\mathrm{block}} \subseteq \mathcal{S}_{\mathrm{full}}$.
Moving up the hierarchy allows more expressive corrections (more cross-component information is used) at the price of more covariance parameters to estimate.

\subsubsection{The full matrix class}
\label{sec:full-class}

We begin with the full matrix class, whose solution is available in closed form.

\begin{lemma}[Optimal full-matrix control variate]
\label{lem:optimal-full}
Let $\Sigma_i \succ 0$ and $M \succeq 0$. Then
\[
A_i^\star=\Sigma_{1,i}\Sigma_i^{-1}
\]
minimizes $\Phi_i$ over $\mathcal{S}_{\mathrm{full}} = \mathbb{R}^{d\times d}$, and the attained minimum is
\begin{equation}
\label{eq:full-min}
\Phi_i(A_i^\star)
=
- \tr\!\big(M\,\Sigma_{1,i}\Sigma_i^{-1}\Sigma_{i,1}\big).
\end{equation}
If in addition $M \succ 0$, the minimizer is unique.
\end{lemma}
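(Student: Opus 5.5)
I would prove this by completing the square in the matrix variable. Write $A^\star := \Sigma_{1,i}\Sigma_i^{-1}$ and, for arbitrary $A\in\mathbb{R}^{d\times d}$, set $E := A - A^\star$. The key identity to establish is
\[
\Phi_i(A) \;=\; \tr\!\big(M E\,\Sigma_i E^\top\big) \;-\; \tr\!\big(M\,\Sigma_{1,i}\Sigma_i^{-1}\Sigma_{i,1}\big).
\]
To verify it, I expand $\tr(M(A^\star+E)\Sigma_i(A^\star+E)^\top)$ and use $A^\star\Sigma_i = \Sigma_{1,i}$. The quadratic term in $A^\star$ gives $\tr(M\Sigma_{1,i}\Sigma_i^{-1}\Sigma_{i,1})$, using $\Sigma_i^{-\top}=\Sigma_i^{-1}$ and $\Sigma_{1,i}^\top=\Sigma_{i,1}$.

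The two cross terms are $\tr(ME\Sigma_{i,1})$ and $\tr(M\Sigma_{1,i}E^\top)$. By transposition and symmetry of $M$, the second equals the first, so together they contribute $2\tr(ME\Sigma_{i,1})$. The linear part $-2\tr(MA\Sigma_{i,1})$ splits as $-2\tr(MA^\star\Sigma_{i,1}) - 2\tr(ME\Sigma_{i,1})$, which cancels the cross terms. It also turns $+\tr(M\Sigma_{1,i}\Sigma_i^{-1}\Sigma_{i,1})$ into $-\tr(M\Sigma_{1,i}\Sigma_i^{-1}\Sigma_{i,1})$. This bookkeeping with transposes is the only place care is needed.

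Next I show the remainder term is nonnegative. Write $M = M^{1/2}M^{1/2}$ and $\Sigma_i = \Sigma_i^{1/2}\Sigma_i^{1/2}$. Cyclicity of the trace then gives
\[
\tr(ME\Sigma_iE^\top) = \big\|M^{1/2}E\,\Sigma_i^{1/2}\big\|_F^2 \;\ge\; 0.
\]
Hence $\Phi_i(A)\ge\Phi_i(A^\star)$ for every $A$, with equality at $A=A^\star$. This proves minimality and the value \eqref{eq:full-min}.

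For uniqueness, I assume $M\succ 0$. Then $M^{1/2}$ and $\Sigma_i^{1/2}$ are both invertible, so the Frobenius norm above vanishes only if $E=0$. Any minimizer must therefore satisfy $E=0$, i.e. $A=A^\star$.

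I do not expect a genuine obstacle. The one point worth flagging is that for singular $M$ the minimizer is not unique: any $E$ with $M^{1/2}E=0$ attains the same value. This is exactly why uniqueness is only claimed when $M\succ 0$.
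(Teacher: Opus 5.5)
Your proposal is correct and follows essentially the same route as the paper: complete the square around $A_i^\star=\Sigma_{1,i}\Sigma_i^{-1}$, write the remainder as $\|M^{1/2}(A-A_i^\star)\Sigma_i^{1/2}\|_F^2\ge 0$, and get uniqueness for $M\succ 0$ from the invertibility of $M^{1/2}$ and $\Sigma_i^{1/2}$. Your explicit transpose bookkeeping for the cross terms, and your remark that any $E$ with $M^{1/2}E=0$ is also a minimizer when $M$ is singular, match what the paper's proof states or leaves implicit.
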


\begin{proof}
We complete the square. Since $M$ is symmetric and
$A_i^\star\Sigma_iA_i^{\star\top}=\Sigma_{1,i}\Sigma_i^{-1}\Sigma_{i,1}$,
expanding the right-hand side below and using the cyclicity of the trace
gives, for every $A\in\mathbb{R}^{d\times d}$,
\[
\Phi_i(A)
=
\tr\!\big(M(A-A_i^\star)\,\Sigma_i\,(A-A_i^\star)^\top\big)
-
\tr\!\big(M\Sigma_{1,i}\Sigma_i^{-1}\Sigma_{i,1}\big).
\]
The first term equals $\|M^{1/2}(A-A_i^\star)\Sigma_i^{1/2}\|_F^2\ge 0$ and
vanishes at $A=A_i^\star$, so $A_i^\star$ is a global minimizer with the
minimum value \eqref{eq:full-min}. Moreover, because $\Sigma_i\succ 0$, the
first term vanishes if and only if $M(A-A_i^\star)=0$. If $M\succ 0$, this
forces $A=A_i^\star$, so the minimizer is unique.
\end{proof}

\subsubsection{The effective correlation}
\label{sec:effective-correlation}

Substituting the optimal full-matrix coefficient into the MSE \eqref{eq:mse-separable}, level $i$ contributes
\[
\delta_i\,\Phi_i(A_i^\star)
=
-\,\delta_i\, \tr\!\big(M\,\Sigma_{1,i}\Sigma_i^{-1}\Sigma_{i,1}\big),
\]
i.e.\ the quantity $\tr(M\Sigma_{1,i}\Sigma_i^{-1}\Sigma_{i,1})$ is the amount of weighted variance that model $i$ is able to remove. Normalizing it with the total weighted HF variance $\VM=\tr(M\Sigma_1)$ measures the usefulness of model $i$ on a dimensionless scale from $0$ to $1$, which is the role played by $\rho_{1,i}^2$ in the scalar theory. This motivates the following definition, which we state directly at the level of a general structural class so that the same notion covers the structurally reduced classes as well.

\begin{definition}[Effective correlation of a structural class]
\label{def:effective-correlation}
Let $\mathcal{S}\subseteq\mathbb{R}^{d\times d}$ be a structural class containing $0$. The \emph{effective correlation} between the HF model and the $i$th LF model under the class $\mathcal{S}$ is
\[
\gamS{i}
:=
-\frac{1}{\VM}
\min_{A\in\mathcal{S}} \Phi_i(A),
\qquad i = 2,\dots,k.
\]
Thus $\gamS{i}$ is the largest fraction of the weighted HF variance that the $i$th model can cancel using control variate coefficient matrices from $\mathcal{S}$. For the four classes above we write $\gfull{i}$, $\gblock{i}$, $\gdiag{i}$, and $\gscalar{i}$, respectively.
\end{definition}

By Lemma~\ref{lem:optimal-full} and \eqref{eq:full-min}, the full class admits the closed form
\begin{equation}
\label{eq:gamma-full}
\gfull{i} =
\frac{\tr(M\Sigma_{1,i}\Sigma_i^{-1}\Sigma_{i,1})}
{\tr(M\Sigma_1)} .
\end{equation}
The next lemma collects the basic properties of the effective correlation; in particular it justifies reading $\gamS{i}$ as a generalized squared correlation coefficient.

\begin{lemma}[Properties of the effective correlation]
\label{lem:gamma-properties}
Let $\mathcal{S}'\subseteq\mathcal{S}\subseteq\mathbb{R}^{d\times d}$ be structural classes containing $0$. Then, for each $i=2,\dots,k$,
\[
0 \;\le\; \gamma^{\mathcal{S}'}_{1,i} \;\le\; \gamS{i} \;\le\; \gfull{i} \;\le\; 1.
\]
\end{lemma}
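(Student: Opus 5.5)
The chain of inequalities splits into four separate claims, and I plan to handle each one directly from Definition~\ref{def:effective-correlation}. Throughout I use the fact that $\gamS{i} = -\VM^{-1}\min_{A\in\mathcal{S}}\Phi_i(A)$ with $\VM>0$ by Assumption~\ref{ass:standing}(iii), so comparing effective correlations reduces to comparing minima of $\Phi_i$.

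\emph{Lower bound and monotonicity.} Since $0\in\mathcal{S}'$ and $\Phi_i(0)=0$, we have $\min_{A\in\mathcal{S}'}\Phi_i(A)\le 0$, which gives $\gamma^{\mathcal{S}'}_{1,i}\ge 0$. Since $\mathcal{S}'\subseteq\mathcal{S}$, a minimum over the larger set is no larger, so $\gamma^{\mathcal{S}'}_{1,i}\le\gamS{i}$. Taking $\mathcal{S}'=\mathcal{S}$ and $\mathcal{S}=\mathbb{R}^{d\times d}$ in the same argument gives $\gamS{i}\le\gfull{i}$. I should also say that I read ``$\min$'' as an infimum for general classes. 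For the full class the minimum is attained by Lemma~\ref{lem:optimal-full}. Because the infimum over the full class is finite, the infimum over any subclass is bounded below by it and is therefore finite too. So every quantity in the chain is well defined.

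\emph{Upper bound $\gfull{i}\le 1$.} This is the only step with real content. Using \eqref{eq:gamma-full}, I need $\tr(M\Sigma_{1,i}\Sigma_i^{-1}\Sigma_{i,1})\le\tr(M\Sigma_1)$. I will obtain it from a Loewner-order inequality combined with $M\succeq 0$. Consider the joint covariance of $(f^{(1)}(Z),f^{(i)}(Z))$,
\[
\begin{pmatrix}\Sigma_1 & \Sigma_{1,i}\\ \Sigma_{i,1} & \Sigma_i\end{pmatrix}\succeq 0 .
\]
Because $\Sigma_i\succ0$, its Schur complement satisfies $\Sigma_1-\Sigma_{1,i}\Sigma_i^{-1}\Sigma_{i,1}\succeq 0$. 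An alternative route avoids the block matrix: with $A^\star=\Sigma_{1,i}\Sigma_i^{-1}$, the difference equals $\Cov\bigl(f^{(1)}(Z)-A^\star f^{(i)}(Z)\bigr)$, which is a covariance matrix and hence positive semidefinite.

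Finally, for any $M\succeq0$ and $P\succeq0$ we have $\tr(MP)=\tr(M^{1/2}PM^{1/2})\ge 0$. Applying this with $P=\Sigma_1-\Sigma_{1,i}\Sigma_i^{-1}\Sigma_{i,1}$ and dividing by $\VM>0$ gives $\gfull{i}\le1$, which completes the chain.
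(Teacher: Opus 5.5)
Your proof is correct and follows the paper's argument: nonnegativity from $0\in\mathcal{S}'$ and $\Phi_i(0)=0$, monotonicity from the nesting of the feasible sets, and $\gfull{i}\le 1$ from the Schur complement $\Sigma_1-\Sigma_{1,i}\Sigma_i^{-1}\Sigma_{i,1}\succeq 0$ combined with $\tr(MP)\ge 0$ for $M,P\succeq 0$. Your extra remarks are valid refinements that leave the approach unchanged: reading $\min$ as an infimum, and obtaining the Schur-complement inequality as the covariance of the residual $f^{(1)}(Z)-A^\star f^{(i)}(Z)$.
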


\begin{proof}
\emph{Nonnegativity.} Since $0\in\mathcal{S}'$ and $\Phi_i(0)=0$, we have $\min_{A\in\mathcal{S}'}\Phi_i(A)\le 0$, hence $\gamma^{\mathcal{S}'}_{1,i}\ge 0$.

\emph{Monotonicity.} Minimizing over the larger set $\mathcal{S}$ (respectively $\mathcal{S}_{\mathrm{full}}=\mathbb{R}^{d\times d}$) cannot give a larger minimum than minimizing over $\mathcal{S}'$ (respectively $\mathcal{S}$); multiplying by $-1/\VM<0$ reverses the inequalities and gives $\gamma^{\mathcal{S}'}_{1,i}\le\gamS{i}\le\gfull{i}$.

\emph{Upper bound.} The joint covariance matrix of $\big(f^{(1)}(Z),f^{(i)}(Z)\big)$,
\[
\begin{pmatrix}
\Sigma_1 & \Sigma_{1,i} \\
\Sigma_{i,1} & \Sigma_i
\end{pmatrix}
\succeq 0,
\]
is positive semidefinite, so its Schur complement with respect to the block $\Sigma_i \succ 0$ satisfies
\[
\Sigma_1 - \Sigma_{1,i} \Sigma_i^{-1} \Sigma_{i,1} \succeq 0.
\]
For $M\succeq 0$, the trace of the product of two positive semidefinite matrices is nonnegative, hence
\[
\tr\!\big(M(\Sigma_1 - \Sigma_{1,i} \Sigma_i^{-1} \Sigma_{i,1})\big)\ge 0,
\qquad\text{i.e.}\qquad
\tr\!\big( M \Sigma_{1,i} \Sigma_i^{-1} \Sigma_{i,1} \big)
\le
\tr(M\Sigma_1) = \VM .
\]
Dividing by $\VM>0$ and using \eqref{eq:gamma-full} yields $\gfull{i}\le 1$.
\end{proof}

\begin{remark}[Reduction to the squared Pearson correlation]
\label{rem:pearson}
For $d=1$ (and any scalar weight $M>0$), all four classes coincide with $\mathbb{R}$, and
\[
\Sigma_1 = \Var(f^{(1)}(Z)), \quad
\Sigma_i = \Var(f^{(i)}(Z)), \quad
\Sigma_{1,i} = \Cov(f^{(1)}(Z), f^{(i)}(Z)),
\]
so that
\[
\gfull{i}
=
\frac{\Cov(f^{(1)}(Z), f^{(i)}(Z))^2}
{\Var(f^{(1)}(Z))\,\Var(f^{(i)}(Z))}
=
\rho_{1,i}^2 .
\]
Thus the effective correlation can be regarded as a generalization of the squared Pearson correlation used in scalar MFMC.
\end{remark}

\subsubsection{Closed forms for the reduced classes}
\label{sec:reduced-classes}

Each reduced class also admits a simple closed-form optimal coefficient when the weighting matrix $M$ is compatible with the imposed structure: block-diagonal $M$ for the block class and diagonal $M$ for the diagonal class, while the scalar class requires only $\tr(M\Sigma_i)>0$. For the block and diagonal classes, the structure makes the objective $\Phi_i$ in \eqref{eq:cv-problem} decouple into independent subproblems of the quadratic form already solved in Lemma~\ref{lem:optimal-full}; for the scalar class it reduces to a one-dimensional quadratic.

\begin{proposition}[Optimal block-diagonal control variate]
\label{prop:block}
Partition the components as $\{1,\dots,d\} = \mathcal{I}_1\cup\cdots\cup \mathcal{I}_B$ with block sizes $d_b=|\mathcal{I}_b|$, and suppose that
\[
M=\blkdiag(M^{[1]},\dots,M^{[B]}),
\qquad
M^{[b]}\in\mathbb{R}^{d_b\times d_b},
\]
is block-diagonal with respect to this partition. Let $\Sigma_i^{[b]}$ and $\Sigma_{1,i}^{[b]}$ denote the $(b,b)$ diagonal blocks of $\Sigma_i$ and $\Sigma_{1,i}$, and assume $\Sigma_i^{[b]}\succ 0$. Then
\[
A_{i}^{\star}
=
\blkdiag\!\big(A_{i}^{\star[1]},\dots,A_{i}^{\star[B]}\big),
\qquad
A_{i}^{\star[b]}
=
\Sigma_{1,i}^{[b]} \big(\Sigma_i^{[b]}\big)^{-1},
\]
minimizes $\Phi_i$ over $\mathcal{S}_{\mathrm{block}}$, and the corresponding effective correlation is
\begin{equation}
\label{eq:gamma-block}
\gblock{i}
=
\frac{1}{\VM}
\sum_{b=1}^B
\tr\!\big(
M^{[b]}
\Sigma_{1,i}^{[b]}
(\Sigma_i^{[b]})^{-1}
\Sigma_{i,1}^{[b]}
\big) .
\end{equation}
\end{proposition}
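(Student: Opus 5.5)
The plan is to show that, when both $A$ and $M$ are block-diagonal with respect to the same partition, the objective $\Phi_i$ in \eqref{eq:cv-problem} splits into a sum of $B$ independent quadratic problems. Each of these has exactly the form treated in Lemma~\ref{lem:optimal-full}, only in dimension $d_b$ instead of $d$.

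\textbf{Step 1: decouple the objective.} Write $A=\blkdiag(A^{[1]},\dots,A^{[B]})\in\mathcal{S}_{\mathrm{block}}$. For a general (not block-diagonal) matrix $X$, the $(b,b)$ block of $AX$ is $A^{[b]}X^{[b,b]}$. If $M$ is also block-diagonal, then the $(b,b)$ block of $MAX$ is $M^{[b]}A^{[b]}X^{[b,b]}$. Since the trace only involves diagonal blocks, this gives
\[
\tr(MA\Sigma_{i,1})=\sum_{b=1}^B \tr\!\big(M^{[b]}A^{[b]}\Sigma_{i,1}^{[b]}\big).
\]
Similarly, $(A\Sigma_iA^\top)^{[b,b]}=A^{[b]}\Sigma_i^{[b]}A^{[b]\top}$, because the only nonzero block in block-row $b$ of $A$ is $A^{[b]}$. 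Hence
\[
\tr(MA\Sigma_iA^\top)=\sum_{b=1}^B\tr\!\big(M^{[b]}A^{[b]}\Sigma_i^{[b]}A^{[b]\top}\big).
\]
We also need $\Sigma_{i,1}^{[b]}=(\Sigma_{1,i}^{[b]})^\top$, which holds because diagonal blocks commute with transposition. Together these give
\[
\Phi_i(A)=\sum_{b=1}^B \Phi_i^{[b]}(A^{[b]}),
\qquad
\Phi_i^{[b]}(C):=\tr\!\big(M^{[b]}C\Sigma_i^{[b]}C^\top\big)-2\tr\!\big(M^{[b]}C\Sigma_{i,1}^{[b]}\big).
\]
This is the step that needs the most care. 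Off-diagonal blocks of $\Sigma_i$ and $\Sigma_{1,i}$ do appear in the products $A\Sigma_i$ and $MA\Sigma_{i,1}$. The argument must verify that they never reach a diagonal block of $MA\Sigma_iA^\top$ or of $MA\Sigma_{i,1}$, and this is precisely where block-diagonality of both $M$ and $A$ is used.

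\textbf{Step 2: solve each block.} The blocks $A^{[b]}$ vary independently over $\mathbb{R}^{d_b\times d_b}$, so minimizing $\Phi_i$ over $\mathcal{S}_{\mathrm{block}}$ amounts to minimizing each $\Phi_i^{[b]}$ separately. The matrix $M^{[b]}$ is a principal submatrix of $M\succeq0$, so $M^{[b]}\succeq 0$. Also $\Sigma_i^{[b]}\succ0$ by hypothesis, and $(\Sigma_1^{[b]},\Sigma_{1,i}^{[b]},\Sigma_i^{[b]})$ is the covariance data of the subvectors indexed by $\mathcal{I}_b$. Lemma~\ref{lem:optimal-full} therefore applies in dimension $d_b$. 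Its completing-the-square argument needs only $M^{[b]}\succeq0$ and $\Sigma_i^{[b]}\succ0$, not $\tr(M^{[b]}\Sigma_1^{[b]})>0$. It gives the minimizer $A_i^{\star[b]}=\Sigma_{1,i}^{[b]}(\Sigma_i^{[b]})^{-1}$ with minimum value $-\tr(M^{[b]}\Sigma_{1,i}^{[b]}(\Sigma_i^{[b]})^{-1}\Sigma_{i,1}^{[b]})$.

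\textbf{Step 3: assemble.} Assembling these blocks gives the minimizer $A_i^\star=\blkdiag(A_i^{\star[1]},\dots,A_i^{\star[B]})$, and summing the block minima gives $\min_{\mathcal{S}_{\mathrm{block}}}\Phi_i$. Finally, Definition~\ref{def:effective-correlation} says $\gblock{i}=-\VM^{-1}\min_{\mathcal{S}_{\mathrm{block}}}\Phi_i$, which yields \eqref{eq:gamma-block}.
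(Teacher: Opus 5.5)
Your proposal is correct and follows the same route as the paper's proof: the objective decouples into $B$ independent block subproblems, Lemma~\ref{lem:optimal-full} is applied to each in dimension $d_b$, and the block minima are summed and divided by $-\VM$. You also write out two checks the paper leaves implicit, namely that the off-diagonal blocks of $\Sigma_i$ and $\Sigma_{i,1}$ never reach a diagonal block of the products inside the traces, and that $M^{[b]}\succeq 0$ as a principal submatrix of $M$.
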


\begin{proof}
For block-diagonal $A=\blkdiag(A^{[1]},\dots,A^{[B]})$ and $M$, the
off-diagonal blocks of $\Sigma_i$ and $\Sigma_{i,1}$ drop out of the traces,
and the problem decouples into $B$ independent subproblems:
\[
\Phi_i(A)
=
\sum_{b=1}^B
\Big[
\tr\!\big(M^{[b]} A^{[b]} \Sigma_i^{[b]} (A^{[b]})^\top\big)
-2\tr\!\big(M^{[b]} A^{[b]} \Sigma_{i,1}^{[b]}\big)
\Big].
\]
The $b$th summand is exactly the objective of Lemma~\ref{lem:optimal-full} on $\mathbb{R}^{d_b\times d_b}$ with data $M^{[b]}$, $\Sigma_i^{[b]}$, and $\Sigma_{1,i}^{[b]}$, so it is minimized by
\[
A^{\star[b]}=\Sigma_{1,i}^{[b]}(\Sigma_i^{[b]})^{-1},
\quad\text{with minimum value}\quad
-\tr\!\big(M^{[b]}\Sigma_{1,i}^{[b]}(\Sigma_i^{[b]})^{-1}\Sigma_{i,1}^{[b]}\big).
\]
Summing over blocks and dividing by $-\VM$ gives \eqref{eq:gamma-block}.
\end{proof}

Thus each block of HF components is corrected only by the corresponding block of LF components; correlations \emph{within} a block are fully exploited, while correlations \emph{across} blocks are ignored. Only the within-block covariance matrices, of size $d_b\times d_b$, must be estimated and inverted. In applications the partition can be designed based on the problem: components belonging to the same physical field, the same spatial region, or the same coupled subsystem are grouped together.

\begin{proposition}[Optimal diagonal control variate]
\label{prop:diag}
Suppose that $M$ is diagonal, $M=\diag(\omega_1,\dots,\omega_d)$ with $\omega_q\ge 0$, and assume $(\Sigma_i)_{qq}>0$ for all $q$. Then
\[
A_i^\star = \diag(a_{i,1}^\star,\dots,a_{i,d}^\star),
\qquad
a_{i,q}^\star = \frac{(\Sigma_{1,i})_{qq}}{(\Sigma_i)_{qq}},
\quad q=1,\dots,d,
\]
minimizes $\Phi_i$ over $\mathcal{S}_{\mathrm{diag}}$, and the corresponding effective correlation is
\begin{equation}
\label{eq:gamma-diag}
\gdiag{i}
=
\frac{1}{\VM}
\sum_{q=1}^d
\omega_q\,
\frac{(\Sigma_{1,i})_{qq}^2}{(\Sigma_i)_{qq}} .
\end{equation}
\end{proposition}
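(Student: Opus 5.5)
The plan is to mirror the proof of Proposition~\ref{prop:block}: once both $A$ and $M$ are diagonal, the objective $\Phi_i$ in \eqref{eq:cv-problem} splits into $d$ independent scalar quadratics, one per component. Each of these is then minimized by completing the square. Equivalently, the diagonal class is the block-diagonal class with $d$ singleton blocks $\mathcal{I}_q=\{q\}$, and a diagonal $M$ is block-diagonal with $1\times 1$ blocks $M^{[q]}=\omega_q$. Proposition~\ref{prop:block} would then give the result directly, with $\Sigma_i^{[q]}=(\Sigma_i)_{qq}>0$ and $\Sigma_{1,i}^{[q]}=(\Sigma_{1,i})_{qq}$. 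I will present this reduction as the main argument and add the explicit scalar computation as a check.

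For the explicit computation, write $A=\diag(a_1,\dots,a_d)$. Then
\[
\tr(MA\Sigma_iA^\top)=\sum_{q}\omega_q a_q^2(\Sigma_i)_{qq},
\qquad
\tr(MA\Sigma_{i,1})=\sum_q \omega_q a_q(\Sigma_{i,1})_{qq}.
\]
Since $(\Sigma_{i,1})_{qq}=(\Sigma_{1,i}^\top)_{qq}=(\Sigma_{1,i})_{qq}$, we obtain
\[
\Phi_i(A)=\sum_{q=1}^d \omega_q\Big[(\Sigma_i)_{qq}a_q^2-2(\Sigma_{1,i})_{qq}a_q\Big].
\]
Completing the square in each $a_q$, the $q$th term equals $\omega_q(\Sigma_i)_{qq}(a_q-a^\star_{i,q})^2-\omega_q(\Sigma_{1,i})_{qq}^2/(\Sigma_i)_{qq}$. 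Because $\omega_q\ge0$ and $(\Sigma_i)_{qq}>0$, the first part is nonnegative and vanishes at $a_q=a^\star_{i,q}$. So $A_i^\star$ is a global minimizer over $\mathcal{S}_{\mathrm{diag}}$, with minimum value $-\sum_q\omega_q(\Sigma_{1,i})_{qq}^2/(\Sigma_i)_{qq}$. Dividing by $-\VM$, as in Definition~\ref{def:effective-correlation}, gives \eqref{eq:gamma-diag}.

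There is no real obstacle here. The only points needing care are the two trace identities for diagonal $A$ and $M$, the transpose symmetry of the diagonal entries of $\Sigma_{1,i}$, and components with $\omega_q=0$. For such a component the minimizer in $a_q$ is not unique, but the stated $a^\star_{i,q}$ still attains the minimum, so the claim, which asserts minimization rather than uniqueness, is unaffected.
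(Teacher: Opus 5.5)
Your proposal is correct and matches the paper's proof, which is exactly the reduction to Proposition~\ref{prop:block} with singleton blocks $\mathcal{I}_q=\{q\}$, $M^{[q]}=\omega_q$, $\Sigma_i^{[q]}=(\Sigma_i)_{qq}$, and $\Sigma_{1,i}^{[q]}=(\Sigma_{1,i})_{qq}$. Your componentwise completion of the square is a correct, self-contained check of the same result, and you rightly note that components with $\omega_q=0$ affect only uniqueness, not optimality.
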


\begin{proof}
A diagonal matrix is block-diagonal when each block consists of a single component, and a diagonal $M$ is block-diagonal with respect to the same partition. Proposition~\ref{prop:block} with $d_b=1$, $M^{[q]}=\omega_q$, $\Sigma_i^{[q]}=(\Sigma_i)_{qq}$, and $\Sigma_{1,i}^{[q]}=(\Sigma_{1,i})_{qq}$ gives the result.
\end{proof}

Here each component of the HF output is corrected only by the \emph{same} component of the LF output, with its own regression coefficient $a_{i,q}^\star$; cross-component covariance information is not used at all, and only the $2d$ scalars $(\Sigma_i)_{qq}$, $(\Sigma_{1,i})_{qq}$ need to be estimated, and no matrix inversion is required.

\begin{remark}[Connection to componentwise MFMC]
The diagonal estimator recovers the spatially dependent MFMC estimator of Quaglino et al.\ \cite{quaglino2019high}, in which an independent scalar coefficient is attached to each output component and a spatially integrated mean-squared error is minimized; the weights $\omega_q$ play the role of the spatial quadrature weights. The present framework places that construction inside a hierarchy in which it can be compared with more and less expressive coefficient structures.
\end{remark}

\begin{proposition}[Optimal scalar control variate]
\label{prop:scalar}
Assume $\tr(M \Sigma_i) > 0$. Then $A_i^\star=\alpha_i^\star I$ with
\[
\alpha_i^\star
=
\frac{\tr(M \Sigma_{i,1})}{\tr(M \Sigma_i)}
\]
minimizes $\Phi_i$ over $\mathcal{S}_{\mathrm{scalar}}$, and the corresponding effective correlation is
\begin{equation}
\label{eq:gamma-scalar}
\gscalar{i}
=
\frac{
\tr(M \Sigma_{i,1})^2
}{
\VM \, \tr(M \Sigma_i)
} .
\end{equation}
\end{proposition}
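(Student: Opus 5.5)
The plan is to substitute $A=\alpha I$ into the objective $\Phi_i$ of \eqref{eq:cv-problem}, which turns the matrix problem into minimizing a scalar quadratic in $\alpha$. With $A=\alpha I$,
\[
\Phi_i(\alpha I)=\alpha^2\,\tr(M\Sigma_i)-2\alpha\,\tr(M\Sigma_{i,1}).
\]
This is a univariate quadratic. By assumption its leading coefficient $\tr(M\Sigma_i)$ is strictly positive, so the quadratic is strictly convex in $\alpha$. Its unique minimizer is therefore found by setting the derivative to zero, which gives $\alpha_i^\star=\tr(M\Sigma_{i,1})/\tr(M\Sigma_i)$.

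Next, I would complete the square:
\[
\Phi_i(\alpha I)=\tr(M\Sigma_i)\bigl(\alpha-\alpha_i^\star\bigr)^2-\frac{\tr(M\Sigma_{i,1})^2}{\tr(M\Sigma_i)}.
\]
This shows that the minimum over $\mathcal{S}_{\mathrm{scalar}}$ equals $-\tr(M\Sigma_{i,1})^2/\tr(M\Sigma_i)$. Substituting into Definition~\ref{def:effective-correlation}, i.e.\ multiplying by $-1/\VM$, yields \eqref{eq:gamma-scalar}.

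The only point that needs care is whether the result depends on which of $\Sigma_{i,1}$ or $\Sigma_{1,i}$ appears. Since $M$ is symmetric and $\Sigma_{1,i}=\Sigma_{i,1}^\top$, we have
\[
\tr(M\Sigma_{i,1})=\tr\bigl((M\Sigma_{i,1})^\top\bigr)=\tr(\Sigma_{1,i}M)=\tr(M\Sigma_{1,i}),
\]
so the two forms agree.

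No step is a genuine obstacle. The positivity hypothesis on $\tr(M\Sigma_i)$ is exactly what guarantees existence and uniqueness of the minimizer.
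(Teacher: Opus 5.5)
Your proposal is correct and follows the paper's proof essentially step for step. You substitute $A=\alpha I$ to obtain the quadratic $\alpha^2\tr(M\Sigma_i)-2\alpha\tr(M\Sigma_{i,1})$, use the positive leading coefficient to locate the minimizer and the minimum value, and divide by $-\VM$; your explicit completion of the square and the check that $\tr(M\Sigma_{i,1})=\tr(M\Sigma_{1,i})$ are harmless details that the paper leaves implicit.
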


\begin{proof}
Substituting $A=\alpha I$ into $\Phi_i$ gives the scalar quadratic
\[
\Phi_i(\alpha I)
=
\alpha^2 \tr(M \Sigma_i)
- 2 \alpha \tr(M \Sigma_{i,1}),
\]
whose leading coefficient $\tr(M\Sigma_i)$ is positive. It is minimized at
\[
\alpha_i^\star = \frac{\tr(M \Sigma_{i,1})}{\tr(M \Sigma_i)},
\quad\text{with minimum value}\quad
-\frac{\tr(M \Sigma_{i,1})^2}{\tr(M \Sigma_i)}.
\]
Dividing by $-\VM$ gives \eqref{eq:gamma-scalar}.
\end{proof}

The scalar class uses a single coefficient shared by all components: it can only rescale the LF correction as a whole, and its effective correlation involves the covariances only through two aggregate traces. It is the cheapest structure to estimate, with one coefficient per fidelity level.

Since the four classes are nested, Lemma~\ref{lem:gamma-properties} immediately orders their effective correlations.

\begin{corollary}[Hierarchy of effective correlations]\label{corollary:Hierarchy of effective correlations}
For every weighting matrix $M\succeq 0$ and each $i=2,\dots,k$,
\[
\gscalar{i}
\le
\gdiag{i}
\le
\gblock{i}
\le
\gfull{i}
\le 1 .
\]
If $M$ is diagonal, all four quantities are given by the closed forms of Lemma~\ref{lem:optimal-full} and Propositions~\ref{prop:block}--\ref{prop:scalar} (a diagonal matrix is block-diagonal with respect to any partition).
\end{corollary}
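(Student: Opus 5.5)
The plan is to obtain the chain of inequalities from Lemma~\ref{lem:gamma-properties}, applied to consecutive members of the nested hierarchy, and then to check that the closed forms are valid whenever $M$ is diagonal.

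\emph{Ordering.} First I verify the set inclusions
$\mathcal{S}_{\mathrm{scalar}}\subseteq\mathcal{S}_{\mathrm{diag}}\subseteq\mathcal{S}_{\mathrm{block}}\subseteq\mathcal{S}_{\mathrm{full}}$. A multiple $\alpha I$ is diagonal. A diagonal matrix is block-diagonal for any partition, since each diagonal block $A^{[b]}$ is itself diagonal. Every matrix lies in $\mathbb{R}^{d\times d}$. All four classes contain $0$. Lemma~\ref{lem:gamma-properties} with $(\mathcal{S}',\mathcal{S})=(\mathcal{S}_{\mathrm{scalar}},\mathcal{S}_{\mathrm{diag}})$ then gives $\gscalar{i}\le\gdiag{i}$. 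The pair $(\mathcal{S}_{\mathrm{diag}},\mathcal{S}_{\mathrm{block}})$ gives $\gdiag{i}\le\gblock{i}\le\gfull{i}\le1$.

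One point needs care: the upper bound $\gfull{i}\le 1$ in that lemma uses $\Sigma_i\succ0$. This is supplied by Assumption~\ref{ass:standing}(ii), because $\Sigma_i^{-1}$ appears in the full-class formula. The inequalities among the minima hold for any $M\succeq0$, since they come from Definition~\ref{def:effective-correlation} and only compare minima over nested sets. Whether $\Phi_i$ attains its minimum over each class is a minor issue. $\Phi_i$ is a convex quadratic on a linear subspace and is bounded below by its full-class minimum, so its infimum is attained. Alternatively, one can read the minimum in the definition as an infimum, and the monotonicity argument is unchanged.

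\emph{Closed forms.} Assume $M=\diag(\omega_1,\dots,\omega_d)$ is diagonal. Then $M$ is block-diagonal for any partition, because each diagonal block $M^{[b]}$ is diagonal. Hence Proposition~\ref{prop:block} applies and gives \eqref{eq:gamma-block}, provided $\Sigma_i^{[b]}\succ0$. This holds because a principal submatrix of $\Sigma_i\succ0$ is positive definite. The same fact gives $(\Sigma_i)_{qq}>0$, so Proposition~\ref{prop:diag} gives \eqref{eq:gamma-diag}. For the scalar class, Proposition~\ref{prop:scalar} requires $\tr(M\Sigma_i)=\sum_q\omega_q(\Sigma_i)_{qq}>0$. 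This holds because $M\ne0$, which follows from $\VM>0$, and every $(\Sigma_i)_{qq}>0$. The full class is covered by Lemma~\ref{lem:optimal-full}.

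I expect no real obstacle. The only steps needing attention are the class inclusions and the checks that the hypotheses of each proposition hold, namely that principal blocks of $\Sigma_i$ are positive definite and that $\tr(M\Sigma_i)>0$.
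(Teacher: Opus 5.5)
Your proposal is correct and follows the paper's proof, which simply applies Lemma~\ref{lem:gamma-properties} along the inclusions $\mathcal{S}_{\mathrm{scalar}}\subseteq\mathcal{S}_{\mathrm{diag}}\subseteq\mathcal{S}_{\mathrm{block}}\subseteq\mathcal{S}_{\mathrm{full}}$. Your extra checks (attainment of the minima, positive definiteness of the principal blocks of $\Sigma_i$, and $\tr(M\Sigma_i)>0$) supply hypotheses the paper leaves implicit; the one slip is cosmetic: a diagonal matrix is block-diagonal for any partition because its entries outside the diagonal blocks vanish, not because its diagonal blocks are diagonal.
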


\begin{proof}
Immediate from Lemma~\ref{lem:gamma-properties} and the inclusions
\[
\mathcal{S}_{\mathrm{scalar}}\subseteq
\mathcal{S}_{\mathrm{diag}}\subseteq
\mathcal{S}_{\mathrm{block}}\subseteq
\mathcal{S}_{\mathrm{full}}.
\]
\end{proof}

At the population level, a more expressive class is therefore never worse.
In practice, however, the covariance quantities that define the optimal
coefficients must be estimated from pilot data, and a more expressive class requires
more sample data. \Cref{tab:class-cost} summarizes, for each class, the
weighting matrices for which the closed form holds, the optimal coefficient,
the number of covariance entries to be estimated for each low-fidelity model,
and the largest matrix to be inverted. In high dimension the last column is
the binding constraint, because a sample covariance computed from $n_{\mathrm{pilot}}$ samples has
rank at most $n_{\mathrm{pilot}}-1$, so the pilot estimate of a $q\times q$
block is singular unless $n_{\mathrm{pilot}}>q$. The full class therefore
requires $n_{\mathrm{pilot}}>d$, whereas the block class only requires
$n_{\mathrm{pilot}}>\max_b d_b$ and the scalar and diagonal classes require
no inversion at all.

\begin{table}[htbp]
\caption{Requirements of the four structural classes. The entries to
estimate are those of $\Sigma_i$ and $\Sigma_{1,i}$ required for each
low-fidelity model $i$; the weighted high-fidelity variance
$\VM=\tr(M\Sigma_1)$ is needed by all classes. The plug-in estimate of the
largest inverse is nonsingular only if $n_{\mathrm{pilot}}$ exceeds its size;
$\bar d:=\max_b d_b$ denotes the largest block size.}
\label{tab:class-cost}
\centering
\small
\setlength{\tabcolsep}{4pt}
\begin{tabular}{@{}lllll@{}}
\toprule
Class & $M$ for closed form & Optimal $A_i^\star$ & Entries to estimate & Largest inverse \\
\midrule
scalar & any & $\dfrac{\tr(M\Sigma_{i,1})}{\tr(M\Sigma_i)}\,I$ & $2$ traces & none \\[1.2em]
diagonal & diagonal & $\diag\!\Big(\dfrac{(\Sigma_{1,i})_{qq}}{(\Sigma_i)_{qq}}\Big)$ & $2d$ & none \\[1.2em]
block & block-diagonal & $\blkdiag\big(\Sigma_{1,i}^{[b]}(\Sigma_i^{[b]})^{-1}\big)$ & $\mathcal{O}\big(\sum_b d_b^2\big)$ & $\bar d\times\bar d$ \\[0.6em]
full & any & $\Sigma_{1,i}\Sigma_i^{-1}$ & $\mathcal{O}(d^2)$ & $d\times d$ \\
\bottomrule
\end{tabular}
\end{table}

\subsection{Optimal sample allocation}
\label{sec:allocation}

We now turn to the second design question: how to distribute the budget $p$ over the fidelity levels. Fix a structural class $\mathcal{S}$ (with the corresponding assumption on $M$ from Section~\ref{sec:reduced-classes}) and abbreviate $\gamma_{1,i} := \gamS{i}$. By Definition~\ref{def:effective-correlation}, optimizing the control variates at fixed sample sizes replaces each $\Phi_i(A_i)$ in \eqref{eq:mse-separable} by its minimum $-\VM\,\gamma_{1,i}$, so the partially optimized MSE is
\[
\mathcal{E}_M^\star(m)
:=
\min_{A_2,\dots,A_k\in\mathcal{S}}
\mathcal{E}_M(\muMF)
=
\frac{\VM}{m_1}
-
\VM
\sum_{i=2}^k
\left(\frac{1}{m_{i-1}}-\frac{1}{m_i}\right)
\gamma_{1,i}.
\]
Collecting the coefficient of each $1/m_i$ (a telescoping regrouping) puts this into the separable form
\begin{equation}
\label{eq:mse-xi}
\mathcal{E}_M^\star(m)
=
\VM \sum_{i=1}^k \frac{\xi_i}{m_i},
\qquad
\xi_i := \gamma_{1,i}-\gamma_{1,i+1},
\quad i=1,\dots,k,
\end{equation}
with the two boundary conventions
\[
\gamma_{1,1}:=1,
\qquad
\gamma_{1,k+1}:=0.
\]
Each coefficient $\xi_i$ measures the variance-reduction gap between two consecutive fidelity levels: it is the share of the weighted HF variance that model $i$ can explain but model $i+1$ cannot, and it is exactly this share that must be paid for with samples at level $i$.

The allocation problem is now identical in form to the scalar one, with $\rho_{1,i}^2$ replaced by $\gamma_{1,i}$, and we solve it in this general form. Since the statement depends on the class $\mathcal{S}$ only through the numbers $\gamma_{1,i}$, a single lemma covers all four structural classes.

\begin{lemma}[Sample allocation for a given effective-correlation sequence]\label{thm:unified_sample_allocation}
Let the effective correlations satisfy the ordering
\[
1=\gamma_{1,1}
>
\gamma_{1,2}
>
\cdots
>
\gamma_{1,k}
>
\gamma_{1,k+1}=0,
\]
so that $\xi_i = \gamma_{1,i}-\gamma_{1,i+1} > 0$ for $i=1,\dots,k$, and assume the cost--correlation condition
\begin{equation}
\label{eq:cost-condition}
\frac{w_i}{w_{i+1}}
\ge
\frac{\xi_i}{\xi_{i+1}}
=
\frac{\gamma_{1,i}-\gamma_{1,i+1}}
{\gamma_{1,i+1}-\gamma_{1,i+2}},
\qquad i=1,\dots,k-1.
\end{equation}
Then the constrained allocation problem
\[
\min_{m_1,\dots,m_k}
\;
\VM\sum_{i=1}^k\frac{\xi_i}{m_i}
\qquad
\text{subject to}
\qquad
\sum_{i=1}^k w_i m_i=p,
\qquad
0<m_1\le \cdots \le m_k,
\]
has the unique solution
\[
m_i^\star = m_1^\star r_i,
\qquad
r_i
=
\sqrt{\frac{w_1\xi_i}{w_i\xi_1}},
\qquad
m_1^\star
=
\frac{p}{\sum_{j=1}^k w_j r_j},
\qquad i=1,\dots,k,
\]
with $r_1=1$. The corresponding optimal MSE is
\begin{equation}
\label{eq:optimal-mse}
\mathcal{E}_M^\star
=
\frac{\VM}{p}
\left(
\sum_{i=1}^k \sqrt{w_i \xi_i}
\right)^{\!2}.
\end{equation}
\end{lemma}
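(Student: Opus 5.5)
The plan is to relax the ordering constraint $m_1\le\cdots\le m_k$, solve the resulting problem with only the budget constraint in closed form, and then use the cost--correlation condition \eqref{eq:cost-condition} to check that the unconstrained minimizer is already ordered. Since $\xi_i>0$ and $w_i>0$, the relaxed objective $J(m)=\VM\sum_i \xi_i/m_i$ is strictly convex on the open orthant $m_i>0$. The feasible set $\{\sum_i w_i m_i=p,\ m>0\}$ is convex, so any stationary point of the Lagrangian is the unique global minimizer of the relaxed problem.

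First, I will write the stationarity conditions $-\VM\xi_i/m_i^2+\lambda w_i=0$. They give $m_i=\sqrt{\VM\xi_i/(\lambda w_i)}$, so $m_i/m_1=\sqrt{w_1\xi_i/(w_i\xi_1)}=r_i$. Substituting into the budget constraint fixes $m_1^\star=p/\sum_j w_j r_j$.

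Alternatively, and more cleanly, I will prove optimality directly with Cauchy--Schwarz. For any $m>0$ satisfying the budget,
\[
\Bigl(\sum_i \sqrt{w_i\xi_i}\Bigr)^2=\Bigl(\sum_i \sqrt{w_i m_i}\sqrt{\xi_i/m_i}\Bigr)^2\le \Bigl(\sum_i w_i m_i\Bigr)\Bigl(\sum_i \xi_i/m_i\Bigr)=p\sum_i\xi_i/m_i .
\]
Equality holds exactly when $\xi_i/m_i\propto w_i m_i$, that is, $m_i\propto\sqrt{\xi_i/w_i}$. This gives existence, uniqueness (the proportionality constant is pinned down by the budget) and the value \eqref{eq:optimal-mse} in one stroke. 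As a check, $\sum_j w_j r_j=\sqrt{w_1/\xi_1}\sum_j\sqrt{w_j\xi_j}$ reproduces the formula for $m_1^\star$.

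The remaining step is feasibility of the ordering. I need $r_i\le r_{i+1}$, i.e.\ $\xi_i/w_i\le\xi_{i+1}/w_{i+1}$, which is exactly \eqref{eq:cost-condition} rearranged to $w_i/w_{i+1}\ge\xi_i/\xi_{i+1}$. Hence the relaxed minimizer satisfies $0<m_1^\star\le\cdots\le m_k^\star$. A unique minimizer over a larger set that lies in the smaller set is also the unique minimizer over the smaller set, which concludes the proof.

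The only delicate point I expect is this final transfer of uniqueness and optimality from the relaxed problem to the ordered one. A second subtlety is that the condition uses $\ge$ rather than the strict $>$ of Theorem~\ref{thm:scalar-mfmc}. This is harmless here, since it only permits ties $m_i^\star=m_{i+1}^\star$, and ties are allowed in the constraint set. I would also note that $m_i$ is treated as real-valued, as in the scalar theory.
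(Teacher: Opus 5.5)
Your proposal is correct and follows the paper's proof essentially step for step: relax the ordering constraints, obtain the unique minimizer of the strictly convex relaxed problem from the Lagrangian stationarity conditions, observe that the cost--correlation condition is equivalent to $m_i^\star\le m_{i+1}^\star$, and transfer optimality and uniqueness back to the ordered problem. Your Cauchy--Schwarz variant is a tidy alternative for the relaxed step: it gives optimality, uniqueness, and the value \eqref{eq:optimal-mse} at once, whereas the paper obtains the value by substituting $m^\star$ into the objective.
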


\begin{proof}
Since $\VM>0$ is a constant, it can be dropped from the objective. Consider first the \emph{relaxed} problem obtained by discarding the monotonicity constraints:
\[
\min_{m_i>0}\sum_{i=1}^k \frac{\xi_i}{m_i}
\qquad\text{subject to}\qquad
\sum_{i=1}^k w_i m_i=p.
\]
The objective is strictly convex on the feasible set (each term $\xi_i/m_i$ is strictly convex in $m_i>0$ because $\xi_i>0$) and the constraint is linear, so a stationary point of the Lagrangian
\[
L(m,\lambda)
=
\sum_{i=1}^k \frac{\xi_i}{m_i}
+
\lambda\left(\sum_{i=1}^k w_i m_i-p\right)
\]
is the unique global minimizer. The first-order conditions
\[
-\frac{\xi_i}{m_i^2}+\lambda w_i=0
\qquad\Longrightarrow\qquad
m_i
=
\sqrt{\frac{\xi_i}{\lambda w_i}},
\]
combined with the budget constraint, yield
\[
m_i^\star
=
\frac{
p\sqrt{\xi_i/w_i}
}{
\sum_{j=1}^k \sqrt{\xi_j w_j}
},
\qquad i=1,\dots,k,
\]
which is the stated solution written relative to $m_1^\star$, with
\[
r_i=\frac{m_i^\star}{m_1^\star}=\sqrt{\frac{w_1\xi_i}{w_i\xi_1}} .
\]

It remains to check that the relaxed minimizer is feasible for the original problem, i.e.\ that $m_1^\star\le\cdots\le m_k^\star$. From the closed form,
\[
\frac{m_i^\star}{m_{i+1}^\star}
=
\sqrt{
\frac{\xi_i/w_i}{\xi_{i+1}/w_{i+1}}
},
\]
so the cost--correlation condition \eqref{eq:cost-condition} is equivalent to $m_i^\star\le m_{i+1}^\star$ for $i=1,\dots,k-1$. Hence the relaxed minimizer satisfies the monotonicity constraints; being the unique global minimizer of a problem with a larger feasible set, it is also the unique global minimizer of the original problem.

Finally, substituting $m_i^\star = p\sqrt{\xi_i/w_i}\,\big/\sum_j\sqrt{\xi_j w_j}$ into the objective gives
\[
\sum_{i=1}^k\frac{\xi_i}{m_i^\star}
=
\frac{1}{p}
\left(\sum_{i=1}^k\sqrt{w_i\xi_i}\right)
\left(\sum_{j=1}^k\sqrt{w_j\xi_j}\right)
=
\frac{1}{p}\left(\sum_{i=1}^k \sqrt{w_i \xi_i}\right)^{\!2},
\]
which is \eqref{eq:optimal-mse} after restoring the factor $\VM$.
\end{proof}

Combining the optimal coefficients of Section~\ref{sec:optimal-cv} with Lemma~\ref{thm:unified_sample_allocation} gives the main result of this paper: a complete, closed-form vector-valued MFMC design.

\begin{theorem}[Optimal vector-valued MFMC estimator]
\label{thm:main-vvmfmc}
Fix one of the four structural classes of Section~\ref{sec:optimal-cv},
\[
\mathcal{S}\in\{\mathcal{S}_{\mathrm{scalar}},\,\mathcal{S}_{\mathrm{diag}},\,\mathcal{S}_{\mathrm{block}},\,\mathcal{S}_{\mathrm{full}}\},
\]
together with the corresponding assumption on $M$, and let $\gamma_{1,i}=\gamS{i}$ be the effective correlations of Definition~\ref{def:effective-correlation}, computed in closed form in \eqref{eq:gamma-full} and \eqref{eq:gamma-block}--\eqref{eq:gamma-scalar}. Assume the models are ordered so that
\[
1=\gamma_{1,1}>\gamma_{1,2}>\cdots>\gamma_{1,k}>\gamma_{1,k+1}=0,
\]
and that the costs satisfy the cost--correlation condition \eqref{eq:cost-condition}. Then, over all coefficient matrices $A_2,\dots,A_k\in\mathcal{S}$ and all sample sizes satisfying $0<m_1\le\cdots\le m_k$ and the budget constraint $\sum_{i=1}^k w_i m_i = p$, the weighted MSE $\mathcal{E}_M(\muMF)$ is minimized by
\begin{itemize}
\item the optimal coefficients $A_i^\star$ of Lemma~\ref{lem:optimal-full} (full class) or Propositions~\ref{prop:block}--\ref{prop:scalar} (reduced classes), and
\item the sample allocation
\[
m_i^\star = m_1^\star r_i,
\qquad
r_i
=
\sqrt{
\frac{
w_1\bigl(
\gamma_{1,i}
-
\gamma_{1,i+1}
\bigr)
}{
w_i\bigl(
1-\gamma_{1,2}
\bigr)
}
},
\qquad
m_1^\star
=
\frac{p}{\sum_{j=1}^k w_j r_j}.
\]
\end{itemize}
The resulting optimal MSE is given by \eqref{eq:optimal-mse} with $\xi_i=\gamma_{1,i}-\gamma_{1,i+1}$.
\end{theorem}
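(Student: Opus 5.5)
The proof reduces the joint minimization over coefficients and sample sizes to the two decoupled problems already solved, using the separable form of Corollary~\ref{corollary: MSE matrix cv}. The weighted MSE is $\VM/m_1+\sum_{i\ge2}\delta_i\Phi_i(A_i)$. Here $\delta_i\ge0$ depends only on $m$, and $\Phi_i$ depends only on $A_i$. The plan is to bound the MSE from below, for every feasible pair $(\{A_i\},m)$, by a quantity that the proposed design attains.

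\emph{Step 1 (inner minimization).} Fix any feasible $m$ with $0<m_1\le\cdots\le m_k$. Since $\delta_i\ge 0$, each term obeys $\delta_i\Phi_i(A_i)\ge \delta_i\min_{A\in\mathcal S}\Phi_i(A)=-\delta_i\VM\gamma_{1,i}$, by Definition~\ref{def:effective-correlation}. Equality holds at the class-optimal $A_i^\star$. These are given by Lemma~\ref{lem:optimal-full} for the full class and by Propositions~\ref{prop:block}, \ref{prop:diag}, \ref{prop:scalar} for the reduced classes, under the corresponding assumption on $M$ and the invertibility hypotheses. The $A_i^\star$ do not depend on $m$. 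Hence $\mathcal{E}_M(\muMF)\ge\mathcal{E}_M^\star(m)$, with equality at $A_i=A_i^\star$ simultaneously for every $m$.

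\emph{Step 2 (outer minimization).} Apply the telescoping regrouping to write $\mathcal{E}_M^\star(m)=\VM\sum_i\xi_i/m_i$ as in \eqref{eq:mse-xi}. This regrouping is an algebraic identity, and I will verify it by collecting the coefficient of $1/m_i$: it equals $\gamma_{1,i}-\gamma_{1,i+1}$, using the conventions $\gamma_{1,1}=1$ and $\gamma_{1,k+1}=0$. The ordering hypothesis gives $\xi_i>0$. Together with the cost--correlation condition \eqref{eq:cost-condition}, Lemma~\ref{thm:unified_sample_allocation} then yields the unique minimizer $m^\star$ and the optimal value \eqref{eq:optimal-mse}. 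It remains to match the stated ratios to those of the lemma. Since $\xi_1=\gamma_{1,1}-\gamma_{1,2}=1-\gamma_{1,2}$, the lemma's $r_i=\sqrt{w_1\xi_i/(w_i\xi_1)}$ coincides with the $r_i$ in the theorem.

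\emph{Step 3 (combining).} For any feasible $(\{A_i\},m)$ the chain is $\mathcal{E}_M(\muMF)\ge\mathcal{E}_M^\star(m)\ge\mathcal{E}_M^\star(m^\star)$, and $(\{A_i^\star\},m^\star)$ attains equality throughout, which proves the claim. The main point needing care is Step 1. The pointwise inequality requires $\delta_i\ge0$, which follows from nestedness. It also requires that the minimum over $\mathcal S$ actually exists and equals the closed form, which I take from the earlier results under their hypotheses ($\Sigma_i^{[b]}\succ0$, $(\Sigma_i)_{qq}>0$, $\tr(M\Sigma_i)>0$). I would add a remark that when $\delta_i=0$ (i.e.\ $m_{i-1}=m_i$) any $A_i$ is optimal, so the minimizer is claimed only in the sense of attaining the minimum. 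The rest is bookkeeping.
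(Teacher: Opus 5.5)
Your proposal is correct and matches the paper's proof: at fixed $m$ you minimize termwise over the coefficients using $\delta_i\ge 0$ and Definition~\ref{def:effective-correlation}, and then you solve the resulting allocation problem with Lemma~\ref{thm:unified_sample_allocation}. You also check explicitly that $\xi_1=1-\gamma_{1,2}$ makes the lemma's ratios coincide with the stated $r_i$, and you note that any $A_i$ is optimal at levels with $\delta_i=0$; both are small points the paper leaves implicit.
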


\begin{proof}
The minimization proceeds in two stages, because the optimal coefficients do not depend on the sample sizes. For any fixed feasible $m$, Corollary~\ref{corollary: MSE matrix cv} expresses $\mathcal{E}_M$ as $\VM/m_1$ plus a sum of terms $\delta_i\Phi_i(A_i)$ with $\delta_i\ge 0$, which are minimized termwise by the class-optimal coefficients $A_i^\star$; by Definition~\ref{def:effective-correlation} the resulting value is $\mathcal{E}_M^\star(m)$ in \eqref{eq:mse-xi}. Minimizing $\mathcal{E}_M^\star(m)$ over the feasible sample sizes is exactly the problem solved by Lemma~\ref{thm:unified_sample_allocation}.
\end{proof}

\begin{remark}[Exact analogy with scalar MFMC]
Theorem~\ref{thm:main-vvmfmc} has the same structure as the scalar result (Theorem~\ref{thm:scalar-mfmc}): the allocation ratios, the cost--correlation ordering condition, and the model-selection logic of \cite{peherstorfer2016optimal} all carry over directly once $\rho_{1,i}^2$ is replaced by $\gamma_{1,i}$. In particular, for $d=1$ the two theorems coincide (Remark~\ref{rem:pearson}). The model-management procedure developed for scalar MFMC, including the model-selection step that searches over subsets of models, can therefore be generalized for vector-valued outputs.
\end{remark}

\subsection{The price of structural reduction}
\label{sec:reduction-loss}

The hierarchy of Corollary~\ref{corollary:Hierarchy of effective correlations} states that more general coefficient classes achieve larger effective correlations. The next result shows that this ordering carries over to the MSE.

\begin{corollary}[Variance-reduction loss induced by reduced control variates]
\label{cor:reduction-loss}
Let $\mathcal{S}_{\mathrm{red}}\subseteq\mathbb{R}^{d\times d}$ be a structural class containing $0$. For a class $\mathcal{S}$ and a nested allocation $m$ with $0<m_1\le\cdots\le m_k$, let
\[
\mathcal{E}^{\mathcal{S}}_M(m):=\min_{A_2,\dots,A_k\in\mathcal{S}}\mathcal{E}_M(\muMF),
\]
and let
\[
\mathcal{E}^{\mathcal{S},\star}_M
:=
\min\Bigl\{\mathcal{E}^{\mathcal{S}}_M(m):\;
0<m_1\le\cdots\le m_k,\ \textstyle\sum_{i=1}^k w_i m_i=p\Bigr\}
\]
denote the MSE when both the coefficients and the allocation are optimized for $\mathcal{S}$.
\begin{enumerate}
\item[(a)] With each class using its own optimal allocation,
\[
\mathcal{E}^{\mathrm{full},\star}_M \le \mathcal{E}^{\mathrm{red},\star}_M .
\]
\item[(b)] For a fixed nested allocation $m$, with $\delta_i=1/m_{i-1}-1/m_i\ge 0$,
\[
\mathcal{E}^{\mathrm{red}}_M(m)-\mathcal{E}^{\mathrm{full}}_M(m)
=
\VM
\sum_{i=2}^k
\delta_i
\left(
\gfull{i}
-
\gamma^{\mathcal{S}_{\mathrm{red}}}_{1,i}
\right) \;\ge\; 0 .
\]
\end{enumerate}
\end{corollary}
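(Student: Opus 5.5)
I would prove part (b) first, since (a) follows from it by a simple comparison.

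For part (b), fix a nested allocation $m$. By Corollary~\ref{corollary: MSE matrix cv}, $\mathcal{E}_M(\muMF)=\VM/m_1+\sum_{i\ge2}\delta_i\Phi_i(A_i)$. Here the $\delta_i\ge0$ depend only on $m$, and each $\Phi_i$ depends only on $A_i$. So minimizing over $(A_2,\dots,A_k)\in\mathcal{S}^{k-1}$ splits into independent minimizations of $\Phi_i$ over $\mathcal{S}$. The minimum of a sum of functions of separate variables equals the sum of the minima, and a nonnegative factor $\delta_i$ commutes with the minimum; if $\delta_i=0$ the term vanishes under either class.

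Definition~\ref{def:effective-correlation} gives $\min_{A\in\mathcal{S}}\Phi_i(A)=-\VM\gamS{i}$ for any class containing $0$. Hence
\[
\mathcal{E}^{\mathcal{S}}_M(m)=\frac{\VM}{m_1}-\VM\sum_{i=2}^k\delta_i\,\gamS{i}.
\]
Applying this with $\mathcal{S}=\mathcal{S}_{\mathrm{red}}$ and with $\mathcal{S}=\mathcal{S}_{\mathrm{full}}$ and subtracting, the $\VM/m_1$ terms cancel. What remains is exactly the claimed identity.

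For nonnegativity, note $\mathcal{S}_{\mathrm{red}}\subseteq\mathcal{S}_{\mathrm{full}}$. Lemma~\ref{lem:gamma-properties} then gives $\gamma^{\mathcal{S}_{\mathrm{red}}}_{1,i}\le\gfull{i}$, and $\delta_i\ge0$ follows from $m_{i-1}\le m_i$.

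The only subtle point is whether the minima are attained. In the definition of $\gamS{i}$ the minimum is implicitly attained: $\Phi_i$ is a convex quadratic bounded below by Lemma~\ref{lem:optimal-full} on a linear subspace $\mathcal{S}$, so its infimum is attained. Even reading all minima as infima, the separability argument goes through unchanged.

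For part (a), let $m^{\mathrm{red}}$ be a feasible allocation attaining (or approaching) $\mathcal{E}^{\mathrm{red},\star}_M$. By (b),
\[
\mathcal{E}^{\mathrm{full},\star}_M\le\mathcal{E}^{\mathrm{full}}_M(m^{\mathrm{red}})\le\mathcal{E}^{\mathrm{red}}_M(m^{\mathrm{red}})=\mathcal{E}^{\mathrm{red},\star}_M.
\]
The first inequality holds because $m^{\mathrm{red}}$ is feasible for the full-class allocation problem, since both classes share the same constraint set. This argument does not need the ordering or cost conditions of Theorem~\ref{thm:main-vvmfmc}. If the minimum over allocations is not attained, I would take a minimizing sequence and pass to the infimum.
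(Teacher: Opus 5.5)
Your proof is correct and follows the paper's argument. Part (b) is the same separable minimization: you obtain $\mathcal{E}^{\mathcal{S}}_M(m)=\VM/m_1-\VM\sum_{i=2}^k\delta_i\,\gamS{i}$ for each class and subtract, and deriving (a) from the pointwise inequality $\mathcal{E}^{\mathrm{full}}_M(m)\le\mathcal{E}^{\mathrm{red}}_M(m)$ over the common feasible set is simply the paper's ``minimizing over a subset'' argument written out; your added remarks on attainment of the minima are harmless and only make explicit what the paper leaves implicit.
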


\begin{proof}
(a) By definition,
$\mathcal{E}^{\mathcal{S},\star}_M=\min_{m}\min_{A_2,\dots,A_k\in\mathcal{S}}\mathcal{E}_M(\muMF)$,
where $m$ ranges over the same set of feasible allocations for every class.
Since $\mathcal{S}_{\mathrm{red}}\subseteq\mathcal{S}_{\mathrm{full}}$, the minimization
for $\mathcal{S}_{\mathrm{red}}$ is over a subset of the one for $\mathcal{S}_{\mathrm{full}}$,
which gives $\mathcal{E}^{\mathrm{full},\star}_M\le\mathcal{E}^{\mathrm{red},\star}_M$.

(b) By Corollary~\ref{corollary: MSE matrix cv} and Definition~\ref{def:effective-correlation}, optimizing the coefficients within a class $\mathcal{S}$ at the fixed allocation $m$ gives
\[
\mathcal{E}^{\mathcal{S}}_M(m)
=
\frac{\VM}{m_1}
-
\VM
\sum_{i=2}^k
\delta_i\,\gamS{i}.
\]
Subtracting this identity for $\mathcal{S}=\mathcal{S}_{\mathrm{full}}$ from the same identity for $\mathcal{S}=\mathcal{S}_{\mathrm{red}}$ yields the stated formula. Every term is nonnegative because $\delta_i\ge 0$ for nested sample sizes and $\gamma^{\mathcal{S}_{\mathrm{red}}}_{1,i}\le\gfull{i}$ by Lemma~\ref{lem:gamma-properties}.
\end{proof}

Part~(a) shows that the full class is never worse than a reduced class, even when each class uses its own optimal allocation. Part~(b) quantifies the population-level cost of imposing reduced control variate matrix structure at a fixed allocation: the loss at level $i$ is proportional to the correlation gap $\gfull{i}-\gamma^{\mathcal{S}_{\mathrm{red}}}_{1,i}$, weighted by the sample allocation term $\delta_i$.
\subsection{Linearly mapped quantities of interest}
\label{sec:special-cases}

In many applications each model returns a high-dimensional state, but the
quantity of interest is a linear map of it,
\[
g^{(i)}(Z)=Bf^{(i)}(Z)\in\mathbb{R}^r,
\qquad
q:=\E[g^{(1)}(Z)]=B\mu,
\qquad
B\in\mathbb{R}^{r\times d},
\]
for example forces extracted from a pressure field, pointwise observations,
or averages over prescribed regions. An MFMC estimator of $q$ can either
\emph{project first}, building the control variates from the mapped outputs
$g^{(i)}$ only, or \emph{correct first}, building them from the full LF
states and applying $B$ only to the final estimate. For any estimator
$\widehat\mu$ of $\mu$,
\[
\E\|B\widehat\mu-q\|_2^2=\E\|\widehat\mu-\mu\|_{B^\top B}^2,
\]
so correcting first is the weighted problem of the previous sections with
$M=B^\top B$. The next result shows that nothing is lost by this
construction.

\begin{corollary}[Linearly mapped QoI]
\label{cor:linear-qoi}
For estimators
\[
\widehat q=B\bar y^{(1)}_{m_1}+\sum_{i=2}^k K_i\bigl(\bar y^{(i)}_{m_i}-\bar y^{(i)}_{m_{i-1}}\bigr),
\qquad K_i\in\mathbb{R}^{r\times d},
\]
the optimal coefficients are
$K_i^\star=B\Sigma_{1,i}\Sigma_i^{-1}=BA_i^\star$ for every sample
allocation. Hence minimizing $\E\|\widehat q-q\|_2^2$ over the $K_i$ and the
allocation is equivalent to minimizing $\mathcal{E}_M(\muMF)$ with
$M=B^\top B$ over the $A_i$ and the allocation.
\end{corollary}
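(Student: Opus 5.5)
The plan is to redo the covariance computation of Lemma~\ref{lem:cov_vector_mfmc} with $r\times d$ coefficients $K_i$ and leading term $B\bar y^{(1)}_{m_1}$, and then minimize levelwise as in Lemma~\ref{lem:optimal-full}. First, $\widehat q$ is unbiased for $q=B\mu$, because each correction has mean zero. So $\E\|\widehat q-q\|_2^2=\tr\Cov(\widehat q)$; this is Lemma~\ref{lem:mse-trace} with weight $I_r$ in $\mathbb{R}^r$. Replacing $\bar y^{(1)}_{m_1}$ by $B\bar y^{(1)}_{m_1}$ and $A_i$ by $K_i$ in the proof of Lemma~\ref{lem:cov_vector_mfmc} changes only the bilinear bookkeeping. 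The increments $\Delta_i$ remain mutually uncorrelated, $\Cov[\Delta_i]=\delta_i\Sigma_i$, and $\Cov[B\bar y^{(1)}_{m_1},\Delta_i]=-\delta_i B\Sigma_{1,i}$. Hence
\[
\E\|\widehat q-q\|_2^2=\frac{\tr(B\Sigma_1B^\top)}{m_1}+\sum_{i=2}^k\delta_i\,\Psi_i(K_i),
\qquad
\Psi_i(K):=\tr(K\Sigma_iK^\top)-2\tr(K\Sigma_{i,1}B^\top).
\]

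Next I complete the square exactly as in Lemma~\ref{lem:optimal-full}:
\[
\Psi_i(K)=\|(K-B\Sigma_{1,i}\Sigma_i^{-1})\Sigma_i^{1/2}\|_F^2-\tr(B\Sigma_{1,i}\Sigma_i^{-1}\Sigma_{i,1}B^\top).
\]
Since $\Sigma_i\succ0$, the unique minimizer is $K_i^\star=B\Sigma_{1,i}\Sigma_i^{-1}=BA_i^\star$. It does not involve the $m_i$, so it is optimal for every allocation because $\delta_i\ge0$. If some $\delta_i=0$, that level does not contribute, and $K_i^\star$ remains a minimizer.

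For the equivalence, I compare with Corollary~\ref{corollary: MSE matrix cv} at $M=B^\top B$. By cyclicity, $\tr(B\Sigma_1B^\top)=\tr(M\Sigma_1)=\VM$ and $\Psi_i(BA)=\Phi_i(A)$ for every $A\in\mathbb{R}^{d\times d}$. Also $\min_K\Psi_i=-\tr(M\Sigma_{1,i}\Sigma_i^{-1}\Sigma_{i,1})=\min_A\Phi_i$. Therefore, for every fixed allocation $m$, the partially minimized objectives coincide:
\[
\min_{K_2,\dots,K_k}\E\|\widehat q-q\|_2^2=\mathcal{E}_M^\star(m).
\]
Minimizing both over the same feasible allocations then gives the same optimal allocation and the same optimal value.

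The optimizers also correspond. Any $A$-minimizer $A_i$ satisfies $M(A_i-A_i^\star)=0$, hence $B(A_i-A_i^\star)=0$, so $BA_i=K_i^\star$. The only delicate point is this step together with the cross-term bookkeeping. When $M=B^\top B$ is singular, $A_i^\star$ is not unique but $K_i^\star$ is, and the map $A\mapsto BA$ is neither injective nor surjective onto $\mathbb{R}^{r\times d}$. The argument must therefore compare minimum values, not parameter spaces, which the completed-square identities above handle directly.
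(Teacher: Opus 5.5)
Your proposal is correct and follows essentially the same route as the paper: the same levelwise expansion of $\E\|\widehat q-q\|_2^2$, the same unique minimizer $K_i^\star=BA_i^\star$ of each convex bracket, and the same identity $\Psi_i(BA)=\Phi_i(A)$ under $M=B^\top B$. Your comparison of minimum values spells out a step the paper leaves implicit, namely that restricting to $K_i=BA_i$ loses nothing because $K_i^\star$ has that form. One aside is inaccurate: when $M$ is singular, $A\mapsto BA$ can still be surjective (for $B=c^\top$ with $c\neq 0$ and $d>1$ it is onto), but nothing in your argument depends on this.
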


\begin{proof}
As in Lemmas~\ref{lem:cov_vector_mfmc} and~\ref{lem:mse-trace},
\[
\E\|\widehat q-q\|_2^2
=\frac{\tr(B\Sigma_1B^\top)}{m_1}
+\sum_{i=2}^k\delta_i\Bigl[\tr(K_i\Sigma_iK_i^\top)-2\tr(B\Sigma_{1,i}K_i^\top)\Bigr].
\]
Each bracket is a convex quadratic in $K_i$, minimized where
$K_i\Sigma_i=B\Sigma_{1,i}$, i.e., at $K_i^\star=BA_i^\star$. For $K_i=BA_i$
the bracket equals $\Phi_i(A_i)$ with $M=B^\top B$, and
$\tr(B\Sigma_1B^\top)=\tr(M\Sigma_1)$, which gives the equivalence.
\end{proof}

The most common case is a scalar QoI, $B=c^\top$ with $c\in\mathbb{R}^d$,
so that $q=c^\top\mu$ and $M=cc^\top$. Standard scalar MFMC applied to the
projected outputs $g^{(i)}=c^\top f^{(i)}$ uses one scalar coefficient per
level, $\widehat q_{\mathrm{scalar}}=\bar g^{(1)}_{m_1}+\sum_{i=2}^k\alpha_i
(\bar g^{(i)}_{m_i}-\bar g^{(i)}_{m_{i-1}})$. Writing
$\Delta_i:=\bar y^{(i)}_{m_i}-\bar y^{(i)}_{m_{i-1}}$ and using
$\bar g^{(i)}_m=c^\top\bar y^{(i)}_m$, projecting first and correcting first
(Corollary~\ref{cor:linear-qoi}) give
\begin{align}
\widehat q_{\mathrm{scalar}}
&=c^\top\Bigl[\bar y^{(1)}_{m_1}+\sum_{i=2}^k(\alpha_i I)\,\Delta_i\Bigr],
\label{eq:project-first-scalar-mfmc}\\
\widehat q_{\mathrm{full}}
&=c^\top\Bigl[\bar y^{(1)}_{m_1}+\sum_{i=2}^k A_i^\star\,\Delta_i\Bigr],
\qquad A_i^\star=\Sigma_{1,i}\Sigma_i^{-1}.
\label{eq:scalar-qoi-full-state-estimator}
\end{align}
The two estimators differ only in the control-variate matrix inside the
brackets: $\alpha_iI\in\mathcal{S}_{\mathrm{scalar}}$ for projecting first and
$A_i^\star\in\mathcal{S}_{\mathrm{full}}$ for correcting first.
Under $M=cc^\top$ the optimal coefficient of the scalar class is the scalar
MFMC coefficient. Since $\mathcal{S}_{\mathrm{scalar}}\subseteq\mathcal{S}_{\mathrm{full}}$,
Lemma~\ref{lem:gamma-properties} and Corollary~\ref{cor:reduction-loss}
with $\mathcal{S}_{\mathrm{red}}=\mathcal{S}_{\mathrm{scalar}}$ give the
following result.

\begin{corollary}[Benefit of full-state control variates for a scalar QoI]
\label{cor:full-state-benefit}
Let $\widehat q_{\mathrm{full}}$ be~\eqref{eq:scalar-qoi-full-state-estimator}
and $\widehat q_{\mathrm{scalar}}$ be~\eqref{eq:project-first-scalar-mfmc}
with the optimal scalar MFMC coefficients.
\begin{enumerate}
\item[(a)] If each estimator uses its own optimal sample allocation under
the budget $p$, then
\[
\min_{m}\E[(\widehat q_{\mathrm{full}}-q)^2]\le\min_{m}\E[(\widehat q_{\mathrm{scalar}}-q)^2].
\]
\item[(b)] For a fixed nested allocation $0<m_1\le\cdots\le m_k$,
\begin{equation}
\label{eq:full-state-scalar-qoi-gap}
\E\bigl[(\widehat q_{\mathrm{scalar}}-q)^2\bigr]
-\E\bigl[(\widehat q_{\mathrm{full}}-q)^2\bigr]
=\VM\sum_{i=2}^k\delta_i\bigl(\gfull{i}-\gscalar{i}\bigr)\ge0,
\end{equation}
where $\VM=c^\top\Sigma_1c$ and the effective correlations are evaluated
with $M=cc^\top$.
\end{enumerate}
\end{corollary}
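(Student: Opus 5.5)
The plan is to reduce both estimators to instances of the vector-valued estimator \eqref{eq:vector-mfmc} with weighting $M=cc^\top$. Then both parts follow from Corollary~\ref{cor:reduction-loss} with $\mathcal{S}_{\mathrm{red}}=\mathcal{S}_{\mathrm{scalar}}$. First, for any estimator $\widehat\mu$ of $\mu$ and any $c$, we have $\E[(c^\top\widehat\mu-q)^2]=\E\|\widehat\mu-\mu\|_{cc^\top}^2=\mathcal{E}_M(\widehat\mu)$. By \eqref{eq:project-first-scalar-mfmc} and \eqref{eq:scalar-qoi-full-state-estimator}, the MSE of $\widehat q_{\mathrm{scalar}}$ is therefore $\mathcal{E}_M$ of $\muMF$ with $A_i=\alpha_iI$, and that of $\widehat q_{\mathrm{full}}$ is $\mathcal{E}_M$ of $\muMF$ with $A_i=A_i^\star$. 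Also $\VM=\tr(cc^\top\Sigma_1)=c^\top\Sigma_1c$.

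Second, I will identify the coefficients with the class optima at every allocation. For the full class, Lemma~\ref{lem:optimal-full} states that $A_i^\star=\Sigma_{1,i}\Sigma_i^{-1}$ minimizes $\Phi_i$ over $\mathbb{R}^{d\times d}$ for any $M\succeq0$, including the singular $M=cc^\top$. Hence, by Corollary~\ref{corollary: MSE matrix cv}, the MSE of $\widehat q_{\mathrm{full}}$ equals $\mathcal{E}^{\mathrm{full}}_M(m)$. For the scalar class, $\Phi_i(\alpha I)=\alpha^2 c^\top\Sigma_ic-2\alpha\, c^\top\Sigma_{1,i}c$. The scalar MFMC coefficient for the projected outputs is $\Cov(g^{(1)},g^{(i)})/\Var(g^{(i)})=c^\top\Sigma_{1,i}c/c^\top\Sigma_ic$, which is exactly $\alpha_i^\star$ of Proposition~\ref{prop:scalar}. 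Here $\tr(M\Sigma_i)=c^\top\Sigma_ic>0$ because $\Sigma_i\succ0$, and $\tr(M\Sigma_{i,1})=c^\top\Sigma_{i,1}c=c^\top\Sigma_{1,i}c$. So the MSE of $\widehat q_{\mathrm{scalar}}$ equals $\mathcal{E}^{\mathrm{scalar}}_M(m)$, and $\gscalar{i}=\rho_{1,i}^2$ computed for the projected outputs.

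Third, with these identifications, part (b) is Corollary~\ref{cor:reduction-loss}(b) verbatim, including nonnegativity via Lemma~\ref{lem:gamma-properties}. For part (a), each side is the minimum over the same feasible allocation set of $\mathcal{E}^{\mathcal{S}}_M(m)$, so Corollary~\ref{cor:reduction-loss}(a) applies. Alternatively, take the minimizing allocation $m'$ for the scalar estimator and note $\min_m \mathcal{E}^{\mathrm{full}}_M(m)\le\mathcal{E}^{\mathrm{full}}_M(m')\le\mathcal{E}^{\mathrm{scalar}}_M(m')$ by part (b). This form avoids assuming the minima are attained in closed form via Lemma~\ref{thm:unified_sample_allocation}; if they are only infima, the same argument works with infima.

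The main obstacle is the one subtle point: the coefficients must be the class-optimal ones \emph{for every allocation}, so that the MSE of each estimator equals $\mathcal{E}^{\mathcal{S}}_M(m)$ and not just an upper bound on it. This holds because the optimal $A_i$ do not depend on $m$, since $\delta_i\ge0$ factors out in \eqref{eq:mse-separable}. The other point to check is that nonuniqueness of the full minimizer under singular $M$ does not matter, since only the attained value enters.
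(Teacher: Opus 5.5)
Your proposal is correct and follows the paper's argument: the paper also writes both estimators as instances of \eqref{eq:vector-mfmc} under $M=cc^\top$, with coefficients in $\mathcal{S}_{\mathrm{scalar}}$ and $\mathcal{S}_{\mathrm{full}}$ respectively. It then notes that the scalar MFMC coefficient is the scalar-class optimum and invokes Lemma~\ref{lem:gamma-properties} and Corollary~\ref{cor:reduction-loss} with $\mathcal{S}_{\mathrm{red}}=\mathcal{S}_{\mathrm{scalar}}$. Your explicit checks fill in details the paper leaves implicit: that $c^\top\Sigma_{1,i}c/c^\top\Sigma_i c$ coincides with the coefficient of Proposition~\ref{prop:scalar} (where $c\neq 0$ follows from $\VM>0$), and that the class-optimal coefficients do not depend on the allocation.
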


The gap~\eqref{eq:full-state-scalar-qoi-gap} is the correlation information
lost in the projection: $\gscalar{i}$ is the squared correlation left after
both outputs are compressed to the scalar pair
$(c^\top f^{(1)},c^\top f^{(i)})$, whereas $\gfull{i}$ measures how well the
entire LF state predicts the same scalar HF quantity. When each model
evaluation produces the full state anyway, correcting first requires no
additional samples. Its price is that $A_i^\star$ involves $\Sigma_i^{-1}$,
which must be estimated from pilot data.

\subsection{Practical implementation}
\label{sec:implementation}

The design formulas of Theorem~\ref{thm:main-vvmfmc} involve covariance quantities that are unknown in practice. A practical implementation therefore proceeds in two stages. In a \emph{pilot} stage, all $k$ models are evaluated on a common set of $n_{\mathrm{pilot}}$ samples, from which the covariance quantities required by the chosen structural class are estimated. The plug-in coefficients $\widehat A_i$ and effective correlations $\widehat\gamma_{1,i}$ then determine the allocation of the remaining \emph{production} budget. Because the $\widehat\gamma_{1,i}$ carry sampling noise, the ordering and cost--correlation conditions of Lemma~\ref{thm:unified_sample_allocation} may fail for the raw estimates. As in scalar MFMC \cite{peherstorfer2016optimal}, we therefore select the subset of LF models that satisfies these conditions and has the smallest predicted MSE, where the HF-only estimator is included as a candidate. Algorithm~\ref{alg:pilot_reduced_mfmc} summarizes the procedure.

\begin{algorithm}[ht!]
\caption{Practical implementation of vector-valued MFMC}
\label{alg:pilot_reduced_mfmc}
\begin{algorithmic}[1]
\REQUIRE Budget $p$, pilot budget $p_{\mathrm{pilot}}<p$, costs $\{w_i\}_{i=1}^k$,
models $\{f^{(i)}\}_{i=1}^k$, weighting matrix $M$, structural class
$\mathcal{S}\in\{\mathcal{S}_{\mathrm{scalar}},\mathcal{S}_{\mathrm{diag}},
\mathcal{S}_{\mathrm{block}},\mathcal{S}_{\mathrm{full}}\}$.
\ENSURE Estimate $\widehat{\mu}_{\mathrm{MF}}$ of $\mu=\E[f^{(1)}(Z)]$.

\STATE \textbf{Pilot.} Set $n_{\mathrm{pilot}}=\lfloor p_{\mathrm{pilot}}/\sum_{i=1}^k w_i\rfloor$
and $p_{\mathrm{prod}}=p-p_{\mathrm{pilot}}$. Evaluate all $k$ models on
$n_{\mathrm{pilot}}$ common i.i.d.\ samples and form the sample covariances
required by $\mathcal{S}$, together with $\widehat V_M=\tr(M\widehat\Sigma_1)$.

\STATE \textbf{Coefficients.} For $i=2,\dots,k$, compute $\widehat A_i$ and
$\widehat\gamma_{1,i}$ from the plug-in versions of
Lemma~\ref{lem:optimal-full} or Propositions~\ref{prop:block}--\ref{prop:scalar}.

\STATE \textbf{Model selection.} Among the subsets of LF models whose
$\widehat\gamma_{1,i}$ are strictly decreasing and satisfy the cost--correlation
condition~\eqref{eq:cost-condition}, including the HF-only subset, choose the one
with the smallest predicted MSE~\eqref{eq:optimal-mse}, and relabel the selected
models as $f^{(1)},\dots,f^{(k)}$.

\STATE \textbf{Allocation.} For the selected models, compute
$\widehat m_i = \widehat m_1 \widehat r_i$ from
Lemma~\ref{thm:unified_sample_allocation} with budget $p_{\mathrm{prod}}$,
and round to integers $1\le m_1\le\cdots\le m_k$ with
$\sum_i w_i m_i\le p_{\mathrm{prod}}$.

\STATE \textbf{Production.} Draw $m_k$ new i.i.d.\ samples independent of the
pilot, and evaluate model $i$ on the first $m_i$ of them.

\RETURN
$\widehat{\mu}_{\mathrm{MF}}
= \bar y^{(1)}_{m_1}
+ \sum_{i=2}^k \widehat A_i\bigl(\bar y^{(i)}_{m_i}-\bar y^{(i)}_{m_{i-1}}\bigr)$.
\end{algorithmic}
\end{algorithm}

\paragraph{Pilot reuse.}
\label{sec:pilot-reuse}
The $n := n_{\mathrm{pilot}}$ pilot evaluations of all $k$ models are only
used to estimate the parameters. They can also be used in the final
estimator at no additional cost, by placing the pilot samples in front of the
production samples of every model, so that each sample mean is taken over
$n+m_i$ instead of $m_i$ samples, with the design unchanged. The estimator is
then in general no longer unbiased. Without reuse, the design is fixed given
the pilot and the production samples are independent of it, so each
correction term has zero conditional mean,
$\E[\widehat A_i(\bar y^{(i)}_{m_i}-\bar y^{(i)}_{m_{i-1}}) \mid \text{pilot}] = 0$.
With reuse, the correction terms also contain the pilot sample mean
$\bar p^{(i)}$ of model $i$, from which $\widehat A_i$ (and the allocation) is
itself estimated, and
\begin{equation}
\label{eq:reuse-bias}
\E\bigl[\widehat A_i\,\bar p^{(i)}\bigr]
= \E\bigl[\widehat A_i\bigr]\,\mu_i + \Cov\bigl(\widehat A_i,\bar p^{(i)}\bigr),
\end{equation}
where
$\Cov(\widehat A_i,\bar p^{(i)}) := \E[(\widehat A_i-\E\widehat A_i)(\bar p^{(i)}-\mu_i)]$
is in general nonzero. This covariance term makes the estimator biased.
Section~\ref{sec:exp-reuse} compares the resulting MSE with that of the
estimator without reuse.

\section{Numerical Experiments}
\label{sec:numerics}

In this section we conduct three numerical experiments, each consisting of
$N_{\mathrm{rep}}$ independent repetitions of
Algorithm~\ref{alg:pilot_reduced_mfmc}, following the procedure described below.
The total budget is split as $p=p_{\mathrm{pilot}}+p_{\mathrm{prod}}$.
The pilot stage evaluates all $k$ models on
$n_{\mathrm{pilot}}=\lfloor p_{\mathrm{pilot}}/\sum_{i=1}^k w_i\rfloor$ common
samples, from which each coefficient class estimates its covariance
quantities, control variates, and effective correlations; the
production allocation then follows from
Lemma~\ref{thm:unified_sample_allocation} with the plug-in correlations and
the remaining budget $p_{\mathrm{prod}}$. The model-selection step uses an LF
model only if the ordering condition~\eqref{eq:cost-condition} holds and the
resulting predicted MSE~\eqref{eq:optimal-mse} is smaller than that of
HF-only sampling; otherwise the production budget is spent on HF samples,
and the pilot cost is still paid. Unless stated otherwise, the block and full
classes add a small ridge ($10^{-8}$; see the parameter tables) before
inverting a pilot-estimated LF covariance, within each block for the block
class. Within a repetition, all estimators share the same pilot sample.
HF-only Monte Carlo (MC) spends the full budget $p$ on $\lfloor p/w_1\rfloor$
HF samples.

For every estimator we report the empirical MSE, i.e.\ the average of
$\|\hat\mu-\mu\|_M^2$ over the $N_{\mathrm{rep}}$ repetitions, together with its
standard error, and its ratio to the empirical
MSE of HF-only MC. The predicted ratio is the population-optimal
MSE~\eqref{eq:optimal-mse}, evaluated with the population (or reference)
effective correlations and the production budget $p_{\mathrm{prod}}$, divided
by the HF-only MSE $\VM w_1/p$. When the population
covariances are not available in closed form, the reference mean and the
reference effective correlations are computed from large HF and LF samples
that are independent of the repetitions and whose Monte Carlo error is
negligible relative to the reported MSEs.

\subsection{Experiment 1: Gaussian test problem}
\label{sec:experiment-gaussian}

We first consider a Gaussian test problem, in which all population
covariances, and hence all effective correlations, are available in closed
form, so that the pilot-estimated estimators can be compared with the
theory. The HF output is
\[
X := f^{(1)}(Z) \sim \mathcal{N}(\mu, \Sigma_{\mathrm{HF}}),
\qquad
\Sigma_{\mathrm{HF}} = D^{1/2} R\, D^{1/2},
\qquad
R_{rs} = \exp\!\big(-|r-s|/\ell_{\mathrm{HF}}\big),
\]
where the entries of $\mu$ increase linearly from $\mu_{\min}$ to
$\mu_{\max}$ and the marginal variances on the diagonal of $D$ decrease
linearly from $\lambda_{\max}$ to $\lambda_{\min}$. The correlation length
$\ell_{\mathrm{HF}}$ controls the cross-component correlation, which is
strongest between neighboring components; contiguous blocks of adjacent
components are designed to exploit this structure. Each LF model is
\[
f^{(i)}(Z) = T_i X + \varepsilon_i,
\qquad
\varepsilon_i \sim \mathcal{N}\big(0,\,
\sigma_{\varepsilon_i}^2[(1-\rho_{\varepsilon_i}) I_d
+ \rho_{\varepsilon_i}\mathbf{1}\mathbf{1}^\top]\big),
\qquad i = 2, \dots, k,
\]
with a diagonal gain matrix $T_i$ and independent equicorrelated noise, so
that $\Sigma_i = T_i \Sigma_{\mathrm{HF}} T_i + \Cov(\varepsilon_i)$,
$\Sigma_{1,i} = \Sigma_{\mathrm{HF}} T_i$, and
$\Sigma_{i,j} = T_i \Sigma_{\mathrm{HF}} T_j$ for $i \neq j$. A uniform gain
$T_i = \theta_i I_d$ models a uniformly attenuated surrogate, and a
nonconstant diagonal a surrogate whose local bias varies across components
(Case~4).

The block partition is contiguous, with a common block size matched to the
pilot budget so that each block covariance is estimated from more samples
than its dimension, and $M = I_d$ in Cases~1--3.
Table~\ref{tab:gaussian-params} collects the baseline parameters; with
them, $n_{\mathrm{pilot}} = \lfloor 4000/51\rfloor = 78$ and HF-only MC uses
$160$ HF samples.

\begin{table}[htbp]
\caption{Parameters of the Gaussian test problem and their baseline (Case~1)
values. The subsequent cases modify only the entries stated in the text.}
\label{tab:gaussian-params}
\centering
\small
\begin{tabular}{@{}llc@{}}
\toprule
Parameter & Meaning & Case 1 value \\
\midrule
$p$ & total computational budget & 8000 \\
$p_{\mathrm{pilot}}$ & pilot budget & 4000 \\
$w_1, w_2$ & model costs & 50, 1 \\
$d$ & dimension of the QoI & 10 \\
$\mu_{\min}, \mu_{\max}$ & range of the true mean entries & 1, 2 \\
$T_2$ & LF gain matrix & $0.95\, I_d$ \\
$\lambda_{\max}, \lambda_{\min}$ & range of HF marginal variances & 5, 0.5 \\
$\ell_{\mathrm{HF}}$ & HF cross-component correlation length & 20 \\
$\sigma_{\varepsilon}$ & LF noise standard deviation & 0.5 \\
$\rho_{\varepsilon}$ & LF noise equicorrelation & 0.2 \\
block partition & contiguous blocks (block class) & $2 \times 5$ \\
$N_{\mathrm{rep}}$ & number of independent repetitions & 500 \\
$\tau$ & ridge regularization for $\hat\Sigma_i^{-1}$ (block/full CV) & $10^{-8}$ \\
\bottomrule
\end{tabular}
\end{table}

\subsubsection*{Case 1: baseline parameter setting ($d = 10$)}

In the baseline case the LF model is strongly correlated with the HF model
($T_2 = 0.95\,I_d$), the correlation length $\ell_{\mathrm{HF}} = 20$ sets
strong correlation across the $d = 10$ output components, half of the
budget is spent on the pilot ($n_{\mathrm{pilot}} = 78$), and the block class
uses two contiguous blocks of five components. The true effective
correlations are
\[
\gscalar{2} = 0.908488,
\quad
\gdiag{2} = 0.912202,
\quad
\gblock{2} = 0.939843,
\quad
\gfull{2} = 0.943036,
\]
ordered as guaranteed by
Corollary~\ref{corollary:Hierarchy of effective correlations}.

\begin{table}[htbp]
\caption{Experiment~1: empirical MSE $\pm$ one standard error over
$N_{\mathrm{rep}} = 500$ repetitions, ratio to the empirical MSE of HF-only
MC, and the population-predicted ratio~\eqref{eq:optimal-mse}, which
includes the pilot-cost factor $p/p_{\mathrm{prod}} = 2$. Cases~1--3 use
$M = I_d$; Case~4 estimates the scalar QoI $q = c^\top\mu$ with
$M = cc^\top$, where ``projected'' is scalar MFMC on
$g^{(i)} = c^\top f^{(i)}$ and ``full state'' is the full-state estimator of
Corollary~\ref{cor:full-state-benefit}.}
\label{tab:exp1}
\centering
\small
\input{tables/exp1.tex}
\end{table}

\begin{figure}[htbp]
\centering
\includegraphics[width=\textwidth]{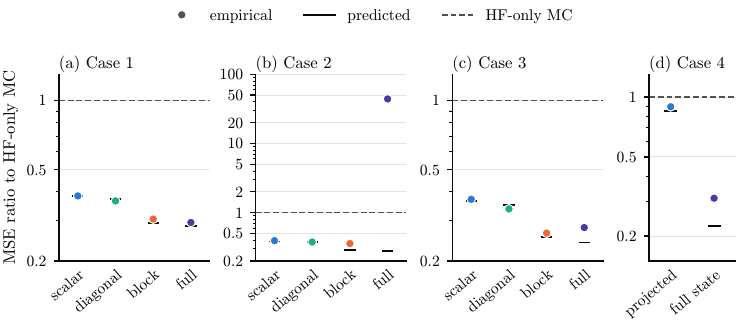}
\caption{Experiment~1: ratio of the empirical MSE of each MFMC estimator to
that of HF-only MC (log scale); black bars mark the population-predicted
ratio. (a)--(c)~Cases~1--3 ($M = I_d$); in Case~2
($d = 100 > n_{\mathrm{pilot}} = 78$) the plug-in full-matrix estimator
fails. (d)~Case~4, scalar QoI $q = c^\top\mu$ ($M = cc^\top$).}
\label{fig:exp1}
\end{figure}

All four MFMC estimators reduce the MSE substantially
(Table~\ref{tab:exp1} and Figure~\ref{fig:exp1}(a)), and their ranking
follows the hierarchy. The two classes that use cross-component correlation
(block and full) clearly outperform the two
componentwise classes.
Since $n_{\mathrm{pilot}} = 78 \gg d = 10$, even the full $10 \times 10$
covariance is estimated accurately, and the sample allocations
closely track the optimal design.

\subsubsection*{Case 2: increasing the dimension to $d = 100$}

The second case increases the dimension to $d = 100$ and keeps all other
parameters, to test the classes when the dimension exceeds the pilot sample
size; the block class uses five contiguous blocks of twenty components. The true
effective correlations,
\[
\gscalar{2} = 0.908488,
\quad
\gdiag{2} = 0.911237,
\quad
\gblock{2} = 0.942180,
\quad
\gfull{2} = 0.944629,
\]
again respect the hierarchy of
Corollary~\ref{corollary:Hierarchy of effective correlations} and place the
full matrix first. However, the empirical behavior is different
(Table~\ref{tab:exp1} and Figure~\ref{fig:exp1}(b)).

The scalar and diagonal estimators behave as in Case~1, but the
full-matrix estimator has an MSE $44$ times larger than HF-only MC.
The failure is caused by unstable pilot estimation: the
pilot-estimated $100 \times 100$ LF covariance has rank at most
$77 < d$, so the effective correlation is estimated as
$\hat\gamma^{\mathrm{full}}_{1,2} \approx 1$, and the allocation assigns
almost the entire production budget to the LF model and leaves a single HF
sample.
The block class obtains the lowest MSE and avoids this instability, since
each $20 \times 20$ block covariance is estimated from $78 > 20$ samples and
remains full rank. With the block size matched to the pilot budget and to the correlation
structure, the block class thus keeps part of the cross-component gain
while remaining as stable as the componentwise classes.

\subsubsection*{Case 3: a third fidelity level ($k = 3$, $d = 10$)}

The third case extends Case~1 by a second, cheaper and less accurate
surrogate,
\[
f^{(3)}(Z) = T_3 X + \varepsilon_3,
\qquad
T_3 = 0.80\, I_d,
\qquad
\sigma_{\varepsilon_3} = 1.0,
\quad
\rho_{\varepsilon_3} = 0.2,
\quad
w_3 = 0.2,
\]
while model 2 and all other parameters are unchanged
($n_{\mathrm{pilot}} = 78$, two blocks of five). The population effective
correlations obey the hierarchy at both fidelity levels,
\[
\begin{aligned}
&\gscalar{2} = 0.9085,\;
\gdiag{2} = 0.9122,\;
\gblock{2} = 0.9398,\;
\gfull{2} = 0.9430, \\
&\gscalar{3} = 0.6377,\;
\gdiag{3} = 0.6664,\;
\gblock{3} = 0.7916,\;
\gfull{3} = 0.8112,
\end{aligned}
\]
and for every class they satisfy the ordering condition of
Lemma~\ref{thm:unified_sample_allocation}, so the closed-form three-level
allocation applies. The gaps between the classes are wider at the less
accurate third level, where the cross-component structure carries a larger
share of the recoverable variance.

The cheap third model lowers the predicted ratio of every class.
Empirically, the MSE decreases for the diagonal and block classes and is
almost unchanged for the scalar and full classes
(Table~\ref{tab:exp1} and Figure~\ref{fig:exp1}(c)). The block class now
reaches a slightly lower MSE than the full class, although the full class has the larger population correlation at
both levels. This is consistent with the larger estimation cost of the full
class, which must estimate two $10 \times 10$ LF covariances and the
corresponding cross-covariances from the same $78$ pilot samples.

\subsubsection*{Case 4: scalar QoI, full-state versus scalar control
variates}

The final case tests Corollary~\ref{cor:full-state-benefit}. The models
still produce the full state ($d = 10$), but the reported quantity is the
component mean $q = c^\top \mu$ with $c = \mathbf{1}/d$. We compare the two estimators of
Corollary~\ref{cor:full-state-benefit}: scalar MFMC on the projected outputs
$g^{(i)} = c^\top f^{(i)}$, and the full-state estimator. Both use the budgets of
Table~\ref{tab:gaussian-params} and the same pilot samples. The gain of
correcting first is governed by $\gfull{2} - \gscalar{2}$, the correlation
lost by the projection, so we use an LF model in which this loss
is substantial: its componentwise gains vary in magnitude and sign,
\[
T_2 = \operatorname{diag}\big(\operatorname{linspace}(1.2, -0.8, d)\big),
\qquad
\sigma_\varepsilon = 0.5,
\quad
\rho_\varepsilon = 0.2.
\]
This models a surrogate that tracks the HF state locally but over-predicts
some components and has the wrong sign on others, e.g., a regionally biased
solver. In the projection $c^\top f^{(2)}$ the positively and negatively
correlated contributions cancel, and no single scalar coefficient can undo
this ($\gscalar{2} = 0.715$), whereas the full control vector reweights each
component and corrects its sign before averaging ($\gfull{2} = 0.962$).

Table~\ref{tab:exp1} and Figure~\ref{fig:exp1}(d) support
Corollary~\ref{cor:full-state-benefit}: scalar MFMC on the projected outputs
barely improves on HF-only MC, whereas the full-state estimator reduces its
MSE by a factor of $2.9$.

\subsection{Experiment 2: advection--diffusion equation with a POD ROM}
\label{sec:exp2}

The second experiment applies the vector-valued MFMC estimator to a
PDE model pair: a finite-difference high-fidelity model of a steady
two-dimensional advection--diffusion problem, and a projection-based
reduced-order model (ROM) serving as the low-fidelity model. As is typical
for ROM-based hierarchies, the LF output covariance has low rank, so the full
coefficient class can be implemented without forming a $d \times d$
covariance matrix (Proposition~\ref{prop:rom-reduction}).

\subsubsection{Model problem and the two fidelities}

On the unit square $\Omega = (0,1)^2$ we consider the steady
advection--diffusion equation for a temperature field $T(x,y)$,
\begin{equation}
  U_1\, \frac{\partial T}{\partial x} - \kappa\, \Delta T = 0
  \qquad \text{in } \Omega,
  \label{eq:advdiff}
\end{equation}
with advection speed $U_1$ in the positive $x$ direction and diffusivity
$\kappa$. The left boundary carries a Dirichlet condition that models a hot
inlet band $y \in [\frac13, \frac23]$ with a two-lobe temperature profile
centered at $y = \bar y$, and the remaining three sides carry homogeneous
Neumann conditions. The random input is the parameter vector
$Z = (U_1, \kappa, \bar y)$ with independent uniform components,
\begin{equation*}
  U_1 \sim \mathcal U(0,\,0.5), \qquad
  \kappa \sim \mathcal U(5\times 10^{-4},\,0.025), \qquad
  \bar y \sim \mathcal U(0.4,\,0.6).
\end{equation*}

The HF model discretizes~\eqref{eq:advdiff} by first-order upwind and
centered finite differences on a uniform $75 \times 75$ grid and returns the
full interior state, $f^{(1)}(Z) \in \mathbb{R}^d$ with
$d = 73^2 = 5329$. The LF model is a POD--Galerkin ROM of rank $r = 10$,
constructed once offline from $20$ HF snapshots and then frozen. It returns
the reconstructed state $f^{(2)}(Z) = V q(Z)$, where
$V \in \mathbb{R}^{d \times r}$ has orthonormal columns and
$q(Z) \in \mathbb{R}^r$ are the reduced coordinates; its mean relative
state error is about $0.6\%$. We use the nominal costs $w_1 = 1$ and
$w_2 = 1/140$ in HF-solve equivalents. The offline cost (about $25$ HF
solves) is treated as a fixed model-construction cost external to the
sampling budget.

Table~\ref{tab:exp2params} collects the parameters. The total budget is
$p = 200$ HF-solve equivalents, and we report two pilot sizes,
$n_{\mathrm{pilot}} = 49$ and $n_{\mathrm{pilot}} = 99$ (a quarter and a half
of the budget), to expose the dependence of each class on pilot quality. For the full class
the ridge is applied to the pilot estimate of $\operatorname{Cov}(q)$. The
reference mean is computed from $2^{18}$ HF solves, and the reference
effective correlations from an independent sample of $10^4$ evaluations of
both models.

\begin{table}[htbp]
\caption{Parameters of the advection--diffusion experiment.}
\label{tab:exp2params}
\centering
\small
\begin{tabular}{@{}llc@{}}
\toprule
Parameter & Meaning & Value \\
\midrule
    $n_x \times n_y$ & HF grid ($d = 5329$ interior unknowns) & $75 \times 75$ \\
    $r$ & POD rank of the LF ROM & $10$ \\
    $n_{\mathrm{train}}$ & offline POD training snapshots & $20$ \\
    $w_1,\, w_2$ & nominal model costs (measured $8.3$\,ms, $0.06$\,ms) & $1,\ 1/140$ \\
    $p$ & total budget (HF-solve equivalents) & $200$ \\
    $n_{\mathrm{pilot}}$ & common pilot samples & $49$ / $99$ \\
    $p_{\mathrm{prod}}$ & production budget per estimator & $150$ / $100$ \\
    $m$ & HF-only MC baseline samples (full budget) & $200$ \\
    $N_{\mathrm{rep}}$ & independent repetitions & $500$ \\
    $\lambda$ & relative ridge on $\widehat{\operatorname{Cov}}(q)$ & $10^{-8}$ \\
    $N_{\mathrm{ref}}$ & ground-truth HF solves (scrambled Sobol') & $2^{18}$ \\
\bottomrule
\end{tabular}
\end{table}

\subsubsection{Exact reduction to the ROM coordinates}

The full-class coefficient of Lemma~\ref{lem:optimal-full} cannot be applied
directly here. The LF solution $f^{(2)} = Vq$ lies in the $r$-dimensional POD
subspace, so $\Sigma_2$ has rank $r < d$ and is not invertible
(Assumption~\ref{ass:standing}(ii) fails). A ridge-regularized inverse would
only approximate the optimal coefficient, and it would still require
estimating $d\times d$ covariances from the pilot, which is what failed in
Experiment~1, Case~2. Both difficulties are avoided because $f^{(2)}$ is
determined by its $r$ reduced coordinates $q$, so a control variate built on
$f^{(2)}$ is equivalent to one built on $q$.

\begin{proposition}[Exact reduction for a rank-deficient LF covariance]
\label{prop:rom-reduction}
Let $f^{(2)} = Vq$, where $V\in\mathbb{R}^{d\times r}$ has orthonormal
columns ($V^\top V = I_r$) and the reduced coordinates $q\in\mathbb{R}^r$
have covariance $C_q := \Cov(q) \succ 0$, and let
$G := \Cov(f^{(1)},q) \in \mathbb{R}^{d\times r}$. Then every
$A\in\mathbb{R}^{d\times d}$ with $AV = GC_q^{-1}$ is an optimal full-class
coefficient; in particular, so is
\[
A^\star = GC_q^{-1}V^\top = \Sigma_{1,2}\Sigma_2^+,
\]
where $\Sigma_2^+$ is the Moore--Penrose pseudo-inverse of $\Sigma_2$. For
every such $A$,
\begin{equation}
\label{eq:reduced-correction}
A\bigl(\bar y^{(2)}_{m_2}-\bar y^{(2)}_{m_1}\bigr)
= GC_q^{-1}\bigl(\bar q_{m_2}-\bar q_{m_1}\bigr),
\qquad
\gfull{2} = \frac{\tr(MGC_q^{-1}G^\top)}{\tr(M\Sigma_1)}.
\end{equation}
\end{proposition}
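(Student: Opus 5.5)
Since $\Sigma_2$ is singular, Lemma~\ref{lem:optimal-full} does not apply as stated, so the plan is to rewrite the level-2 objective $\Phi_2$ entirely in terms of the reduced coordinates and then complete the square there, as in the proof of that lemma.

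First express the covariances through $q$. Since $f^{(2)}=Vq$, we have
\[
\Sigma_2 = VC_qV^\top,
\qquad
\Sigma_{2,1}=V G^\top,
\qquad
\Sigma_{1,2}=GV^\top .
\]
Substituting into $\Phi_2$ from \eqref{eq:cv-problem} gives
\[
\Phi_2(A)=\tr\bigl(M(AV)C_q(AV)^\top\bigr)-2\tr\bigl(M(AV)G^\top\bigr).
\]
This depends on $A$ only through $K:=AV\in\mathbb{R}^{d\times r}$.

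Next minimize over $K$. Completing the square with $C_q\succ0$ gives
\[
\Phi_2=\tr\bigl(M(K-GC_q^{-1})C_q(K-GC_q^{-1})^\top\bigr)-\tr\bigl(MGC_q^{-1}G^\top\bigr).
\]
The first term is nonnegative because $M\succeq0$. Every $K$ is attainable, for example as $A=KV^\top$, since $V^\top V=I_r$. Hence the minimum of $\Phi_2$ over $\mathbb{R}^{d\times d}$ is $-\tr(MGC_q^{-1}G^\top)$, and it is attained by every $A$ with $AV=GC_q^{-1}$. Dividing by $-\VM$ and using Definition~\ref{def:effective-correlation} yields the stated formula for $\gfull{2}$.

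Then verify the two claims about $A^\star$.
\begin{itemize}
\item $A^\star V=GC_q^{-1}V^\top V=GC_q^{-1}$, so $A^\star$ is optimal.
\item For the pseudo-inverse identity, write $\Sigma_2^+=VC_q^{-1}V^\top$. Check the four Penrose conditions using $V^\top V=I$; for instance, $\Sigma_2\Sigma_2^+=VV^\top$ is a symmetric projector. Then
\[
\Sigma_{1,2}\Sigma_2^+=GV^\top VC_q^{-1}V^\top=GC_q^{-1}V^\top .
\]
\end{itemize}

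For the correction identity, note that the sample means satisfy $\bar y^{(2)}_m=V\bar q_m$ because the map is linear. Therefore
\[
A\bigl(\bar y^{(2)}_{m_2}-\bar y^{(2)}_{m_1}\bigr)=AV\bigl(\bar q_{m_2}-\bar q_{m_1}\bigr)=GC_q^{-1}\bigl(\bar q_{m_2}-\bar q_{m_1}\bigr).
\]

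The main obstacle is the first step. We have to make sure that the optimization over all $A$ really reduces to an unconstrained problem in $K$, which requires the surjectivity of $A\mapsto AV$. We also must not invoke Lemma~\ref{lem:optimal-full}, since its hypothesis $\Sigma_2\succ0$ fails. Everything after that is routine algebra.
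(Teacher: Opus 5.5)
Your proposal is correct and follows the paper's proof essentially step for step. Both arguments rewrite $\Phi_2$ as a function of $AV$ using $\Sigma_2=VC_qV^\top$ and $\Sigma_{2,1}=VG^\top$, note that every such matrix is attained via $A=(AV)V^\top$, complete the square with $C_q\succ 0$, and then read off the correction identity from $\bar y^{(2)}_m=V\bar q_m$ and the pseudo-inverse identity from $\Sigma_2^+=VC_q^{-1}V^\top$; your explicit check of the Penrose conditions is a minor detail that the paper simply asserts.
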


\begin{proof}
Since $f^{(2)} = Vq$, we have $\Sigma_2 = VC_qV^\top$ and
$\Sigma_{2,1} = VG^\top$. Substituting these into the level-$2$
objective~\eqref{eq:cv-problem} shows that $\Phi_2$ depends on $A$ only
through $H := AV \in \mathbb{R}^{d\times r}$:
\[
\Phi_2(A) = \tr(MHC_qH^\top) - 2\tr(MHG^\top) =: \Psi(H),
\]
and every $H\in\mathbb{R}^{d\times r}$ is attained, e.g., by $A = HV^\top$.
The function $\Psi$ is the objective of Lemma~\ref{lem:optimal-full} with
$\Sigma_i$ and $\Sigma_{1,i}$ replaced by $C_q \succ 0$ and $G$, and
completing the square as in that proof gives
\[
\Psi(H)
= \tr\!\big(M(H-H^\star)\,C_q\,(H-H^\star)^\top\big)
- \tr(MGC_q^{-1}G^\top),
\qquad
H^\star := GC_q^{-1}.
\]
The first term is nonnegative and vanishes at $H = H^\star$. Hence the
minimum of $\Phi_2$ is $-\tr(MGC_q^{-1}G^\top)$, attained by every $A$ with
$AV = H^\star$, and Definition~\ref{def:effective-correlation} gives
$\gfull{2}$. For such $A$, the identity $\bar y^{(2)}_m = V\bar q_m$ gives
the correction in~\eqref{eq:reduced-correction}. Finally,
$\Sigma_2^+ = VC_q^{-1}V^\top$ because $V$ has orthonormal columns, so
$\Sigma_{1,2}\Sigma_2^+ = GV^\top VC_q^{-1}V^\top = A^\star$.
\end{proof}

In other words, the optimal correction regresses the HF output on the reduced
coordinates $q$ rather than on the reconstructed state $Vq$: it is the
coefficient of Lemma~\ref{lem:optimal-full} with $f^{(2)}$ replaced by $q$.
In practice, only $G\in\mathbb{R}^{d\times r}$ and
$C_q\in\mathbb{R}^{r\times r}$ are estimated from the pilot, and only an
$r\times r$ system is solved ($10\times10$ instead of $5329\times5329$), so
the full class needs $n_{\mathrm{pilot}} > r$ rather than
$n_{\mathrm{pilot}} > d$ (\Cref{tab:class-cost}).

\subsubsection{Case 1: full-field quantity of interest}

The QoI is the mean of the whole solution field,
$\mu \in \mathbb{R}^{5329}$, with $M = I$. The reference effective
correlations are
\begin{equation*}
  \gscalar{2} = 0.9919, \qquad
  \gdiag{2} = 0.9923, \qquad
  \gfull{2} = 0.99906,
\end{equation*}
ordered as guaranteed by
Corollary~\ref{corollary:Hierarchy of effective correlations}.

\begin{table}[htbp]
\caption{Experiment~2, Case~1 ($d = 5329$, field QoI, $M = I$): empirical MSE
$\pm$ one standard error over $N_{\mathrm{rep}} = 500$ repetitions, ratio to
HF-only MC, and the population-predicted
ratio~\eqref{eq:optimal-mse} including the pilot-cost factor
$p/p_{\mathrm{prod}}$. The two HF-only MC rows are separate runs of the same
estimator; their difference indicates the repetition noise.}
\label{tab:exp2case1}
\centering
\small
\input{tables/exp2_case1.tex}
\end{table}

\begin{figure}[htbp]
\centering
\includegraphics[width=\textwidth]{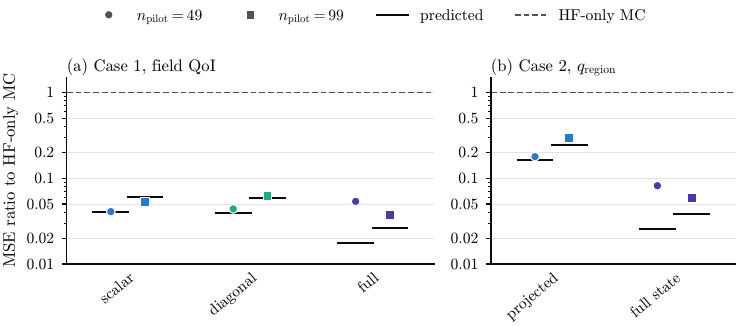}
\caption{Experiment~2: ratio of the empirical MSE of each MFMC estimator to
that of HF-only MC (log scale), with population-predicted ratios (black
bars), for $n_{\mathrm{pilot}} = 49$ and $n_{\mathrm{pilot}} = 99$.
(a)~Case~1, field QoI. (b)~Case~2, regional mean $q_{\mathrm{region}}$:
scalar MFMC on the projected outputs and the full-state control variate.}
\label{fig:exp2}
\end{figure}

Table~\ref{tab:exp2case1} and Figure~\ref{fig:exp2}(a) report the results.
Every MFMC estimator reduces the MSE by a factor of $16$--$27$ relative to
MC. The scalar and diagonal classes are close to their predictions already
at $n_{\mathrm{pilot}} = 49$, since they estimate only two traces or $2d$
componentwise covariances, which $49$ pilot samples determine accurately.

At $n_{\mathrm{pilot}} = 49$ the full class performs worse than both
componentwise classes, due to inaccurate coefficient estimation from insufficient samples.
Doubling the pilot ($n_{\mathrm{pilot}} = 99$) reverses the
ranking. The full class becomes the best estimator although its production
budget is a third smaller, while the scalar and diagonal classes
deteriorate, because their few parameters were already well estimated and
the extra pilot samples only consume production budget.

\subsubsection{Case 2: a mapped scalar quantity of interest}

This case illustrates Corollary~\ref{cor:full-state-benefit} for the scalar
linear mapping $q = c^\top \mu$ of the field mean given by the regional mean
temperature
\begin{equation*}
  q_{\mathrm{region}} = \frac{1}{|\mathcal{I}_R|}
    \sum_{j \in \mathcal{I}_R} \mu_j ,
\end{equation*}
where $\mathcal{I}_R$ indexes the $814$ grid points in the downstream region
$R = [0.5, 1] \times [0.6, 0.9]$. This quantity varies with all three
parameters, with a standard deviation of $12.6$\,K. We compare HF-only MC, scalar MFMC on the projected outputs
$g^{(i)} = c^\top f^{(i)}$, and the full-state
estimator~\eqref{eq:scalar-qoi-full-state-estimator} with coefficient
$c^\top \Sigma_{1,2}\Sigma_2^{+}$.

The projection loses part of the correlation:
$\rho^2(g^{(1)},g^{(2)}) = 0.928$, whereas the full-state control variate
attains $\gfull{2}|_{M = cc^\top} = 0.997$. Scalar MFMC must use the projected LF
output $c^\top V q$ as it is, whereas the full-state control variate
regresses the HF quantity on all $r$ reduced coordinates and thereby
corrects part of this error.

\begin{table}[htbp]
\caption{Experiment~2, Case~2 (regional mean $q_{\mathrm{region}}$, $M = cc^\top$): empirical MSE
$\pm$ one standard error over $N_{\mathrm{rep}} = 500$ repetitions, ratio to
HF-only MC, and the population-predicted ratio.
``Projected'' is scalar MFMC on $g^{(i)} = c^\top f^{(i)}$; ``full state'' is
the full-state control variate of Corollary~\ref{cor:full-state-benefit}.}
\label{tab:exp2case2}
\centering
\small
\input{tables/exp2_case2.tex}
\end{table}

Table~\ref{tab:exp2case2} and Figure~\ref{fig:exp2}(b) report the results.
Both MFMC estimators improve on HF-only MC. The full-state control variate has the lower MSE at both
pilot sizes, by a factor of $2.2$ ($n_{\mathrm{pilot}} = 49$) and $5.1$
($n_{\mathrm{pilot}} = 99$). This illustrates the benefit of correcting the
full state before projecting.

\subsection{Experiment 3: two-phase gas injection with a coarse-grid low-fidelity model}
\label{sec:exp3}

The third experiment uses a more challenging HF/LF model pair: a nonlinear, time-dependent
two-phase flow simulator and the same simulator on a coarser grid. Three
features make multifidelity estimation more challenging than in
Experiment~2. The coarse model is only about $5.4$ times cheaper (compared
with $140$ in Experiment~2). The output, a gas saturation field, has a sharp,
sample-dependent plume front that the coarse grid resolves only
approximately. Finally, many output components are zero in most samples, so
the pilot-estimated LF covariances are ill-conditioned.

\subsubsection{Model problem and the two fidelities}

We consider immiscible two-phase (water and gas) flow without capillary
pressure in a vertical cross-section of an aquifer. Mass conservation of
each phase $\alpha\in\{w,g\}$ combined with Darcy's law gives
\begin{equation}
  \frac{\partial}{\partial t}\!\left(\frac{\phi\, S_\alpha}{B_\alpha}\right)
  - \nabla\cdot\!\left(
      \frac{k\, k_{r\alpha}(S_w)}{\mu_\alpha B_\alpha}
      \bigl(\nabla p - \rho_\alpha g \nabla D\bigr)
    \right)
  = q_\alpha,
  \qquad S_w + S_g = 1,
  \label{eq:twophase}
\end{equation}
where $q_w = 0$ and $q_g$ is the gas injection rate. Water and rock are
slightly compressible, and the gas properties depend on pressure. The
aquifer is initially saturated with water, all boundaries are closed, and gas
is injected at a constant rate for $T = 900$\,days through a single well near
the bottom of the domain. The random input is the horizontal log-permeability, a
Gaussian random field conditioned on measurements at observation wells
and represented by its $437$ leading Karhunen--Lo\`eve modes, whose
coefficients form the random input $Z \sim \mathcal N(0, I_{437})$.

The HF model solves~\eqref{eq:twophase} by two-point-flux finite volumes on
a $30\times 30$ grid and returns the final gas saturation field,
$f^{(1)}(Z) = S_g(\cdot,T) \in \mathbb R^{d}$ with $d = 900$. The LF model
solves the same equations on a $15 \times 15$ grid, where the permeability
of each coarse cell is the geometric mean of the four fine cells it
contains. Its coarse final saturation $s(Z) \in \mathbb R^{225}$ is mapped
to the HF grid by assigning the value of each coarse cell to the four fine
cells it contains, $f^{(2)}(Z) = P\, s(Z)$ with $P \in \{0,1\}^{900\times 225}$.
The costs are $w_1 = 1$ and $w_2 = 0.185$ in HF-solve equivalents.
Figure~\ref{fig:exp3setup} shows one realization and the reference mean
field.

\begin{figure}[htbp]
\centering
\includegraphics[width=\textwidth]{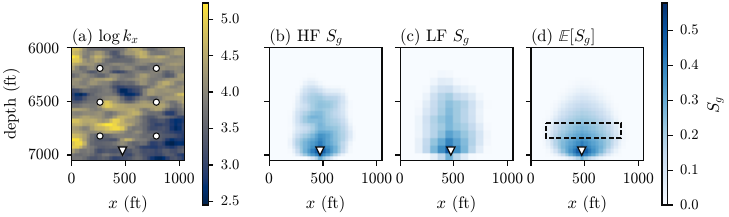}
\caption{Two-phase experiment. (a)~One log-permeability realization on the
$30 \times 30$ HF grid (circles: conditioning data; triangle: injection
well). (b), (c)~Final gas saturation of the HF ($30 \times 30$) and LF
($15 \times 15$) models for this realization. (d)~Reference mean of the HF
final gas saturation ($32{,}000$ samples); dashed: averaging region
$R_{\mathrm{band}}$ of Case~2.}
\label{fig:exp3setup}
\end{figure}

\subsubsection{Estimator implementation}

As in Experiment~2, the LF output lies in a known low-dimensional subspace,
here the range of $P$, so Proposition~\ref{prop:rom-reduction} applies (with
$V = P/2$) and the full class reduces to a regression of the HF output on
the $225$ coarse saturations. The block class uses $25$ blocks of
$6 \times 6$ HF cells, each regressing on the $9$ coarse cells it covers.
Coarse cells that gas never reaches in the pilot sample are zero in every
pilot sample, so their rows and columns of the pilot covariance vanish. The
regression gives them zero coefficients and uses only the remaining active coarse
cells.

The remaining coarse covariance is still ill-conditioned, because cells
near the plume front are reached by gas in only a few pilot samples and have
tiny but nonzero variances. The ridge $\lambda = 10^{-8}$ of Experiments~1
and~2 hardly regularizes these directions, whose regression coefficients
then amplify sampling noise. For the block class, the full class, and the
full-state control variate of Case~2, we therefore use a truncated
pseudo-inverse of the pilot covariance of $s$, which discards
eigendirections with eigenvalues below $10^{-2}$ times the largest. Results
with the ridge are reported for comparison. Table~\ref{tab:exp3params}
collects the parameters.

\begin{table}[htbp]
\caption{Parameters of the two-phase gas-injection experiment.}
\label{tab:exp3params}
\centering
\small
\begin{tabular}{@{}llc@{}}
\toprule
Parameter & Meaning & Value \\
\midrule
$n_x \times n_z$ & HF / LF grids & $30 \times 30$ / $15 \times 15$ \\
$d$ & HF output dimension (final $S_g$ field) & $900$ \\
$T$ & simulated time & $900$\,days \\
$w_1,\, w_2$ & nominal model costs (measured $5.9$\,s, $1.09$\,s) & $1,\ 0.185$ \\
$p$ & total budget (HF-solve equivalents) & $300$ \\
$n_{\mathrm{pilot}}$ & common pilot samples (nested) & $25$ / $50$ \\
$p_{\mathrm{prod}}$ & production budget per estimator & $270.4$ / $240.8$ \\
$m$ & HF-only MC samples (full budget) & $300$ \\
block partition & $6 \times 6$ HF tiles ($9$ LF coordinates each) & $25$ blocks \\
truncation & relative eigenvalue threshold for $\widehat{\Cov}(s)^{+}$ & $10^{-2}$ \\
$N_{\mathrm{rep}}$ & independent repetitions & $50$ \\
$N_{\mathrm{ref}}$ & reference HF samples (mean) / paired samples ($\gamma$) & $32{,}000$ / $10^4$ \\
\bottomrule
\end{tabular}
\end{table}

\subsubsection{Case 1: full saturation field}

The QoI is the mean of the whole final saturation field,
$\mu \in \mathbb R^{900}$, with $M = I$. The reference effective
correlations are
\begin{equation*}
  \gscalar{2} = 0.759, \qquad
  \gdiag{2} = 0.826, \qquad
  \gblock{2} = 0.907, \qquad
  \gfull{2} = 0.942,
\end{equation*}
ordered as stated in
Corollary~\ref{corollary:Hierarchy of effective correlations}. They are well
below the values $0.992$--$0.999$ of Experiment~2, because the coarse grid
resolves the sample-dependent plume front only approximately.
Table~\ref{tab:exp3case1} and Figure~\ref{fig:exp3}(a) report the results.

\begin{table}[htbp]
\caption{Experiment~3,
Case~1 ($d = 900$, field QoI, $M = I$): empirical MSE $\pm$ one standard
error over $N_{\mathrm{rep}} = 50$ repetitions, ratio to HF-only MC, and the
population-predicted ratio~\eqref{eq:optimal-mse}. The block and full classes use the truncated
pseudo-inverse; the rows marked ``ridge'' use the ridge $\lambda = 10^{-8}$
of Experiments~1 and~2 instead.}
\label{tab:exp3case1}
\centering
\small
\input{tables/exp3_case1.tex}
\end{table}

\begin{figure}[htbp]
\centering
\includegraphics[width=\textwidth]{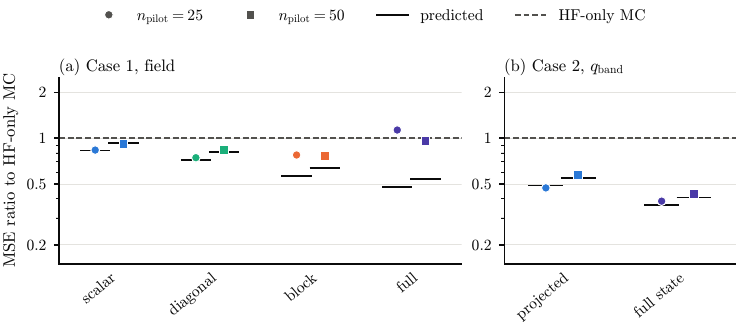}
\caption{Experiment~3: ratio of the empirical MSE of each MFMC estimator to
that of HF-only MC (log scale), with population-predicted ratios (black
bars), for
$n_{\mathrm{pilot}} = 25$ and $50$. (a)~Case~1, full saturation field.
(b)~Case~2, band-mean saturation $q_{\mathrm{band}}$ (projected scalar
MFMC and full-state control variate). The block, full, and full-state estimators use the
truncated pseudo-inverse.}
\label{fig:exp3}
\end{figure}

At $n_{\mathrm{pilot}} = 25$ the scalar and diagonal classes reduce the MSE
by $16\%$ and $26\%$. Their empirical ratios agree with the predictions, so
formula~\eqref{eq:optimal-mse} remains accurate for this nonlinear model
pair when only a few covariance quantities are estimated.

The block and full classes show why the truncated pseudo-inverse is needed.
With truncation, the block class has the lowest MSE at
$n_{\mathrm{pilot}} = 50$, since each block regresses on only $9$ coarse
cells. The full class remains the weakest MFMC estimator and is even worse
than HF-only MC at $n_{\mathrm{pilot}} = 25$. This is because its regression on about $156$
active coarse cells is estimated from only $25$--$50$ pilot samples, so even
after truncation to $13$--$15$ directions its coefficients carry substantial
sampling noise. With the ridge instead, both classes fail. Within blocks at
the plume edge, rarely invaded cells make the $9\times 9$ block covariances
nearly singular, and the block class is $18$ times worse than HF-only MC at
$n_{\mathrm{pilot}} = 25$. The full class fails as in Experiment~1, Case~2:
its pilot covariance has rank at most $n_{\mathrm{pilot}}-1$, below the
number of active cells, so $\hat\gamma \approx 1$ and the allocation assigns
$m_1 = 1$ in every repetition.

\subsubsection{Case 2: mean gas saturation in a band across the plume}

As in Experiment~2, Case~2, we compare HF-only MC, scalar MFMC on the
projected outputs, and the full-state control variate, here a regression of
$g^{(1)}$ on the LF coarse coordinates of the region. The QoI is the mean
final gas saturation over the horizontal band
$R_{\mathrm{band}} = [140, 840] \times [6700, 6840]$\,ft (dashed in
Figure~\ref{fig:exp3setup}(d)),
\begin{equation*}
  q_{\mathrm{band}} = \frac{1}{|\mathcal{I}_R|}
    \sum_{j \in \mathcal{I}_R} \mu_j ,
\end{equation*}
where $\mathcal{I}_R$ indexes the $80$ HF cells in $R_{\mathrm{band}}$,
which cover $20$ coarse cells.

The projection loses part of the correlation:
$\rho^2(g^{(1)}, g^{(2)}) = 0.938$, whereas the full-state control variate
attains $\gfull{2}\big|_{M = cc^\top} = 0.978$. Table~\ref{tab:exp3case2} and Figure~\ref{fig:exp3}(b) 
report the detailed results.

\begin{table}[htbp]
\caption{Experiment~3,
Case~2 (band-mean gas saturation $q_{\mathrm{band}}$, $M = cc^\top$):
empirical MSE $\pm$ one standard error over $N_{\mathrm{rep}} = 50$
repetitions, ratio to HF-only MC, and the population-predicted ratio.
The full-state control variate uses the truncated pseudo-inverse; the rows
marked ``ridge'' use the ridge $\lambda = 10^{-8}$.}
\label{tab:exp3case2}
\centering
\small
\input{tables/exp3_case2.tex}
\end{table}

Both MFMC estimators improve on HF-only MC. The full-state control variate has the
lower MSE at both pilot sizes, by factors of $1.22$ and $1.34$ against a
predicted $1.35$. Together with
Experiment~2, this case shows that correcting first helps when the
projection discards correlation that the LF state still carries, also for
an LF model that differs from the HF model in grid resolution rather than
by a reduced basis.

\subsection{Pilot reuse}
\label{sec:exp-reuse}

We recomputed the estimators of Sections~\ref{sec:experiment-gaussian},
\ref{sec:exp2}, and~\ref{sec:exp3} with the pilot samples
reused in the final estimator as described in Section~\ref{sec:pilot-reuse},
keeping all designs unchanged. Both estimators then share the
pilot, the design, and the production samples in each repetition, and
Figure~\ref{fig:reuse} reports the ratio of their empirical MSEs.

\begin{figure}[htbp]
\centering
\includegraphics[width=\textwidth]{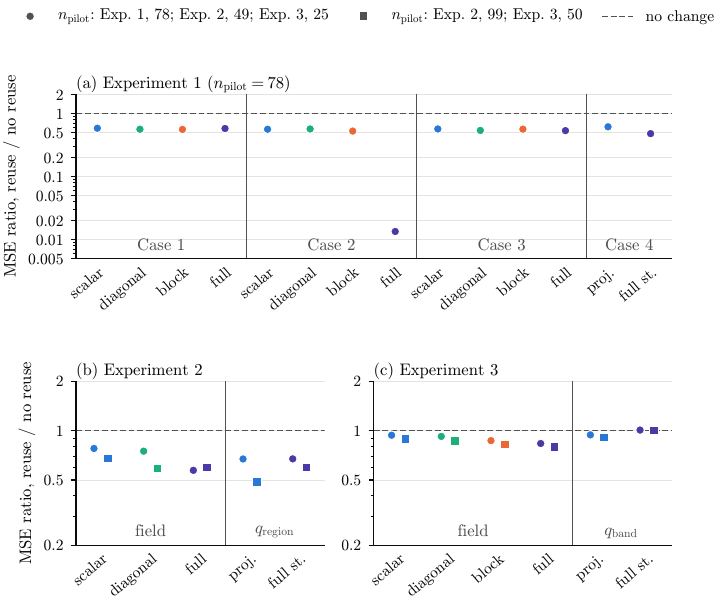}
\caption{Pilot reuse: ratio of the empirical MSE with the pilot samples
reused in the final estimator (Section~\ref{sec:pilot-reuse}) to that of the
estimator of Algorithm~\ref{alg:pilot_reduced_mfmc} with the same design (log scale).
(a)~Experiment~1.
(b)~Experiment~2. (c)~Experiment~3, Cases~1 and~2. Values below one mean
that reuse lowers the MSE.}
\label{fig:reuse}
\end{figure}

Reuse lowers the empirical MSE in $34$ of the $36$ comparisons. The two
exceptions, the full-state control variate of the Experiment~3 band mean,
rise by less than $1\%$, which is within $0.05$ standard errors. The
reduction grows with
the share of the budget spent on the pilot: $38$--$52\%$ in Experiment~1
(excluding the failed full class of Case~2), where the pilot costs half of
the budget, $22$--$51\%$ in Experiment~2, where it costs a quarter to a
half, and $6$--$21\%$ for the Experiment~3 field, where it costs
$10$--$20\%$. Reuse also removes the failure of the full class in
Experiment~1, Case~2, whose HF sample mean now uses $79$ samples instead of
one. In these experiments, the variance reduction from the additional
samples therefore outweighed the bias introduced by reusing the pilot
samples.

\section{Conclusion}
\label{sec:conclusion}

We have extended multifidelity Monte Carlo (MFMC) to vector-valued
quantities of interest by replacing the scalar control-variate coefficients
with matrices and minimizing a weighted mean-squared error. For the full
matrix class, the optimal coefficient is
$A_i^\star = \Sigma_{1,i}\Sigma_i^{-1}$, and its variance reduction is
summarized by an effective correlation $\gamma_{1,i}\in[0,1]$. With
$\rho_{1,i}^2$ replaced by $\gamma_{1,i}$, the closed-form sample
allocation, the cost--correlation ordering condition, and the model
selection of scalar MFMC carry over unchanged
(Theorem~\ref{thm:main-vvmfmc}), and the scalar theory is recovered for
$d = 1$. The block-diagonal, diagonal, and scalar classes also have
closed-form coefficients and effective correlations for compatible weighting
matrices, and they form a nested hierarchy along which the population
variance reduction does not decrease
(Corollary~\ref{corollary:Hierarchy of effective correlations}).

In practice, the choice of class is a tradeoff between population
optimality and the stability of the pilot estimates. In the experiments, on
a Gaussian problem, a POD--Galerkin ROM of an advection--diffusion problem,
and a coarse-grid model of two-phase gas injection, the realized ranking of
the classes is governed by the pilot size relative to the number of
covariance quantities a class must estimate. The full class realizes its
advantage only when the pilot is large enough; otherwise it loses its
advantage (Experiment~2 at $n_{\mathrm{pilot}} = 49$, Experiment~3) or
becomes unstable (Experiment~1, Case~2), whereas the reduced classes remain
robust. The class should therefore be matched to the pilot budget (e.g.,
block size below $n_{\mathrm{pilot}}$) and to the correlation structure of
the output, and the pilot fraction worth spending grows with the
expressiveness of the class. When the LF model is a ROM, the full class
reduces exactly to a regression on the $r$ reduced coordinates
(Proposition~\ref{prop:rom-reduction}), so its pilot demand scales with $r$
rather than with $d$. Reusing the pilot samples in the final estimator
introduces a bias, but it lowered the MSE in $34$ of the $36$ comparisons.

Finally, for a scalar quantity of interest that is a linear projection of
the model state, correcting the full state and then projecting is, with
accurate covariance information, at least as accurate as scalar MFMC on the
projected outputs (Corollary~\ref{cor:full-state-benefit}). The gain is small when the
projected outputs are already highly correlated, and it can be large when the
projection discards part of the correlation: for the regional mean
temperature in Experiment~2 ($\rho^2 = 0.93$ against an effective correlation
of $0.997$), the full-state control variate reduced the MSE of scalar MFMC by
a factor of $2.2$--$5.1$, and for the band-mean gas saturation in
Experiment~3 ($0.94$ against $0.98$) by a factor of $1.2$--$1.3$. Its
regression on the LF state is estimated from the pilot and needs the same
regularization of the LF covariance as the full class.

\section*{Acknowledgments}

This research was supported in part by the Air Force Office of Scientific Research under grants FA9550-21-1-0381 and FA9550-24-1-0237, and by the Office of Advanced Scientific Computing Research (ASCR) within the Department of Energy Office of Science under award number DE-SC0023163.

\bibliographystyle{siamplain}
\bibliography{mybib}
\end{document}

%% file: tables/exp1.tex
\begin{tabular}{@{}lrrr@{}}
\toprule
Method & MSE $\pm$ SE & Ratio & Predicted ratio \\
\midrule
\multicolumn{4}{@{}l}{Case 1: $d = 10$, $k = 2$} \\
\addlinespace[2pt]
MC (HF only) & $0.1760 \pm 0.0093$ & 1 & 1 \\
MFMC, scalar & $0.0676 \pm 0.0023$ & 0.38 & 0.38 \\
MFMC, diagonal & $0.0643 \pm 0.0020$ & 0.37 & 0.37 \\
MFMC, block & $0.0536 \pm 0.0020$ & 0.30 & 0.29 \\
MFMC, full & $0.0518 \pm 0.0021$ & 0.29 & 0.28 \\
\midrule
\multicolumn{4}{@{}l}{Case 2: $d = 100$, $k = 2$} \\
\addlinespace[2pt]
MC (HF only) & $1.707 \pm 0.050$ & 1 & 1 \\
MFMC, scalar & $0.670 \pm 0.012$ & 0.39 & 0.38 \\
MFMC, diagonal & $0.641 \pm 0.011$ & 0.38 & 0.37 \\
MFMC, block & $0.611 \pm 0.012$ & 0.36 & 0.29 \\
MFMC, full & $75.1 \pm 1.7$ & 44.0 & 0.28 \\
\midrule
\multicolumn{4}{@{}l}{Case 3: $d = 10$, $k = 3$} \\
\addlinespace[2pt]
MC (HF only) & $0.1797 \pm 0.0096$ & 1 & 1 \\
MFMC, scalar & $0.0666 \pm 0.0022$ & 0.37 & 0.36 \\
MFMC, diagonal & $0.0605 \pm 0.0020$ & 0.34 & 0.35 \\
MFMC, block & $0.0475 \pm 0.0018$ & 0.26 & 0.25 \\
MFMC, full & $0.0502 \pm 0.0019$ & 0.28 & 0.24 \\
\midrule
\multicolumn{4}{@{}l}{Case 4: scalar QoI $q = c^\top\mu$, $M = cc^\top$} \\
\addlinespace[2pt]
MC (HF only) & $(1.284 \pm 0.073)\times 10^{-2}$ & 1 & 1 \\
MFMC, projected & $(1.146 \pm 0.081)\times 10^{-2}$ & 0.89 & 0.85 \\
MFMC, full state & $(3.98 \pm 0.25)\times 10^{-3}$ & 0.31 & 0.22 \\
\bottomrule
\end{tabular}

%% file: tables/exp2_case1.tex
\begin{tabular}{@{}lrrr@{}}
\toprule
Method & MSE $\pm$ SE & Ratio & Predicted ratio \\
\midrule
\multicolumn{4}{@{}l}{$n_{\mathrm{pilot}} = 49$ ($p_{\mathrm{prod}} = 150$)} \\
\addlinespace[2pt]
MC (HF only) & $(6.86 \pm 0.40)\times 10^{4}$ & 1 & 1 \\
MFMC, scalar & $(2.80 \pm 0.12)\times 10^{3}$ & 0.041 & 0.040 \\
MFMC, diagonal & $(3.01 \pm 0.13)\times 10^{3}$ & 0.044 & 0.039 \\
MFMC, full & $(3.69 \pm 0.28)\times 10^{3}$ & 0.054 & 0.018 \\
\midrule
\multicolumn{4}{@{}l}{$n_{\mathrm{pilot}} = 99$ ($p_{\mathrm{prod}} = 100$)} \\
\addlinespace[2pt]
MC (HF only) & $(7.33 \pm 0.44)\times 10^{4}$ & 1 & 1 \\
MFMC, scalar & $(3.89 \pm 0.16)\times 10^{3}$ & 0.053 & 0.061 \\
MFMC, diagonal & $(4.55 \pm 0.22)\times 10^{3}$ & 0.062 & 0.059 \\
MFMC, full & $(2.73 \pm 0.13)\times 10^{3}$ & 0.037 & 0.027 \\
\bottomrule
\end{tabular}

%% file: tables/exp2_case2.tex
\begin{tabular}{@{}lrrr@{}}
\toprule
Method & MSE $\pm$ SE & Ratio & Predicted ratio \\
\midrule
\multicolumn{4}{@{}l}{$q_{\mathrm{region}}$, $n_{\mathrm{pilot}} = 49$} \\
\addlinespace[2pt]
MC (HF only) & $0.878 \pm 0.054$ & 1 & 1 \\
MFMC, projected & $0.157 \pm 0.010$ & 0.18 & 0.16 \\
MFMC, full state & $0.072 \pm 0.013$ & 0.082 & 0.025 \\
\midrule
\multicolumn{4}{@{}l}{$q_{\mathrm{region}}$, $n_{\mathrm{pilot}} = 99$} \\
\addlinespace[2pt]
MC (HF only) & $0.722 \pm 0.042$ & 1 & 1 \\
MFMC, projected & $0.213 \pm 0.016$ & 0.29 & 0.24 \\
MFMC, full state & $0.0422 \pm 0.0028$ & 0.058 & 0.038 \\
\bottomrule
\end{tabular}

%% file: tables/exp3_case1.tex
\begin{tabular}{@{}lrrr@{}}
\toprule
Method & MSE $\pm$ SE & Ratio & Predicted ratio \\
\midrule
\multicolumn{4}{@{}l}{$n_{\mathrm{pilot}} = 25$} \\
\addlinespace[2pt]
MC (HF only) & $(1.52 \pm 0.11)\times 10^{-3}$ & 1 & 1 \\
MFMC, scalar & $(1.273 \pm 0.079)\times 10^{-3}$ & 0.84 & 0.83 \\
MFMC, diagonal & $(1.135 \pm 0.061)\times 10^{-3}$ & 0.74 & 0.72 \\
MFMC, block & $(1.182 \pm 0.065)\times 10^{-3}$ & 0.78 & 0.57 \\
MFMC, full & $(1.722 \pm 0.088)\times 10^{-3}$ & 1.13 & 0.48 \\
MFMC, block (ridge) & $(2.80 \pm 0.61)\times 10^{-2}$ & 18.4 & 0.57 \\
MFMC, full (ridge) & $(1.305 \pm 0.072)\times 10^{-1}$ & 85.6 & 0.48 \\
\midrule
\multicolumn{4}{@{}l}{$n_{\mathrm{pilot}} = 50$} \\
\addlinespace[2pt]
MC (HF only) & $(1.52 \pm 0.11)\times 10^{-3}$ & 1 & 1 \\
MFMC, scalar & $(1.401 \pm 0.092)\times 10^{-3}$ & 0.92 & 0.93 \\
MFMC, diagonal & $(1.276 \pm 0.080)\times 10^{-3}$ & 0.84 & 0.81 \\
MFMC, block & $(1.160 \pm 0.057)\times 10^{-3}$ & 0.76 & 0.64 \\
MFMC, full & $(1.459 \pm 0.066)\times 10^{-3}$ & 0.96 & 0.54 \\
MFMC, block (ridge) & $(4.46 \pm 0.65)\times 10^{-3}$ & 2.93 & 0.64 \\
MFMC, full (ridge) & $(1.271 \pm 0.096)\times 10^{-1}$ & 83.4 & 0.54 \\
\bottomrule
\end{tabular}

%% file: tables/exp3_case2.tex
\begin{tabular}{@{}lrrr@{}}
\toprule
Method & MSE $\pm$ SE & Ratio & Predicted ratio \\
\midrule
\multicolumn{4}{@{}l}{$n_{\mathrm{pilot}} = 25$} \\
\addlinespace[2pt]
MC (HF only) & $(6.79 \pm 0.94)\times 10^{-7}$ & 1 & 1 \\
MFMC, projected & $(3.20 \pm 0.63)\times 10^{-7}$ & 0.47 & 0.49 \\
MFMC, full state & $(2.62 \pm 0.46)\times 10^{-7}$ & 0.39 & 0.36 \\
MFMC, full state (ridge) & $(9.6 \pm 2.6)\times 10^{-7}$ & 1.41 & 0.36 \\
\midrule
\multicolumn{4}{@{}l}{$n_{\mathrm{pilot}} = 50$} \\
\addlinespace[2pt]
MC (HF only) & $(6.79 \pm 0.94)\times 10^{-7}$ & 1 & 1 \\
MFMC, projected & $(3.89 \pm 0.70)\times 10^{-7}$ & 0.57 & 0.55 \\
MFMC, full state & $(2.91 \pm 0.55)\times 10^{-7}$ & 0.43 & 0.41 \\
MFMC, full state (ridge) & $(4.19 \pm 0.97)\times 10^{-7}$ & 0.62 & 0.41 \\
\bottomrule
\end{tabular}

%% file: main.bbl
\begin{thebibliography}{10}

\bibitem{croci2023multi}
{\sc M.~Croci, K.~E. Willcox, and S.~J. Wright}, {\em Multi-output multilevel
  best linear unbiased estimators via semidefinite programming}, Computer
  Methods in Applied Mechanics and Engineering, 413 (2023), p.~116130.

\bibitem{destouches2023multivariate}
{\sc M.~Destouches, P.~Mycek, and S.~G{\"u}rol}, {\em Multivariate extensions
  of the multilevel best linear unbiased estimator for ensemble-variational
  data assimilation}, arXiv preprint arXiv:2306.07017,  (2023).

\bibitem{dixon2024covariance}
{\sc T.~O. Dixon, J.~E. Warner, G.~F. Bomarito, and A.~A. Gorodetsky}, {\em
  Covariance expressions for multifidelity sampling with multioutput,
  multistatistic estimators: Application to approximate control variates},
  SIAM/ASA Journal on Uncertainty Quantification, 12 (2024), pp.~1005--1049.

\bibitem{farcas2023context}
{\sc I.-G. Farca{\c{s}}, B.~Peherstorfer, T.~Neckel, F.~Jenko, and H.-J.
  Bungartz}, {\em Context-aware learning of hierarchies of low-fidelity models
  for multi-fidelity uncertainty quantification}, Computer Methods in Applied
  Mechanics and Engineering, 406 (2023), p.~115908.

\bibitem{fleeter2020multilevel}
{\sc C.~M. Fleeter, G.~Geraci, D.~E. Schiavazzi, A.~M. Kahn, and A.~L.
  Marsden}, {\em Multilevel and multifidelity uncertainty quantification for
  cardiovascular hemodynamics}, Computer methods in applied mechanics and
  engineering, 365 (2020), p.~113030.

\bibitem{geraci2015multifidelity}
{\sc G.~Geraci, M.~Eldred, and G.~Iaccarino}, {\em A multifidelity control
  variate approach for the multilevel {M}onte {C}arlo technique}, Center for
  Turbulence Research Annual Research Briefs,  (2015), pp.~169--181.

\bibitem{geraci2017multifidelity}
{\sc G.~Geraci, M.~S. Eldred, and G.~Iaccarino}, {\em A multifidelity
  multilevel {M}onte {C}arlo method for uncertainty propagation in aerospace
  applications}, in 19th AIAA non-deterministic approaches conference, 2017,
  p.~1951.

\bibitem{giles2008multilevel}
{\sc M.~B. Giles}, {\em Multilevel {M}onte {C}arlo path simulation}, Operations
  research, 56 (2008), pp.~607--617.

\bibitem{giles2015multilevel}
{\sc M.~B. Giles}, {\em Multilevel {M}onte {C}arlo methods}, Acta Numerica, 24
  (2015), pp.~259--328, \url{https://doi.org/10.1017/S096249291500001X}.

\bibitem{gorodetsky2020generalized}
{\sc A.~A. Gorodetsky, G.~Geraci, M.~S. Eldred, and J.~D. Jakeman}, {\em A
  generalized approximate control variate framework for multifidelity
  uncertainty quantification}, Journal of Computational Physics, 408 (2020),
  p.~109257.

\bibitem{gruber2026advancements}
{\sc A.~Gruber, G.~Geraci, P.~J. Blonigan, E.~Parish, and M.~Eldred}, {\em
  Advancements in reduced order model-based multifidelity uncertainty
  quantification for high-speed flows}, in AIAA SCITECH 2026 Forum, 2026,
  p.~1614.

\bibitem{jabarullah2019machine}
{\sc N.~K. Jabarullah~Khan and A.~H. Elsheikh}, {\em A machine learning based
  hybrid multi-fidelity multi-level {M}onte {C}arlo method for uncertainty
  quantification}, Frontiers in Environmental Science, 7 (2019), p.~105.

\bibitem{konrad2022data}
{\sc J.~Konrad, I.-G. Farca{\c{s}}, B.~Peherstorfer, A.~Di~Siena, F.~Jenko,
  T.~Neckel, and H.-J. Bungartz}, {\em Data-driven low-fidelity models for
  multi-fidelity {M}onte {C}arlo sampling in plasma micro-turbulence analysis},
  Journal of Computational Physics, 451 (2022), p.~110898.

\bibitem{lavenberg1982statistical}
{\sc S.~S. Lavenberg, T.~L. Moeller, and P.~D. Welch}, {\em Statistical results
  on control variables with application to queueing network simulation},
  Operations Research, 30 (1982), pp.~182--202.

\bibitem{ng2014multifidelity}
{\sc L.~W. Ng and K.~E. Willcox}, {\em Multifidelity approaches for
  optimization under uncertainty}, International Journal for numerical methods
  in Engineering, 100 (2014), pp.~746--772.

\bibitem{peherstorfer2019multifidelity}
{\sc B.~Peherstorfer}, {\em Multifidelity {M}onte {C}arlo estimation with
  adaptive low-fidelity models}, SIAM/ASA Journal on Uncertainty
  Quantification, 7 (2019), pp.~579--603.

\bibitem{peherstorfer2018convergence}
{\sc B.~Peherstorfer, M.~Gunzburger, and K.~Willcox}, {\em Convergence analysis
  of multifidelity {M}onte {C}arlo estimation}, Numerische Mathematik, 139
  (2018), pp.~683--707.

\bibitem{peherstorfer2016optimal}
{\sc B.~Peherstorfer, K.~Willcox, and M.~Gunzburger}, {\em Optimal model
  management for multifidelity {M}onte {C}arlo estimation}, SIAM Journal on
  Scientific Computing, 38 (2016), pp.~A3163--A3194.

\bibitem{peherstorfer2018survey}
{\sc B.~Peherstorfer, K.~Willcox, and M.~Gunzburger}, {\em Survey of
  multifidelity methods in uncertainty propagation, inference, and
  optimization}, Siam Review, 60 (2018), pp.~550--591.

\bibitem{pham2022ensemble}
{\sc T.~Pham and A.~A. Gorodetsky}, {\em Ensemble approximate control variate
  estimators: Applications to multifidelity importance sampling}, SIAM/ASA
  Journal on Uncertainty Quantification, 10 (2022), pp.~1250--1292.

\bibitem{qian2024multifidelity}
{\sc E.~Qian, D.~Kang, V.~Sella, and A.~Chaudhuri}, {\em Multifidelity linear
  regression for scientific machine learning from scarce data}, arXiv preprint
  arXiv:2403.08627,  (2024).

\bibitem{qian2018multifidelity}
{\sc E.~Qian, B.~Peherstorfer, D.~O'Malley, V.~V. Vesselinov, and K.~Willcox},
  {\em Multifidelity {M}onte {C}arlo estimation of variance and sensitivity
  indices}, SIAM/ASA Journal on Uncertainty Quantification, 6 (2018),
  pp.~683--706.

\bibitem{quaglino2019high}
{\sc A.~Quaglino, S.~Pezzuto, and R.~Krause}, {\em High-dimensional and
  higher-order multifidelity {M}onte {C}arlo estimators}, Journal of
  Computational Physics, 388 (2019), pp.~300--315.

\bibitem{rubinstein1985efficiency}
{\sc R.~Y. Rubinstein and R.~Marcus}, {\em Efficiency of multivariate control
  variates in {M}onte {C}arlo simulation}, Operations Research, 33 (1985),
  pp.~661--677.

\bibitem{schaden2020multilevel}
{\sc D.~Schaden and E.~Ullmann}, {\em On multilevel best linear unbiased
  estimators}, SIAM/ASA Journal on Uncertainty Quantification, 8 (2020),
  pp.~601--635.

\bibitem{stanley2026sample}
{\sc M.~Stanley, T.~Coons, G.~Bomarito, P.~Leser, J.~Pribe, and J.~Warner},
  {\em Sample correlation adjustments for robust multi-fidelity {M}onte {C}arlo
  under limited pilot sampling}, arXiv preprint arXiv:2605.23016,  (2026).

\bibitem{venkatraman1986efficiency}
{\sc S.~Venkatraman and J.~R. Wilson}, {\em The efficiency of control variates
  in multiresponse simulation}, Operations Research Letters, 5 (1986),
  pp.~37--42.

\bibitem{zanoni2024improved}
{\sc A.~Zanoni, G.~Geraci, M.~Salvador, K.~Menon, A.~L. Marsden, and D.~E.
  Schiavazzi}, {\em Improved multifidelity {M}onte {C}arlo estimators based on
  normalizing flows and dimensionality reduction techniques}, Computer methods
  in applied mechanics and engineering, 429 (2024), p.~117119.

\end{thebibliography}
